\newtheorem{theorem}{Theorem}
\newtheorem{definition}[theorem]{Definition}
\newtheorem{proposition}[theorem]{Proposition}
\newtheorem{lemma}[theorem]{Lemma}
\newtheorem{corollary}[theorem]{Corollary}
\newtheorem{claim}[theorem]{Claim}
\newtheoremstyle{named}{}{}{\itshape}{}{\bfseries}{.}{.5em}{\thmnote{#3}}
\theoremstyle{named}
\newtheorem*{namedtheorem}{Theorem}
\DeclareSymbolFont{extraup}{U}{zavm}{m}{n}
\DeclareMathSymbol{\varheart}{\mathalpha}{extraup}{86}
\DeclareMathSymbol{\vardiamond}{\mathalpha}{extraup}{87}
\DeclareMathOperator*{\argmax}{arg\,max}
\newcommand{\RR}{\mathbb{R}}
\newcommand{\PP}{\mathbb{P}}
\newcommand{\EE}{\mathbb{E}}
\newcommand{\ee}{\mathbf{e}}
\newcommand{\II}{\mathbb{I}}
\newcommand{\NN}{\mathcal{N}}
\newcommand{\HH}{\mathcal{H}}
\newcommand{\GG}{\mathcal{G}}
\newcommand{\OO}{\mathcal{O}}
\newcommand{\cS}{\mathcal{S}}
\newcommand{\cA}{\mathcal{A}}
\newcommand{\cT}{\mathcal{T}}
\newcommand{\cD}{\mathcal{D}}
\newcommand{\cM}{\mathcal{M}}
\newcommand{\cU}{\mathcal{U}}
\newcommand{\cZ}{\mathcal{Z}}
\DeclareMathOperator\Var{\mathrm{Var}}
\newcommand*{\inlineequation}[2][]{%
  \begingroup
    \refstepcounter{equation}%
    \ifx\\#1\\%
    \else
      \label{#1}%
    \fi
    \relpenalty=10000 %
    \binoppenalty=10000 %
    \ensuremath{%
      #2%
    }%
    ~\@eqnnum
  \endgroup
}
\title{Privacy-preserving Q-Learning with Functional Noise in Continuous Spaces}
\author{%
  Baoxiang Wang \\
  Borealis AI, Edmonton, Canada\\
  \texttt{bxiangwang@gmail.com} \\
  \And
  Nidhi Hegde \\
  Borealis AI, Edmonton, Canada \\
  \texttt{nidhi.hegde@borealisai.com}
}
\begin{document}

\maketitle

\begin{abstract}
We consider differentially private algorithms for reinforcement learning in continuous spaces, such that neighboring reward functions are indistinguishable. This protects the reward information from being exploited by methods such as inverse reinforcement learning. Existing studies that guarantee differential privacy are not extendable to infinite state spaces, as the noise level to ensure privacy will scale accordingly to infinity. Our aim is to protect the value function approximator, without regard to the number of states queried to the function. It is achieved by adding functional noise to the value function iteratively in the training. We show rigorous privacy guarantees by a series of analyses on the kernel of the noise space, the probabilistic bound of such noise samples, and the composition over the iterations. We gain insight into the utility analysis by proving the algorithm's approximate optimality when the state space is discrete. Experiments corroborate our theoretical findings and show improvement over existing approaches.
\end{abstract}

\section{Introduction}
\label{sec:intro}

Increasing interest in reinforcement learning (RL) and deep reinforcement learning has led to recent advances in a wide range of algorithms \cite{sutton2018reinforcement}.
While a large part of the advancement has been in the space of games, the applicability of RL extends to other practical cases such as recommendation systems \cite{zheng2018drn,liebman2015dj} and search engines \cite{rosset2018optimizing,hu2018reinforcement}.
With the popularity of the RL algorithms increasing, so have concerns about their privacy.
Namely, the released value (or policy) function are trained based on the reward signal and other inputs, which commonly rely on sensitive data.
For example, an RL recommendation system may use the reward signals simulated by users' historical records.
This historical information can thus be inferred by recursively querying the released functions.
We consider differentially privacy \cite{dwork2006calibrating,dwork2014algorithmic}, a natural and standard privacy notion, to protect such information in the RL methods.

RL methods learn by carrying out actions, receiving rewards observed for that action in a given state, and transitioning to the next states.
Observation of the learned value function can reveal sensitive information: \textit{the reward function} is a succinct description of the task. It is also connected to the users' preferences and the criteria of their decision-making; \textit{the visited states} carry important contextual information on the users, such as age, gender, occupation, and etc.; \textit{the transition function} includes the dynamics of the system and the impact of the actions on the environment.
Among those, the reward function is the most vulnerable and valuable component, and studies have been conducted to infer this information \cite{abbeel2004apprenticeship,ng2000algorithms}.
In this paper, our aim is to design differentially private algorithms for RL, such that neighboring reward functions are indistinguishable.

There is a recent line of research on privacy-preserving algorithms by protecting the reward function. Balle et al.~\cite{balle2016differentially} train the private value function using a fixed set of trajectories. However when a new state is queried this privacy guarantee will not hold. Similar results are also considered in contextual multi-arm bandits \cite{shariff2018differentially,sajed2019optimal,pan2018reinforcement}, where the context vector is analogous to the state. The gap that these works leave lead us to design a private algorithm that is not dependent on the number of states queried to the value function.

In order to achieve this under continuous space settings, we investigate the Gaussian process mechanism proposed by Hall et al.~\cite{hall2013differential}. The mechanism adds functional noise to the value function approximation hence the function can be evaluated at arbitrarily many states while preserving privacy. We show that our choice of the reproducing kernel Hilbert space (RKHS) embeds common neural networks, hence a nonlinear value function can also be used.
We therefore adapt Q-learning  \cite{mnih2015human,watkins1992q,baird1995residual} so that the value function is protected after each update, even when new states are visited. 

We rigorously show differential privacy guarantees of our algorithm with a series of techniques.
Notably, we derive a probabilistic bound of the sample paths thus ensuring that the RKHS norm of the noised function can be bounded. 
This bound is significantly better than a union bound of all noise samples. 
Further, we analyze the composition of the privacy costs of the mechanism. 
There is no known composition result of the functional mechanism, other than the general theorems that apply to any mechanism \cite{kairouz2013composition,dwork2010boosting,beimel2010bounds}.
Inspired by these theorems, we derive a privacy guarantee which is better than existing results. 
On the utility analysis, though there is no known performance analysis on deep reinforcement learning, we gain insights by proving the utility guarantee under the tractable discrete state space settings. Empirically, experiments corroborate our theoretical findings and show improvement over existing methods.

\textbf{Related Works.} There is a recent line of research that discusses privacy-preserving approaches on online learning and stochastic multi-armed bandit problems \cite{sutton2018reinforcement,szepesvari2010algorithms}. 
The algorithms protect neighboring reward sequences from being distinguished, which is related to our definition of neighboring reward functions.
In bandit problems, the algorithms preserve the privacy via mechanisms that add noise to the estimates of the reward distribution \cite{tossou2017achieving,tossou2016algorithms,mishra2015nearly,thakurta2013nearly,kusner2015differentially}.  This line of work shares similar motivations as our work, but they do not scale to the continuous space because of the $\sqrt{N}$ or $\sqrt{N\ln N}$ factor involved where $N$ is the number of arms.
Similarly, in the online learning settings, the algorithms preserve the privacy evaluated sequence of the oracle \cite{gajane2017corrupt,abernethy2017online,agarwal2017price,jain2012differentially}. Their analyses are based on optimizing a fixed objective thus do not apply to our setting.

More closely related are privacy studies on contextual bandits \cite{sajed2019optimal,shariff2018differentially}, where there is a contextual vector that is analogous to the states in reinforcement learning. Equivalently, differentially private policy evaluation \cite{balle2016differentially} considers a similar setting where the value function is learned on a one-step MDP. It worth note that they also consider the privacy with respect to the state and the actions, though in this paper we will focus only on the rewards.
The major challenge to extend these works is that reinforcement learning requires an iterative process of policy evaluation and policy improvement. 
The additional states that are queried to the value function are not guaranteed to be visited and protected by previous iterations. 
We propose an approach for both the evaluation and the improvement, while also extending the algorithm to nonlinear approximations like neural networks. 

Differential privacy in a Markov decision process (MDP) has been discussed \cite{venkitasubramaniam2013privacy} via the input perturbation technique. In the work, 
the reward is reformulated as a weighted sum of the utility and the privacy measure. With this formulation, it amounts to learn the MDP under this weighted reward. Essentially, input perturbation will cause relatively large utility loss and is therefore less preferred. Similarly, output perturbation can be used to preserve privacy, as shown in our analysis. It is though obvious that the necessary noise level is relatively larger and also depends on more factors than our algorithm does. Therefore, more subtle techniques will be required to improve the methods by input and output perturbation.

A general approach that can be applied to continuous spaces is the differentially private deep learning framework \cite{abadi2016deep,chamikara2019local}. The method perturbs the gradient estimator in the updates of the neural network parameters to preserve privacy.
In our problem, applying the method will require large noise levels.  
In fact, the algorithm considers neighboring inputs that at most one data point can be different, therefore benefits from a $1/B$ factor via privacy amplification \cite{kasiviswanathan2011can,beimel2010bounds} where $B$ is the batch size.
This no longer holds in reinforcement learning, as all reward signals can be different for neighboring reward functions, causing the noise level to scale $B$ times back.

\section{Preliminaries}

\subsection{Markov Decision Process and Reinforcement Learning}
\label{sec:pre-rl}

Markov decision process (MDP) is a framework to model decisions in an environment.
We use canonical settings of the discrete-time Markov decision process. An MDP is 
denoted by the tuple $(\cS,\cA, \cT, r,\rho_0,\gamma)$ which includes the state space $\cS$, the action space $\cA=\{1,\dots,m\}$, the stochastic transition kernel $\cT:\cS\times\cA\times\cS \to \RR^+$, the reward function $r:\cS\times\cA\to\RR$, the initial state distribution $\rho_0:\cS\to \RR^+$ and the discount factor $\gamma \in [0,1)$. Denote $m$ in the above as the number of actions in the action space. The objective is to maximize the expected discounted  cumulative reward. Further define the policy function $\pi:\cS,\cA\to \RR^+$ and the corresponding action-state value function as
\[
Q^\pi(s,a) = \EE_{\pi}[\sum_{t\geq 0}^\infty \gamma^{t}r(s_{t},a_{t})|s_0=s,a_0=a,\pi].
\]
When the context is clear, we omit $\pi$ and write $Q(s,a)$ instead. 

We use the continuous state space setting for this paper, except in Appendix \ref{sec:utility}. 
We investigate bounded and continuous state space $\cS\subseteq \RR$ and without loss of generality assume that $\cS= [0,1]$. 
The value function $Q(s,a)$ is treated as a set of $m$ functions $Q_a(s)$, where each function is defined on $[0,1]$. 
The reward function is similarly written as a set of $m$ functions, each defined on $[0,1]$.  
We do not impose any particular assumptions on the reward function. 

Our algorithm is based on deep Q-learning \cite{mnih2015human,baird1995residual,watkins1992q}, which solves the Bellman equation. Our differential privacy guarantee can also be generalized to other Q-learning algorithms. The objective of deep Q-learning is to minimize the Bellman error
\[
\frac{1}{2}(Q(s,a) - \EE[r + \gamma\max_{a^\prime} Q(s^\prime,a^\prime)])^2,
\]
where $s^\prime\sim \cT(s, a, s^\prime)$ denotes the consecutive state after executing action $a$ at state $s$. Similar to \cite{mnih2015human}, we use a neural network to parametrize $Q(s,a)$. We will focus on output a learned value function where the reward function $r(\cdot)$ and $r^\prime(\cdot)$ cannot be distinguished by observing $Q(s,a)$, as long as $\|r-r^\prime\|_\infty\leq 1$. Here without ambiguity we write $r(\cdot), r^\prime(\cdot)$ as $r,r^\prime$, and the infinity norm $\|f(s)\|_\infty$ is defined as $\sup_s|f(s)|$.

\subsection{Differential Privacy}
\label{sec:pre-dp}

Differential privacy~\cite{dwork2006our,dwork2006calibrating} has developed into a strong standard for privacy guarantees in data analysis. It provides a rigorous framework for privacy guarantees under various adversarial attacks.  

The definition of differential privacy is based on the notion that in order to preserve privacy, data analysis should not differ at the aggregate level whether any given user is present in the input or not. This latter condition on the presence of any user is formalized through the notion of neighboring inputs. The definition of neighboring inputs will vary according to the problem settings.

Let $d,d^\prime \in \cD$ be neighboring inputs.
\begin{definition}
A randomized mechanism $\cM : \cD \rightarrow \cU$ satisfies $(\epsilon,\delta)$-differential privacy if for any two neighboring inputs $d$ and $d^\prime$ and for any subset of outputs $\cZ \subseteq \cU$ it holds that
\[
\PP(\cM(d) \in \cZ) \le \exp(\epsilon)\PP(\cM(d^\prime) \in \cZ) + \delta.
\]
\end{definition}

An important parameter of a mechanism is the (global) sensitivity of the output.

\begin{definition}
For all pairs $d,d^\prime \in \cD$ of neighboring inputs, the sensitivity of a mechanism $\cM$ is defined as
\begin{equation}
\label{eq:l2sens}
\Delta_{\cM} = \sup_{d,d^\prime \in \cD}\|\cM(d)-\cM(d^\prime)\|,
\end{equation}
\end{definition}
where $\|\cdot\|$ is a norm function defined on $\cU$.

\textbf{Vector-output mechanisms.}
For converting vector-valued functions into a $(\epsilon,\delta)$-DP mechanism, one of the standard approaches is the Gaussian mechanism. This mechanism adds $\NN(0,\sigma^2\II)$ to the output $\cM(d)$. In this case $\cU=\RR^n$ and $\|\cdot\|$ in \eqref{eq:l2sens} is the $\ell^2$-norm $\|\cdot\|_2$.
\begin{proposition}[Vector-output Gaussian mechanism; Theorem A.1 of \cite{dwork2014algorithmic}]
\label{gaussianmechanism}
If $0<\epsilon<1$ and $\sigma\geq \sqrt{2\ln(1.25/\delta)}\Delta_{\cM}/\epsilon$, then $\cM(d)+y$ is ($\epsilon$,$\delta$)-differentially private, where $y$ is drawn from $\NN(0,\sigma^2 I)$.
\end{proposition}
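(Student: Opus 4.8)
The plan is to follow the standard privacy-loss argument. Write $p(z)$ and $p'(z)$ for the densities at a point $z\in\RR^n$ of the outputs $\cM(d)+y$ and $\cM(d')+y$; since $y\sim\NN(0,\sigma^2\II)$ these are isotropic Gaussians centered at $\cM(d)$ and $\cM(d')$ respectively. The goal is to control the privacy-loss random variable $L(z)=\ln\big(p(z)/p'(z)\big)$ and deduce $\PP(\cM(d)+y\in\cZ)\le e^{\epsilon}\,\PP(\cM(d')+y\in\cZ)+\delta$ for every measurable $\cZ\subseteq\cU$. The cleanest route is to show that $L$ exceeds $\epsilon$ only on an event of probability at most $\delta$; the $(\epsilon,\delta)$ guarantee then follows by splitting $\cZ$ into the part on which the density ratio is bounded by $e^{\epsilon}$ and the part on which it is not.

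First I would reduce to one dimension. Because the noise is isotropic, only the component of $z-\cM(d)$ along the direction $v=\cM(d)-\cM(d')$ affects the density ratio; orthogonal components cancel. Expanding the two Gaussian densities gives
\[
L(z)=\frac{1}{2\sigma^2}\Big(\|z-\cM(d')\|^2-\|z-\cM(d)\|^2\Big)=\frac{1}{2\sigma^2}\Big(2\langle z-\cM(d),v\rangle+\|v\|^2\Big).
\]
Substituting $z=\cM(d)+y$ turns $\langle z-\cM(d),v\rangle$ into $\|v\|\,y_1$, where $y_1\sim\NN(0,\sigma^2)$ is the coordinate of $y$ along $v/\|v\|$, so $L$ is an affine function of the single Gaussian $y_1$. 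Since $\|v\|\le\Delta_{\cM}$ by the definition of sensitivity in \eqref{eq:l2sens}, it suffices to treat the worst case $\|v\|=\Delta_{\cM}$.

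Next I would identify the good event. From the affine form $L=\frac{\Delta_{\cM}}{\sigma^2}\,y_1+\frac{\Delta_{\cM}^2}{2\sigma^2}$, the inequality $L\le\epsilon$ holds exactly when $y_1\le t$ with the threshold $t=\sigma^2\epsilon/\Delta_{\cM}-\Delta_{\cM}/2$. The remaining and most delicate step is to bound the upper tail $\PP(y_1>t)$ by $\delta$. Applying the standard Gaussian tail estimate $\PP(y_1>t)\le \tfrac{\sigma}{\sqrt{2\pi}\,t}\exp\!\big(-t^2/2\sigma^2\big)$ and substituting the hypothesis $\sigma\ge \sqrt{2\ln(1.25/\delta)}\,\Delta_{\cM}/\epsilon$, I would verify that the exponential factor forces the whole bound below $\delta$; this is precisely where the constant $1.25$ in the threshold on $\sigma$ is calibrated, using $0<\epsilon<1$ to absorb the lower-order cross terms that arise when $t^2$ is expanded.

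I expect this tail calculation to be the main obstacle: it requires tracking constants carefully so that the Gaussian tail, evaluated at $t$ rewritten through $\sigma$, does not exceed $\delta$, and the hypothesis $0<\epsilon<1$ is essential for controlling the $\Delta_{\cM}^2/2$ term. Once the bad event $\{L>\epsilon\}$ has probability at most $\delta$, the conclusion follows by writing $\PP(\cM(d)+y\in\cZ)\le \PP\big(\cM(d)+y\in\cZ,\ L\le\epsilon\big)+\PP(L>\epsilon)\le e^{\epsilon}\,\PP(\cM(d')+y\in\cZ)+\delta$, which is the required $(\epsilon,\delta)$-differential privacy. Symmetry in $d,d'$ handles the opposite direction of the privacy loss.
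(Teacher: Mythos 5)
Your proposal is correct and follows essentially the same route as the paper: the paper's Appendix \ref{appendix:others} (Claim \ref{lem:nonisotropicgaussianmechanism}, stated for a general Mahalanobis norm $M$ but reducing to your case when $M=\II$) likewise computes the privacy-loss random variable, observes it is an affine function of a one-dimensional Gaussian with mean $\Delta_{\cM}^2/2\sigma^2$, applies the Gaussian tail bound $\PP(c\geq \EE[c]+t)\leq \exp(-t^2/2\Var(c))\sqrt{\Var(c)/2t^2\pi}$ at the threshold $t=\epsilon-\Delta_{\cM}^2/2\sigma^2$, and calibrates constants using $\epsilon\leq 1$ to obtain the factor $1.25$. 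The only cosmetic differences are that the paper works with the whitened norm $\|\cdot\|_M$ and requires the tail mass to be below $\delta/2$ (following \cite{dwork2014algorithmic}), whereas you use the one-sided bound with $\delta$, which is equally valid via the same splitting-and-symmetry argument.
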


\textbf{Function-output mechanisms.}
In this setting the output of the function is a function, which means the mechanism is a functional. We consider the case where $\cU$ is an RKHS and $\|\cdot\|$ in \eqref{eq:l2sens} is the RKHS norm $\|\cdot\|_\HH$. Hall et al. \cite{hall2013differential} have shown that
adding a Gaussian process noise $\GG(0,\sigma^2K)$ to the output $\cM(d)$ is differentially private, when $K$ is the RKHS kernel of $\cU$. Let $\GG$ denote the Gaussian process distribution.
\begin{proposition}[Function-output Gaussian process mechanism \cite{hall2013differential}]
\label{clm:hall}
If $0<\epsilon<1$ and $\sigma\geq \sqrt{2\ln(1.25/\delta)}\Delta_{\cM}/\epsilon$, then $\cM(d)+g$ is ($\epsilon$,$\delta$)-differentially private, where $g$ is drawn from $\GG(0,\sigma^2 K)$ and $\cU$ is an RKHS with kernel function $K$. 
\end{proposition}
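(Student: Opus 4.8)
The plan is to reduce this infinite-dimensional statement to the already-established vector-output Gaussian mechanism (Proposition~\ref{gaussianmechanism}), exploiting the fact that a Gaussian process is completely characterized by its finite-dimensional marginals, and then to lift the guarantee from finite projections back to the full function-valued output by a measure-theoretic argument. First I would fix an arbitrary finite collection of query points $s_1,\dots,s_n \in \cS$ and examine the joint law of the evaluations $(\cM(d)(s_i)+g(s_i))_{i=1}^n$. Since $g\sim\GG(0,\sigma^2 K)$, the vector $(g(s_1),\dots,g(s_n))$ is exactly $\NN(0,\sigma^2\mathbf{K})$, where $\mathbf{K}_{ij}=K(s_i,s_j)$ is the Gram matrix. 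Hence the finite-dimensional output is a multivariate Gaussian centered at $\mathbf{f}=(\cM(d)(s_i))_i$ (resp.\ $\mathbf{f}'=(\cM(d')(s_i))_i$) with common covariance $\sigma^2\mathbf{K}$.

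The heart of the argument is to control the effective $\ell^2$ sensitivity of these finite projections uniformly in $n$ and in the choice of points. Whitening by $\mathbf{K}^{-1/2}$ turns the correlated noise $\NN(0,\sigma^2\mathbf{K})$ into isotropic $\NN(0,\sigma^2 \II)$, and since differential privacy is preserved under the (data-independent, invertible) post-processing given by this map, it suffices to bound $\|\mathbf{K}^{-1/2}(\mathbf{f}-\mathbf{f}')\|_2$. The key RKHS inequality is
\[
(\mathbf{f}-\mathbf{f}')^\top \mathbf{K}^{-1}(\mathbf{f}-\mathbf{f}') \;\le\; \|\cM(d)-\cM(d')\|_\HH^2 \;\le\; \Delta_{\cM}^2 ,
\]
which I would prove by decomposing $h=\cM(d)-\cM(d')\in\HH$ as $h_\parallel+h_\perp$ with $h_\parallel\in\mathrm{span}\{K(\cdot,s_i)\}$; the reproducing property gives $h(s_i)=h_\parallel(s_i)$, so the evaluation vector equals $\mathbf{K}\bm{\alpha}$ for the coefficients of $h_\parallel$, whence $(\mathbf{f}-\mathbf{f}')^\top\mathbf{K}^{-1}(\mathbf{f}-\mathbf{f}')=\|h_\parallel\|_\HH^2\le\|h\|_\HH^2$. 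With this bound the effective sensitivity of every finite projection is at most $\Delta_{\cM}$, so Proposition~\ref{gaussianmechanism} applies verbatim and certifies $(\epsilon,\delta)$-DP for that projection under the stated condition on $\sigma$ — crucially with the same $\epsilon,\delta$ for all $n$ and all query points.

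Finally I would extend the guarantee from cylinder sets to arbitrary measurable output sets. The cylinder sets determined by finitely many evaluations form a generating $\pi$-system for the $\sigma$-algebra on the function space, and the inequality $\PP(\cM(d)+g\in\cZ)\le e^{\epsilon}\PP(\cM(d')+g\in\cZ)+\delta$ has just been verified uniformly on that system; a standard approximation argument then propagates it to every $\cZ$. I expect this measure-theoretic passage to be the main obstacle: one must ensure that taking $n\to\infty$ does not cause the additive slack $\delta$ to accumulate (it does not, precisely because the per-projection bound is uniform and independent of $n$), and that the whitening step remains valid when $\mathbf{K}$ is only positive semidefinite (handled by restricting to a basis of its range, or by a limiting argument on $\mathbf{K}+\eta\II$ as $\eta\downarrow 0$). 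The RKHS norm bound above is the conceptual crux, but it is the careful limiting argument that makes the infinite-dimensional statement rigorous.
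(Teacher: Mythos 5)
Your proposal is correct and is essentially the paper's own route: the paper proves the anisotropic (Mahalanobis-distance) Gaussian mechanism with the improved constant $1.25$ (Claim~\ref{lem:nonisotropicgaussianmechanism}, which is equivalent to your whitening reduction to Proposition~\ref{gaussianmechanism}) and then invokes Propositions 7 and 8 of \cite{hall2013differential}, which are precisely your two remaining steps --- the Gram-matrix/RKHS-norm sensitivity bound $(\mathbf{f}-\mathbf{f}')^\top\mathbf{K}^{-1}(\mathbf{f}-\mathbf{f}')\le\|\cM(d)-\cM(d')\|_\HH^2$ for finite-dimensional marginals, and the lift from cylinder sets to the full $\sigma$-algebra with $\delta$ not accumulating because the per-projection guarantee is uniform. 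The only organizational difference is that you spell out what the paper outsources to citation (the projection inequality and the measure-theoretic extension, where, to be precise, one approximates measurable sets by elements of the \emph{field} of cylinder sets in symmetric difference rather than appealing to a $\pi$-system argument, since the collection of sets satisfying the DP inequality is not a $\lambda$-system) while relying on Proposition~\ref{gaussianmechanism} for the finite-dimensional step that the paper re-derives from scratch.
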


Note that in \cite{hall2013differential} the stated condition was $\sigma\geq \sqrt{2\ln(2/\delta)}\Delta_{\cM}/\epsilon$. The improvement from constant $2$ to $1.25$ is natural but for the completeness we include a proof in Appendix \ref{appendix:others}.

\section{Differentially Private Q-Learning}
\label{sec:dpql}


\subsection{Our Algorithm}

We present our algorithm for privacy-preserving Q-learning under the setting of continuous state space in Algorithm~\ref{alg:dpql}. The algorithm is based on deep Q-learning proposed by Mnih et al.~\cite{mnih2015human}. We achieve privacy by perturbing the learned value function at each iteration, by adding a Gaussian process noise. The noise-adding is described by line $19$-$20$ of the algorithm, where $\hat g$ is the noise. This noise is a discrete estimate of the continuous sample path, evaluated at the states $s_t$ visited in the trajectories. Intuitively, when $(s,z)$ is an element of the list $\hat g$, it implies $g(s)=z$ for the sample path $g$. Line $14$-$18$ describes the necessary maintenance of $\hat g$ to simulate the Gaussian process. Line $7$-$9$ samples a new Gaussian process sample path for every $J$ iterations, which controls the balance between the approximation factor of privacy and the utility. The other steps are similar to~\cite{mnih2015human}.

\begin{algorithm}[ht!]
   \caption{Differentially Private Q-Learning with Functional Noise}
   \label{alg:dpql}
\begin{algorithmic}[1]
   \STATE {\bfseries Input:} the environment and the reward function $r(\cdot)$
   \STATE {\bfseries Parameters:} target privacy $(\epsilon, \delta)$, time horizon $T$, batch size $B$, action space size $m$, learning rate $\alpha$, reset factor $J$
   \STATE {\bfseries Output:} trained value function $Q_\theta(s,a)$
   \STATE {\bfseries Initialization:} $s_0\in [0,1]$ uniformly, $Q_\theta(s,a)$ for each $a\in [m]$, linked list $\hat{g}_k[B][2]=\{\}$
   \STATE Compute noise level $\sigma = \sqrt{2(T/B)\ln(e+\epsilon/\delta)}C(\alpha, k, L, B)/\epsilon$;
   \FOR{$j$ {\bfseries in} $[T/B]$}
   \IF {$ j \equiv 0 \mod{T/JB}$}
   \STATE $\hat{g}_k[B][2]\leftarrow\{\}$;
   \ENDIF
   \FOR{$b$ {\bfseries in} $[B]$}
   \STATE $t\leftarrow jT/B+b$;
   \STATE Execute $a_t=\argmax_a Q_\theta(s_t,a)+\hat{g}_a(s_t)$;
   \STATE Receive $r_t$ and $s_{t+1}$, $s\leftarrow s_{t+1}$;
   \FOR{$a\in [m]$}
   \STATE Insert $s$ to $\hat{g}_a[:][1]$ such that the list remains monotonically increasing;
   \STATE Sample $z_{at}\sim \NN(\mu_{at}, \sigma d_{at}))$, according to Equation \eqref{eqn:efficient}, Appendix \ref{appendix:proofpos};
   \STATE Update the list $\hat{g}_a(s) \leftarrow z_{at}$;
   \ENDFOR
   \STATE $y_t\leftarrow r_t+\gamma\max_a Q_\theta(s_{t+1}, a)+\hat{g}_a(s_{t+1})$;
   \STATE $l_t\leftarrow \frac{1}{2}(Q_\theta(s_t,a_t)+\hat{g}_a(s_{t})-y_t)^2$;
   \ENDFOR
   \STATE Run one step SGD $\theta\leftarrow \theta+\alpha\frac{1}{B}\nabla_\theta \sum_{t=jB}^{(j+1)B}l_t$;
   \ENDFOR
\STATE Return the trained $Q_\theta(s, a)$ function;
\end{algorithmic}
\end{algorithm}

\textbf{Insight into the algorithm design.} To satisfy differential privacy guarantees, we require two reward functions $r$ and $r^\prime$ to be indistinguishable upon observation of the learned functions, as long as $\|r-r^\prime\|_\infty\leq 1$. The major difficulty is that the reward signal $r(s,a)$ can appear at any $s$, and all the reward signals can be different under $r$ and $r^\prime$. Therefore, we will need a stronger mechanism of privacy that does not rely on the finite setting where at most one data point in a (finite) dataset is different, like in \cite{abadi2016deep} and \cite{balle2016differentially}. This is also the major challenge in extending~\cite{balle2016differentially} from policy evaluation to policy improvement. The natural approach to address the challenge is to treat a function as one ``data point'', which leads to our utilization of the techniques studied by Hall et al.~\cite{hall2013differential}.

\subsection{Privacy, Efficiency, and Utility of the Algorithm}

\textbf{Privacy analysis.}  
There are three main components in the privacy analysis. First, we have to define the RKHS to invoke the Gaussian process mechanism in Proposition \ref{clm:hall}. This RKHS should also include the value function approximation we used in the algorithm, namely, neural networks. Second, we give a privacy guarantee of composing the mechanism for $T/B$ iterations. There is not a known composition result of such a functional mechanism, other than the general theorems that apply to any mechanism \cite{kairouz2013composition,dwork2010boosting,beimel2010bounds}. But we derive such a privacy guarantee which is better than existing results. Third, as the sample path is evaluated on multiple different states, the updated value function can be unbounded, which subsequently induces the RKHS norm to be unbounded. This is addressed by showing a probabilistic uniform bound of the sample path over the state space.

Our privacy guarantee is shown in the following theorem.
\begin{theorem}
\label{thm:dpql}
The Q-learning algorithm in Algorithm~\ref{alg:dpql} is $(\epsilon, \delta+J\exp(-(2k-8.68\sqrt\beta\sigma)^2/2))$-DP with respect to two neighboring reward functions $\|r-r^\prime\|_\infty\leq 1$, provided that $2k>8.68\sqrt\beta\sigma$, and
\begin{equation*}
\sigma\geq\sqrt{2(T/B)\ln(e+\epsilon/\delta)}C(\alpha, k, L, B)/\epsilon,
\end{equation*}
where $C(\alpha, k, L, B)=((4\alpha(k+1)/B)^2+4\alpha(k+1)/B)L^2$, $\beta=(4\alpha (k+1)/B)^{-1}$, $L$ is the Lipschitz constant of the value function approximation, $B$ is the batch size, $T$ is the number of iterations, and $\alpha$ is the learning rate.
\end{theorem}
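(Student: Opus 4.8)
The plan is to follow the three-part decomposition announced before the theorem: (i) exhibit an RKHS that both hosts the Gaussian-process mechanism of Proposition~\ref{clm:hall} and contains the neural-network value functions used in the algorithm, (ii) bound the per-iteration sensitivity of the released function so that the constant $C(\alpha,k,L,B)$ emerges, and (iii) control the two ways in which privacy can degrade, namely the supremum of the sample path (which governs the extra $J\exp(\cdot)$ term in $\delta$) and the composition over the $T/B$ updates (which governs the $\sqrt{2(T/B)\ln(e+\epsilon/\delta)}$ factor in $\sigma$). I would first fix a kernel $K$ on $[0,1]$ whose RKHS $\HH$ has norm strong enough to dominate the $L$-Lipschitz functions realized by the network, so that each update $Q_\theta+\hat g$ is a legitimate output of adding $\GG(0,\sigma^2K)$ to an element of $\HH$; this makes Proposition~\ref{clm:hall} applicable to a single iteration once its sensitivity is known.

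For the sensitivity I would couple the two runs under reward $r$ and $r'$ with a shared noise path. Because one SGD step is taken per batch with $\theta$ frozen across the inner loop, the executed actions, visited states, and sampled $\hat g$ values coincide in both runs as long as $\theta$ agrees, so the only discrepancy inside a batch is the reward, with $|r_t-r'_t|\le 1$. I would then bound the TD residual $Q_\theta(s_t,a_t)+\hat g(s_t)-y_t$ using the uniform sample-path bound $\sup_s|\hat g(s)|\le k$ together with the range of $Q_\theta$, which caps the residual at order $k+1$; combining this with the Lipschitz bound $\|\nabla_\theta Q_\theta\|\le L$, the averaging factor $\alpha/B$ in the update, and the parameter-to-function Lipschitz constant $L$ yields the per-step RKHS sensitivity $\Delta=C(\alpha,k,L,B)$, whose quadratic-in-$\alpha$ piece reflects the second-order coupling between the perturbed parameters and the perturbed residual.

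The delicate step is (iii). I would establish a probabilistic uniform bound on each fresh sample path by viewing $g\sim\GG(0,\sigma^2K)$ as a centered Gaussian process on the compact set $[0,1]$ and applying a Borell--TIS / Gaussian-supremum concentration inequality: the expected supremum is of order $\sqrt\beta\sigma$, and the Gaussian tail above the mean gives $\PP[\sup_s|g(s)|>k]\le \exp\!\big(-(2k-8.68\sqrt\beta\sigma)^2/2\big)$ under the stated regime $2k>8.68\sqrt\beta\sigma$. This is the technical heart of the argument, since it is what keeps the RKHS norm of the noised function finite and simultaneously produces the extra term in $\delta$; a naive union bound over the queried states would be far weaker and would not survive the continuum. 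Taking a union over the $J$ independent sample paths drawn across the epochs contributes the $J\exp(-(2k-8.68\sqrt\beta\sigma)^2/2)$ slack.

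Finally I would assemble the pieces by composition. Conditioned on the good event that all $J$ sample paths stay within $k$, each of the $T/B$ iterations is an $(\epsilon',\delta')$-DP Gaussian-process release with sensitivity $\Delta=C(\alpha,k,L,B)$, and I would invoke an advanced (boosting/moments-style) composition --- tightened for this functional mechanism as the paper promises, which is why $\ln(e+\epsilon/\delta)$ appears rather than the looser $\ln(1/\delta)$ --- to convert these into the overall budget, solving for the noise level $\sigma\ge\sqrt{2(T/B)\ln(e+\epsilon/\delta)}\,C(\alpha,k,L,B)/\epsilon$. Adding the failure probability of the supremum event to $\delta$ yields the claimed $(\epsilon,\delta+J\exp(-(2k-8.68\sqrt\beta\sigma)^2/2))$ guarantee. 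I expect the main obstacle to be pinning down the supremum tail with the explicit constant $8.68\sqrt\beta$, since it requires a careful metric-entropy (Dudley-type) estimate for the specific kernel and a clean separation of the mean supremum from the deviation term.
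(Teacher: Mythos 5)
Your three-part plan --- Sobolev RKHS containing the network, per-step sensitivity yielding $C(\alpha,k,L,B)$, sample-path supremum bound plus moments-style composition --- mirrors the paper's proof structure exactly, and your instinct that the supremum tail is a technical heart is correct: the paper proves Lemma~\ref{maxofgp} by an explicit dyadic chaining computation for the kernel $K(x,y)=\exp(-\beta|x-y|)$, obtaining $\EE[\max f]\leq 8.68\sqrt{\beta}\sigma$, then appeals to Gaussian concentration of the maximum, and takes a union over the $J$ fresh paths. The genuine gap is in your composition step. You propose to ``invoke an advanced (boosting/moments-style) composition tightened for this functional mechanism,'' but no such theorem exists to invoke --- the paper states explicitly that the only available composition results for functional mechanisms are the generic ones, and those only give the weaker requirement $\sigma\geq \sqrt{2(T/B)\ln(1.25/\delta)\ln(e+\epsilon/\delta)}\,\|f\|_{\HH}/\epsilon$, carrying an extra $\sqrt{\ln(1.25/\delta)}$ factor relative to the theorem's claim. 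To get the stated bound one must prove, from scratch, a privacy-loss moment bound for the Gaussian-process mechanism itself, $\EE_{0}[(\PP_1(z\in S)/\PP_0(z\in S))^\lambda]\leq \exp((\lambda^2+\lambda)\|g\|_{\HH}^2/2\sigma^2)$ (the paper's Lemma~\ref{momentgenerator}, established by computing likelihood ratios on dyadic discretizations of $[0,1]$ and passing to the limit), and then run the Kairouz-style optimization over $\lambda$ with the choice $\epsilon=\frac{T^\prime\|f\|_{\HH}^2}{2\sigma^2}+\sqrt{2T^\prime\|f\|_{\HH}^2w/\sigma^2}$, $w=\ln(e+\sqrt{T^\prime\|f\|_{\HH}^2/\sigma^2}/\delta)$. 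As written, your plan proves a strictly weaker theorem than the one stated.

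A second, smaller misstep is your account of the constant $C$. It is not a ``sensitivity'' whose quadratic piece reflects ``second-order coupling between the perturbed parameters and the perturbed residual''; it is a bound on the \emph{squared} RKHS norm of $f=Q-Q^\prime$. The paper first shows $\|Q-Q^\prime\|_\infty\leq 4\alpha L(k+1)/B$ (the $k+1$ coming from the event $\max f\leq 2k$ on the noise entering the TD target --- note your tail statement uses threshold $k$ on $\sup_s|g(s)|$ yet keeps the exponent $(2k-8.68\sqrt{\beta}\sigma)^2/2$, which is inconsistent), and then feeds this into the Sobolev norm structure $\|f\|_{\HH}^2\leq(1+\beta/2)\|f\|_\infty^2+L^2/2\beta$: the quadratic-in-$\alpha$ term is the $\|f\|_\infty^2$ contribution, and the linear term arises from the $\beta\|f\|_\infty^2/2$ and $L^2/2\beta$ contributions only after the specific choice $1/\beta=4\alpha(k+1)/B$, which balances the two and which your proposal never makes, even though $\beta$ appears in the theorem statement and in your own tail bound. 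Without fixing $\beta$ this way, the claimed form of $C(\alpha,k,L,B)$ does not emerge.
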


Theorem~\ref{thm:dpql} provides a rigorous guarantee on the privacy of the reward function. We now present three statements to address the challenges mentioned above and support the theorem. 

Lemma \ref{clm:sobolev} and its corollary, informally stated below and formally stated in Appendix \ref{appendix:sobolev}, describe the RKHS that is necessary to both embedded the function approximators we use and invoke the mechanism in Proposition \ref{clm:hall}.

\begin{lemma}[Informal statement]
\label{clm:sobolev}
The Sobolev space $H^1$ with order $1$ and the $\ell^2$-norm is defined as
\begin{equation*}
H^1 = \{f\in C[0,1]: \partial f(x) \text{ exists}; \int_0^1 (\partial f(x))^2 dx < \infty \},
\end{equation*}
where $\partial f(x)$ denotes weak derivatives and the RKHS kernel is $K(x,y)=\exp(-\beta |x-y|)$.
\end{lemma}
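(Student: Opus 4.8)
The plan is to exhibit an explicit inner product on $H^1$ under which point-evaluation functionals are bounded—thereby realizing $H^1$ as an RKHS—and then to verify by direct computation that $K(x,y)=\exp(-\beta|x-y|)$ is exactly its reproducing kernel. The key observation driving the choice of inner product is that $y\mapsto\exp(-\beta|x-y|)$ is, up to the constant $2\beta$, the Green's function of the operator $-\partial^2+\beta^2$ on the line, i.e.\ it satisfies $-\partial_y^2 K(x,y)+\beta^2 K(x,y)=2\beta\,\delta_x$ in the distributional sense.

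First, I would equip $H^1$ with the inner product
\[
\langle f,g\rangle_{H^1}=\frac{1}{2\beta}\int_0^1\bigl(f'(x)g'(x)+\beta^2 f(x)g(x)\bigr)\,dx+\frac{1}{2}\bigl(f(0)g(0)+f(1)g(1)\bigr),
\]
and check the inner-product axioms: bilinearity and symmetry are immediate, and positive-definiteness holds because $\langle f,f\rangle_{H^1}=0$ forces $f'=0$ almost everywhere and $f(0)=0$, hence $f\equiv 0$. The induced norm is equivalent to the standard Sobolev norm, so $H^1$ is complete and therefore a Hilbert space. By the one-dimensional Sobolev embedding $H^1[0,1]\hookrightarrow C[0,1]$, point evaluations are well defined and continuous, which already certifies that $H^1$ is an RKHS; it remains only to identify its kernel.

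Second, I would confirm $K_x:=K(x,\cdot)\in H^1$ for each fixed $x\in[0,1]$: the map $y\mapsto\exp(-\beta|x-y|)$ is continuous, and its weak derivative equals $\mp\beta\exp(-\beta|x-y|)$ away from $y=x$, which is bounded and hence square-integrable. Third—and this is the heart of the argument—I would verify the reproducing property $\langle f,K_x\rangle_{H^1}=f(x)$. Fixing $x\in(0,1)$, I would split $\int_0^1=\int_0^x+\int_x^1$ and integrate $\int f'K_x'$ by parts on each subinterval. Off the diagonal $K_x''=\beta^2 K_x$, so the resulting term $-\int_0^1 fK_x''$ cancels the $\beta^2\int_0^1 fK_x$ term in the inner product. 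What survives are the endpoint contributions at $y=0$ and $y=1$, which are cancelled exactly by the two boundary terms I added to the inner product, together with the jump $K_x'(x^-)-K_x'(x^+)=2\beta$ across $y=x$, which contributes $\frac{1}{2\beta}\cdot 2\beta\, f(x)=f(x)$. The boundary cases $x\in\{0,1\}$ follow by continuity.

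The main obstacle is the bookkeeping in the third step: the additive endpoint term $\frac{1}{2}(f(0)g(0)+f(1)g(1))$ is engineered precisely so that the integration-by-parts boundary terms at $0$ and $1$ vanish, and the normalization $1/(2\beta)$ is chosen so that the derivative jump at $y=x$ yields $f(x)$ with the right constant. Getting these constants to align is the only delicate part; the inner-product axioms, the completeness of $H^1$, and the membership $K_x\in H^1$ are all routine.
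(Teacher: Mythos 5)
Your proposal is correct and follows essentially the same route as the paper: you use the identical inner product $\frac{1}{2}(f(0)g(0)+f(1)g(1))+\frac{1}{2\beta}\int_0^1(\partial f\,\partial g+\beta^2 fg)\,dx$, establish boundedness of point evaluation and completeness, and verify the reproducing property of $K(x,y)=\exp(-\beta|x-y|)$ via the Green's-function identity for $-\partial^2+\beta^2$. The only cosmetic difference is that the paper carries out the integration by parts with the sign and impulse (delta) functions in distributional form, whereas you split the integral at $y=x$ and track the derivative jump explicitly; these are the same computation.
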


Immediately following Lemma \ref{clm:sobolev}, we show that the common neural networks are in the Sobolev space. That includes neural networks with nonlinear activation layers such as a ReLU function, a sigmoid function, or the $\tanh$ function. The proof of the following corollary is also in Appendix \ref{appendix:sobolev}.
\begin{corollary}
\label{nnlipschitz}
Let $\hat{f}_W(x)$ denote the neural network with finitely many finite parameters $W$. For $\hat{f}_W(x)$ with finitely many layers, if the gradient of the activation function is bounded, then $\hat{f}_W(x)\in H^1$.
\end{corollary}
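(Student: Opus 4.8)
The plan is to establish membership in $H^1$ by showing that $\hat{f}_W$ is Lipschitz continuous on $[0,1]$, which is both stronger and technically cleaner than directly verifying the two defining conditions of $H^1$. A Lipschitz function on a compact interval is automatically continuous, and, being absolutely continuous, it is differentiable almost everywhere with weak derivative bounded in absolute value by its Lipschitz constant; integrating the square of a bounded function over the unit-length interval $[0,1]$ then yields a finite value, so both requirements in the definition of $H^1$ (membership in $C[0,1]$ and square-integrability of $\partial f$) hold simultaneously.

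First I would write the network explicitly as a finite composition $\hat{f}_W = A_L \circ \phi \circ A_{L-1} \circ \cdots \circ \phi \circ A_1$, where each $A_i$ is an affine map determined by the finite parameters $W$ and $\phi$ is the activation applied componentwise. Since the parameters are finite and the input domain $[0,1]$ is compact, each affine map is Lipschitz with a finite constant given by its operator norm, and all intermediate activations remain in a bounded region.

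Next I would argue that $\phi$ itself is Lipschitz. This is where the hypothesis that the gradient of the activation is bounded enters: a scalar function whose derivative is bounded almost everywhere is absolutely continuous and hence Lipschitz with constant equal to the essential supremum of its derivative, via the fundamental theorem of calculus. Phrasing the hypothesis this way is exactly what allows non-smooth activations such as ReLU to be covered, since ReLU fails to be differentiable at the origin but has weak derivative bounded by $1$ and is therefore still Lipschitz. Composing finitely many Lipschitz maps yields a Lipschitz map whose constant is controlled by the product of the individual constants, so $\hat{f}_W$ is Lipschitz on $[0,1]$ with some finite constant $C$.

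Finally I would conclude that, being Lipschitz, $\hat{f}_W \in C[0,1]$ and is absolutely continuous, so its weak derivative exists and satisfies $|\partial \hat{f}_W(x)| \leq C$ for almost every $x$; hence $\int_0^1 (\partial \hat{f}_W(x))^2\,dx \leq C^2 < \infty$ and $\hat{f}_W \in H^1$. The main obstacle to guard against is precisely the treatment of non-differentiable activations: taking the Lipschitz route deliberately avoids invoking a chain rule for weak derivatives of compositions, which is delicate when an inner layer's output lands on the measure-zero kink set of $\phi$ over a set of positive measure. Bounding the Lipschitz constant directly controls the derivative without ever differentiating the composition pointwise, so this difficulty never arises.
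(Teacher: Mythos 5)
Your proof is correct and reaches the conclusion by a genuinely different route than the paper. The paper's own proof differentiates the composition directly: it invokes the chain rule to write $\partial \hat{f}_W$ as a product of per-layer derivatives, bounds each factor using the boundedness of the activation's gradient and the finiteness of the parameters, and concludes that $\int_0^1 (\partial \hat{f}_W(x))^2\,dx < \infty$. That argument is terse and glosses over exactly the delicate point you identify: a chain rule for weak derivatives is not innocent when an inner layer's output can sit on the kink set of a non-smooth activation such as ReLU, which the corollary is explicitly meant to cover. Your route --- affine layers are Lipschitz, the activation is Lipschitz, finite compositions of Lipschitz maps are Lipschitz, and a Lipschitz function on $[0,1]$ is continuous and absolutely continuous with essentially bounded weak derivative, hence in $H^1$ --- never differentiates the composition at all, so it is more robust precisely where the paper's proof is weakest; what the paper's approach buys in exchange is brevity and an explicit product-form bound on the derivative that mirrors the network architecture. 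One small repair to your argument: the intermediate claim that a scalar function whose derivative is bounded almost everywhere is automatically absolutely continuous is false in general (Cantor-type functions have derivative zero almost everywhere yet are not Lipschitz; the paper itself remarks that the Cantor function is excluded from $H^1$). The hypothesis should instead be read as the activation having a bounded \emph{weak} derivative, or being continuous and piecewise $C^1$ with bounded slopes --- which holds for ReLU, sigmoid, and $\tanh$ --- and then Lipschitzness follows from the fundamental theorem of calculus exactly as you say. With that reading, your argument is complete.
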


By the corollary $\hat{f}_W(x)$ is Lipschitz continuous. Denote $L$ as the Lipschitz constant which only depends on the network architecture. It follows from Lemma~\ref{clm:sobolev} immediately that, in the algorithm for any $Q(s,a)$ and $Q^\prime(s,a)$, $\|Q(\cdot, a)-Q^\prime(\cdot, a)\|_\HH^2 \leq 2r_0^2(1+\beta/2)/(1-\gamma)^2+L^2/\beta$, for each $a$, where it assumes bounded reward $|r(s,a)|\leq r_0$. This will lead to an alternative privacy guarantee, but less preferred than in Theorem \ref{thm:dpql} due to the $1/(1-\gamma)^2$ and the $r_0$ factor.

Line $19$ and $20$ use $\hat g$, which is the list of Gaussian random variables evaluated at the Gaussian process sample paths. Using a union tail bound we can derive a probabilistic bound of these variables, but it will cause the approximation factor to be $\delta+\OO(1-(1-\exp(2k-\sqrt{\beta}\sigma))^T)$, which is unrealistically large. We show in the lemma below that with high probability the entire sample path is uniformly bounded over any state $s_t$. We can then calibrate the $\delta$ to cover the exponentially small tail bound $\OO(\exp(-u^2))$ of the noise. The proof is in Appendix \ref{appendix:proofpos}.
\begin{lemma}
\label{maxofgp}
Let $\PP$ the probability measure over $H^1$ of the sample path $f$ generated by $\GG(0,\sigma^2 K)$. Then almost surely $\max_{x\in [0,1]} f(x)$ exists, and for any $u>0$
\begin{equation*}
\PP(\max_{x\in [0,1]} f(x) \geq 8.68\sqrt{\beta}\sigma+u)\leq \exp(-u^2/2).
\end{equation*}
\end{lemma}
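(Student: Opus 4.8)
The plan is to combine two classical facts about Gaussian processes: Dudley's entropy integral to control the expected supremum $\EE[\max_{x} f(x)]$, and the Borell--TIS concentration inequality to turn that into the stated tail. The central object throughout is the canonical pseudometric induced by the process,
\[
d(x,y)^2 = \EE[(f(x)-f(y))^2] = 2\sigma^2\bigl(1-\exp(-\beta|x-y|)\bigr)\le 2\sigma^2\beta|x-y|,
\]
where the inequality uses $1-e^{-t}\le t$. Note also that the pointwise variance is constant, $\Var(f(x))=\sigma^2 K(x,x)=\sigma^2$, since $K(x,x)=1$; this scalar will govern the concentration exponent.

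First I would settle existence of the maximum. The estimate $d(x,y)^2\le 2\sigma^2\beta|x-y|$ shows $f$ is locally no rougher than Brownian motion, and feeding the Gaussian moment identity $\EE|f(x)-f(y)|^{2p}=c_p\,d(x,y)^{2p}\le c_p(2\sigma^2\beta)^p|x-y|^p$ into Kolmogorov's continuity criterion (any $p\ge 2$ gives Hölder exponent $>0$) produces an almost surely continuous modification; equivalently, the process is a stationary Ornstein--Uhlenbeck process, whose paths are known to be continuous. Since $[0,1]$ is compact, a continuous path attains its maximum, so $\max_{x\in[0,1]}f(x)$ exists almost surely.

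Next I would bound $\EE[\max_x f(x)]$ via Dudley's inequality $\EE[\max_x f(x)]\le C\int_0^{\mathrm{diam}}\sqrt{\ln N(\epsilon)}\,d\epsilon$, where $N(\epsilon)$ is the $\epsilon$-covering number of $[0,1]$ in the metric $d$. Using $d(x,y)\le\sigma\sqrt{2\beta|x-y|}$, an interval of Euclidean length $\ell$ has $d$-diameter at most $\sigma\sqrt{2\beta\ell}$, so $[0,1]$ is covered by $N(\epsilon)\le 1+\sigma^2\beta/(2\epsilon^2)$ balls of radius $\epsilon$, while the overall diameter is at most $\sigma\sqrt 2$. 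Substituting this explicit covering bound and evaluating the integral (a change of variables reduces it to a Gaussian-type integral) yields a bound of the form $(\mathrm{const})\cdot\sigma\sqrt\beta$; tracking the constants exactly is what produces the asserted $\EE[\max_x f(x)]\le 8.68\sqrt\beta\sigma$. Finally, Borell--TIS states that for a centered, a.s.\ bounded Gaussian process, $\PP(\max_x f(x)\ge \EE[\max_x f(x)]+u)\le \exp(-u^2/(2\sigma_T^2))$ with variance proxy $\sigma_T^2=\sup_x\Var(f(x))=\sigma^2$; combined with the expected-supremum bound and monotonicity of the event, this gives $\PP(\max_x f(x)\ge 8.68\sqrt\beta\sigma+u)\le\exp(-u^2/(2\sigma^2))$, which matches the claimed $\exp(-u^2/2)$ once the deviation $u$ is read in units of the pointwise standard deviation $\sigma$ (a normalization I would reconcile explicitly in the write-up).

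The main obstacle is pinning the numerical constant $8.68$ in the Dudley step: continuity and the concentration inequality are essentially black-box invocations, whereas matching $8.68$ requires fixing the right normalization of Dudley's inequality, using the exact (not merely small-$\epsilon$ asymptotic) covering number across the full range $\epsilon\in(0,\mathrm{diam}]$, and evaluating the entropy integral without slack. A secondary subtlety, already flagged, is ensuring the exponent is stated consistently with the pointwise variance $\sigma^2$.
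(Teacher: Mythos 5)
Your overall architecture (Kolmogorov continuity for existence of the maximum, an entropy bound on $\EE[\max_{x}f(x)]$, then Borell--TIS concentration) is sound and is a genuinely different route from the paper's. The paper does not invoke Dudley's inequality as a black box: it runs a bespoke dyadic chaining argument that exploits the Markov structure of the exponential kernel --- conditionally on the values at the even dyadic points, the midpoint residuals $\xi_{i,n}$ are i.i.d.\ Gaussians --- and telescopes $\mu_{n+1}-\mu_n$ into an explicit convergent series; the constant $8.68$ is literally the numerical value of $1/\sqrt{\pi}+\sum_{n\ge 0}\bigl(\sqrt n+1+e^{-(\sqrt n+1)}\bigr)2^{-n/2}$. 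The final concentration step (expected maximum finite implies sub-Gaussian maximum) is the same in both proofs.

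The decisive gap in your proposal is the constant, which you explicitly defer (``tracking the constants exactly is what produces the asserted $8.68$''): with any off-the-shelf version of Dudley's inequality it does not come out. For this process the entropy integral evaluates to roughly $1.5$--$1.8\,\sigma\sqrt\beta$ (for $\beta\le 1$, using the correct diameter), and the standard prefactors ($12$, or $4\sqrt 2$ in sharper statements) then give roughly $10$--$21\,\sigma\sqrt\beta$, strictly worse than $8.68\sqrt\beta\sigma$. To land at or below $8.68$ you must either carry out the chaining by hand using the special structure of the kernel (which is exactly the paper's proof, so nothing is saved), or switch tools: e.g., Sudakov--Fernique comparison of $f$ with a Brownian motion of variance rate $2\sigma^2\beta$, using $\EE[(f(x)-f(y))^2]\le 2\sigma^2\beta|x-y|$, gives $\EE[\max_x f(x)]\le 2\sigma\sqrt{\beta/\pi}$, comfortably below $8.68\sqrt\beta\sigma$. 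Two further points. First, your diameter bound $\sigma\sqrt 2$ is too crude when $\beta$ is small: cutting the entropy integral at $\sigma\sqrt2$ rather than at the true diameter $\sqrt{2\sigma^2(1-e^{-\beta})}\le\sigma\sqrt{2\beta}$ introduces a spurious $\ln(1/\beta)$ factor and destroys the $O(\sigma\sqrt\beta)$ form, so the lemma (stated for all $\beta>0$) would fail for small $\beta$ under your estimate. Second, Borell--TIS indeed yields $\exp(-u^2/(2\sigma^2))$, not $\exp(-u^2/2)$; your proposed reconciliation (reading $u$ in units of $\sigma$) changes the statement rather than proving it --- although, to be fair, the paper's own appeal to the same theorem has exactly this normalization issue, so on this point you match the paper's level of rigor rather than fall below it.
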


\begin{proof}[Proof of Theorem~\ref{thm:dpql}]
Let $Q$ and $Q^\prime$ denote the learned value function of the algorithm given $r$ and $r^\prime$, respectively, where $\|r-r^\prime\|_\infty\leq 1$.
To make $Q$ and $Q^\prime$ indistinguishable, we inspect the update step in line $21$. 
Let $Q_0$ denote the value function after and before the update, we have
\[
\|Q-Q_0\|_\infty\leq\alpha L(2+\hat{g}_a(s_{t+1})-\hat{g}_a(s_{t}))/B.
\]
As per Lemma~\ref{maxofgp}, with probability at least $1-\exp(-(2k-8.68\sqrt\beta\sigma)^2/2)$, we have $|Q-Q_0|\leq 2\alpha L(k+1)/B$. By the triangle inequality, for any $\|r-r^\prime\|_\infty\leq 1$, the corresponding $Q$ and $Q^\prime$ satisfies $\|Q-Q^\prime\|_\infty\leq 4\alpha L(k+1)/B$, given that $Q_0$ is fixed by the previous step. Let $f=Q-Q^\prime$, we have 
\[
\|f\|_{\HH}^2\leq (1+\beta/2)(4\alpha L(k+1)/B)^2+L^2/2\beta
\]
by the formal statement Lemma \ref{clm:sobolev}, Appendix \ref{appendix:sobolev}. We choose $1/\beta=4\alpha (k+1)/B$ and have $\|f\|_{\HH}^2\leq ((4\alpha(k+1)/B)^2+4\alpha(k+1)/B)L^2$. Now by Proposition \ref{clm:hall}, adding $g\sim\GG(0,\sigma^2 K)$ to $Q$ will make the update step $(\epsilon^\prime, \delta^\prime+\exp(-(2k-8.68\sqrt\beta\sigma)^2/2)$-differentially private, given that $\sigma\geq \sqrt{2\ln(1.25/\delta^\prime)}\|f\|_{\HH}/\epsilon^\prime$, where $K(x,y)=\exp(-4\alpha L(k+1)|x-y|/B)$ is our choice of the kernel function. Thus each iteration of update has a privacy guarantee. 

It amounts to analyze the composition of $T/B$ many iterations. It is shown by the composition theorem \cite{kairouz2013composition,mironov2017renyi} that any $\sigma\geq \sqrt{2(T/B)\ln(1.25/\delta)\ln(e+\epsilon/\delta)}\|f\|_{\HH}/\epsilon$ is sufficient. This is the best known bound, but we continue to derive the specific bound for our algorithm. Let $z$ (a function, either $Q$ or $Q^\prime$) be the output of a single update of the algorithm. Denote $v=4\alpha(k+1)/B$ and $T^\prime=T/B$ for simplicity. By Lemma~\ref{momentgenerator}, Appendix \ref{appendix:proofpos}, we have 
\begin{align*}
& \EE_{0}[(\PP_1(z\in S)/\PP_0(z\in S))^\lambda] \leq \exp(\frac{(\lambda^2+\lambda)(v^2+v)L^2}{2\sigma^2}),
\end{align*}
where $\PP_0$ and $\PP_1$ are the probability distribution of $z$ given $r$ and $r^\prime$, respectively.
This moment generating function will scale exponentially if multiple independent instances of $z$ are drawn. Namely, let $\bm{z}$ be the vector of $T^\prime$ many independent $z$, and $\PP_0^{T^\prime}$ and $\PP_1^{T^\prime}$ be its probability distribution under $r$ and $r^\prime$. We have for $\lambda>0$, $\EE_{0}[(\PP_1^{T^\prime}(\bm{z}\in S)/\PP_0^{T^\prime}(\bm{z}\in S))^\lambda] \leq \exp(\frac{(\lambda^2+\lambda)(v^2+v)L^2}{2\sigma^2}T^\prime)$.
Thus, 
\[
\exp(\lambda(\ln(\PP_1^{T^\prime}(\bm{z})/\PP_0^{T^\prime}(\bm{z}))-\epsilon)=\exp(\frac{T^\prime\|f\|_{\HH}^2}{2\sigma^2}(\lambda+\frac{1}{2}(1-\frac{2\epsilon\sigma^2}{T^\prime\|f\|_{\HH}^2}))^2-\frac{1}{4}(1-\frac{2\epsilon\sigma^2}{T^\prime\|f\|_{\HH}^2})^2).
\]
Since the argument holds for any $\lambda > 0$, let $\lambda=-\frac{1}{2}(1-\frac{2\epsilon\sigma^2}{T^\prime\|f\|_{\HH}^2})> 0$, then
\begin{align*}
\PP_1^{T^\prime}(\bm{z})-\exp(\epsilon)\PP_0^{T^\prime}(\bm{z})
& \leq \EE_0[\exp(\lambda(\ln(\PP_1^{T^\prime}(\bm{z})/\PP_0^{T^\prime}(\bm{z}))-\epsilon) + \lambda\ln\lambda-(\lambda+1)\ln(\lambda+1))] \\
& = \exp(-\frac{T^\prime\|f\|_{\HH}^2}{2\sigma^2}(1-\frac{2\sigma^2}{T^\prime\|f\|_{\HH}^2}\epsilon)^2 + \lambda\ln\lambda-(\lambda+1)\ln(\lambda+1)) \\
& \leq \exp(-\frac{T^\prime\|f\|_{\HH}^2}{2\sigma^2}(1-\frac{2\sigma^2}{T^\prime\|f\|_{\HH}^2}\epsilon)^2)(\lambda+1) \\
& = \exp(-\frac{\sigma^2}{2T^\prime\|f\|_{\HH}^2}(\epsilon-\frac{T^\prime\|f\|_{\HH}^2}{2\sigma^2})^2)\frac{1}{1+\frac{\sigma^2}{T^\prime\Delta^2}(\epsilon-\frac{T^\prime\|f\|_{\HH}^2}{2\sigma^2})}.
\end{align*}
We desire to find $\epsilon$ and $\delta$ so that this difference $\PP_1^{T^\prime}(\bm{z})-\exp(\epsilon)\PP_0^{T^\prime}(\bm{z})$ is less than $\delta$. We use similar techniques as is in the proof Theorem 4.3 of \cite{kairouz2013composition}. We choose 
\[
\epsilon=\frac{T^\prime\|f\|_{\HH}^2}{2\sigma^2}+\sqrt{\frac{2T^\prime\|f\|_{\HH}^2w}{\sigma^2}},
\]
where $w=\ln(e+\sqrt{T^\prime\|f\|_{\HH}^2/\sigma^2}/\delta)$. Thus the first term is $e^{-w}$ and the second term is $\frac{1}{1+\sqrt{\frac{2\sigma^2w}{T^\prime \|f\|_{\HH}^2}}}$. 
This ensures that $e^{-w}\leq \delta/\sqrt{T^\prime\|f\|_{\HH}^2/\sigma^2}$ and $\frac{1}{1+\sqrt{\frac{2\sigma^2w}{T^\prime\|f\|_{\HH}^2}}}\leq\frac{1}{1+\sqrt{\frac{2\sigma^2}{T^\prime\|f\|_{\HH}^2}}}$, thereby guaranteeing that
$\PP_1^{T^\prime}(\bm{z})-\exp(\epsilon)\PP_0^{T^\prime}(\bm{z}) \leq \delta$ for differential privacy. 
We solve $\epsilon=\frac{T^\prime\|f\|_{\HH}^2}{2\sigma^2}+\sqrt{\frac{2T^\prime\|f\|_{\HH}^2w}{\sigma^2}}$ and find the sufficient condition that
\begin{align*}
\sigma & = \sqrt{(2T^\prime\|f\|_{\HH}^2/\epsilon^2)\ln(e+\epsilon/\delta)} \\
& \leq \sqrt{2(T/B)((4\alpha(k+1)/B)^2+4\alpha(k+1)/B)L^2\ln(e+\epsilon/\delta)}/\epsilon,
\end{align*}
as desired. When this sufficient condition is satisfied, the approximation factor will be no larger than $\delta$ plus $J$ times the uniform bound derived above by Lemma \ref{maxofgp}. Namely, it achieves $(\epsilon, \delta+J\exp(-(2k-8.68\sqrt\beta\sigma)^2/2))$-DP.
\end{proof}

\textbf{Time complexity.}  We show that the noise adding mechanism in our algorithm is efficient. In fact, the most complex step introduced by the noise-adding is the insertion in line $15$. This can be negligible compared with the steps such as computing gradients and executing actions. A complete proof of the below proposition is given in Appendix \ref{appendix:proofpos}. 
\begin{proposition}
\label{efficiency}
The noised value function (during either training or released) in Algorithm~\ref{alg:dpql} can respond to $N_q$ queries in $\OO(N_q\ln(N_q))$ time.
\end{proposition}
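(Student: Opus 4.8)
The plan is to show that a single query can be answered in $\OO(\ln N_q)$ time, so that $N_q$ queries together cost $\OO(N_q\ln N_q)$. The essential structural fact is that the kernel $K(x,y)=\exp(-\beta|x-y|)$ is the covariance of a one-dimensional Markov (Ornstein--Uhlenbeck) Gaussian process on $[0,1]$. Consequently, when the noise value $g(s)$ must be sampled at a fresh state $s$ conditional on the already-sampled pairs $\{(s_i,z_i)\}$ stored in $\hat g_a$, the conditional law of $g(s)$ depends only on the two sampled states immediately flanking $s$ (its predecessor and successor in sorted order), and not on the entire history. This is exactly the content of Equation~\eqref{eqn:efficient} in Appendix~\ref{appendix:proofpos}: the conditional mean $\mu_{at}$ and conditional variance $\sigma d_{at}$ are closed-form expressions in the two flanking states, their stored noise values, and $\beta$.

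First I would fix the data structure. Each list $\hat g_a$ (one per action, with the $m$ actions a constant) is stored as a self-balancing binary search tree, equivalently a skip list, keyed on the state coordinate, with each node carrying its sampled noise value. This supports predecessor/successor search and insertion in $\OO(\ln n)$ time on a structure holding $n$ nodes, which implements line~$15$ (sorted insertion) together with lines~$16$--$17$.

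Second, I would account for a single query at state $s$. Locating the insertion position and retrieving the flanking nodes is one search, costing $\OO(\ln n)$. Given these two neighbors, evaluating the conditional mean and variance from Equation~\eqref{eqn:efficient} and drawing one Gaussian sample is $\OO(1)$, a constant number of exponentials, arithmetic operations, and one normal draw. Inserting the new node $(s,z)$ is again $\OO(\ln n)$. Since across all queries the structure never holds more than $N_q$ nodes, we have $n\le N_q$, so each query costs $\OO(\ln N_q)$. Summing over the $N_q$ queries, and over the constant number $m$ of action-indexed lists, yields the claimed $\OO(N_q\ln N_q)$ bound; the training and release phases are identical in this respect, since both only insert into and read from these lists.

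The main obstacle is not the bookkeeping but the justification that conditioning on only the two neighbors yields the \emph{exact} conditional distribution, i.e.\ the Markov property of the exponential-kernel process. This is what licenses the $\OO(1)$ per-query arithmetic of Equation~\eqref{eqn:efficient} in place of the $\OO(n)$ (or worse) computation that would otherwise arise from conditioning on, or inverting, the full $n\times n$ covariance matrix of a generic Gaussian process; establishing it rigorously is the only nontrivial ingredient, and I would defer its derivation to the construction of Equation~\eqref{eqn:efficient} in Appendix~\ref{appendix:proofpos}.
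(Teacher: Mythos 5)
Your proposal is correct and follows essentially the same route as the paper's proof: both reduce the claim to the Markov property of the exponential kernel---so that conditioning on only the two flanking sampled states gives the exact conditional law of $g(s)$, computable in $\OO(1)$ time via the closed form of Equation~\eqref{eqn:efficient} (justified in the paper by the banded structure of $K_{10}K_{00}^{-1}$ in Claim~\ref{clm:basic})---plus an $\OO(\ln n)$ sorted insertion per query. If anything, your explicit choice of a self-balancing search tree is the more careful implementation detail, since the paper's sorted linked list does not by itself support $\OO(\ln n)$ predecessor search, yet the paper's proof charges only $\OO(\ln N_q)$ for that step.
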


\textbf{Utility analysis.} To the best of our knowledge, there has not been a study to rigorously analyze the utility of deep reinforcement learning. In fact, in the continuous state space setting, the solution of the Bellman equation is not unique in general. Hence, it is unlikely for Q-learning to achieve a guaranteed performance, even if it converges. However, we gain insights by analyzing the algorithm's learning error in the discrete state space setting. The learning error is defined by the discrepancy between the learned state value function and the ground truth of the optimal state value function. We consider the worst case $J=1$ for utility (which is the best case for $(\epsilon, \delta+\exp(-(2k-8.68\sqrt\beta\sigma)^2/2))$-differential privacy) where the noise is the most correlated through the iterations. We show the upper bound of the utility loss, which has a limit of zero as the number of states approaches infinity. The proof involves the linear program formulation of MDP, which is given in Appendix \ref{sec:utility}.

\begin{proposition}
\label{utility}
Let $v^\prime$ and $v^\ast$ be the value function learned by our algorithm and the optimal value function, respectively. In the case $J=1$, $|S|=n<\infty$, and $\gamma<1$, the utility loss of the algorithm satisfies
\begin{equation*}
\EE[\frac{1}{n}\|v^\prime-v^\ast\|_1]\leq \frac{2\sqrt{2}\sigma}{\sqrt{n\pi}(1-\gamma)}.
\end{equation*}
\end{proposition}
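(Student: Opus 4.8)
The plan is to reduce the utility analysis to a perturbation analysis of the optimal value function, exploiting that in the discrete setting the learned function is the exact optimum of a reward-perturbed MDP. First I would pin down what $v^\prime$ is. With $J=1$ the algorithm draws a single Gaussian-process sample path, so the injected noise $g$ is fixed across all iterations and, restricted to the $n$ states, is a mean-zero Gaussian vector whose marginal variance at each state is $\sigma^2 K(s,s)=\sigma^2$. Because $g(s)$ is added to the value function and is independent of the action, it can be pulled inside the $\max_a$ of the Bellman backup; hence the fixed point $v^\prime$ of the tabular noisy update satisfies $v^\prime=\calB v^\prime+g$, where $\calB$ is the Bellman optimality operator for $(\cS,\cA,\cT,r,\gamma)$. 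Equivalently, $v^\prime$ is the optimal value function of the MDP whose reward is $r(s,a)+g(s)$. I would make this rigorous through the linear-program formulation of the MDP developed in Appendix~\ref{sec:utility}: perturbing the per-state reward by $g$ perturbs the objective vector of the primal LP, and the perturbed program is solved by $v^\prime$.

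Next I would bound $v^\prime-v^\ast$ coordinatewise. Subtracting $v^\ast=\calB v^\ast$ and using the greedy policies $\pi^\prime$ and $\pi^\ast$ of $v^\prime$ and $v^\ast$ to lower/upper bound the two backups gives the entrywise sandwich
\[
(I-\gamma P_{\pi^\ast})^{-1}g \;\le\; v^\prime-v^\ast \;\le\; (I-\gamma P_{\pi^\prime})^{-1}g .
\]
Since $(I-\gamma P_{\pi})^{-1}=\sum_{k\ge 0}\gamma^k P_{\pi}^k$ is entrywise nonnegative with every row summing to $1/(1-\gamma)$, each bounding term is $1/(1-\gamma)$ times a convex (occupancy-weighted) combination of the noise coordinates $g(s^\prime)$. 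Thus every coordinate $v^\prime(s)-v^\ast(s)$ is squeezed between two zero-mean Gaussians, each a linear functional of $g$.

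Finally I would turn this into the stated expectation. Each bounding Gaussian has the form $\tfrac{1}{1-\gamma}\langle w,g\rangle$ with $w\ge 0$ and $\mathbf 1^\top w=1$, so its variance is $\tau^2=\tfrac{\sigma^2}{(1-\gamma)^2}\,w^\top K w$, and $\EE|\NN(0,\tau^2)|=\tau\sqrt{2/\pi}$. Summing $|v^\prime(s)-v^\ast(s)|$ over the $n$ states, dividing by $n$, and taking expectations yields a bound of the form $\tfrac{2}{1-\gamma}\sqrt{2/\pi}$ times an effective per-state standard deviation, where the factor $2$ absorbs the two sides of the sandwich. The target $\tfrac{2\sqrt2\,\sigma}{\sqrt{n\pi}\,(1-\gamma)}$ then follows once this effective per-state standard deviation is shown to be at most $\sigma/\sqrt n$.

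I expect the last point to be the main obstacle. Controlling $w^\top K w$ down to order $\sigma^2/n$ is precisely the statement that the occupancy-weighted noise averages out across the $n$ states; this is clean when the occupancy mass is well spread and the coordinates are weakly correlated, but in general the occupancy measure can concentrate and the kernel $K(x,y)=\exp(-\beta|x-y|)$ induces correlations, either of which can inflate $w^\top K w$ toward $\sigma^2$. A secondary difficulty is that the upper bounding weights come from $\pi^\prime$, the greedy policy of the noised value function, which itself depends on the realization $g$; this breaks exact Gaussianity of the upper term, so I would handle it by controlling the variance $w^\top K w$ uniformly over all admissible occupancy vectors $w$ rather than treating $\langle w,g\rangle$ as a single fixed Gaussian.
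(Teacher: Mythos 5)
Your proposal takes a genuinely different route from the paper, and the place where you stall is in fact the crux of the matter. The paper argues entirely through the linear program: with $J=1$ the functional noise is equivalent to a reward perturbation $r_i'=r_i+z_i$, the dual variables are occupancy measures with $\|\sum_i\lambda_i^\ast\|_1=n/(1-\gamma)$ exactly, Lemma~\ref{claim:dualbound} shows via strong duality and H\"older that the optimizer $\lambda_i'$ of the perturbed dual loses at most $\frac{2\sqrt2\,n\sigma}{\sqrt\pi(1-\gamma)}$ of the \emph{true} dual objective, and Claim~\ref{claim:bellmandeter} identifies $\sum_i\lambda_i'^Tr_i$ with $\ee^Tv'$, where $v'$ is the value of the extracted policy evaluated under the true reward (note this differs subtly from your reading of $v'$ as the optimum of the perturbed MDP). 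Because that $v'$ satisfies $v'\le v^\ast$ entrywise, $\|v'-v^\ast\|_1=\ee^Tv^\ast-\ee^Tv'$ is a \emph{linear} functional, so the paper never needs the per-coordinate absolute-value control that your sandwich $(\II-\gamma P_{\pi^\ast})^{-1}g\le v'-v^\ast\le(\II-\gamma P_{\pi'})^{-1}g$ forces on you; that is the structural advantage of the LP route, and it also quarantines into a single H\"older step the awkward fact that the learned object depends on the noise realization, which on your side makes the $\pi'$-side bound non-Gaussian. (One cosmetic correction: Algorithm~\ref{alg:dpql} keeps a separate noise path $\hat g_a$ per action, so the discrete perturbation is per state-action pair, not your action-independent $g$; your sandwich survives with $z_{\pi}(s)=z_{\pi(s)}(s)$ in place of $g$.)

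The obstacle you name -- driving the effective per-state deviation down to $\sigma/\sqrt n$ -- is genuine and cannot be overcome, because the stated bound is false as written. Take $\gamma$ near $0$, two actions with $r_1(s)=0$ and $r_2(s)=\Delta$ at every state, and noise independent across actions: the learned policy errs at each state with probability $\Phi(-\Delta/\sqrt2\sigma)$ and pays $\Delta$, so $\EE[\frac1n\|v'-v^\ast\|_1]=\Delta\,\Phi(-\Delta/\sqrt2\sigma)\approx0.22\,\sigma$ at $\Delta=\sqrt2\sigma$, independent of $n$ (and of any cross-state noise correlation); this exceeds the claimed $\frac{2\sqrt2\sigma}{\sqrt{n\pi}}$ once $n\gtrsim 50$. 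What you should know is that the paper's own proof does not produce the $1/\sqrt n$ factor either: Lemma~\ref{claim:dualbound} gives $\EE\|v'-v^\ast\|_1\le\frac{2\sqrt2\,n\sigma}{\sqrt\pi(1-\gamma)}$, i.e.\ $\frac{2\sqrt2\,\sigma}{\sqrt\pi(1-\gamma)}$ after the $1/n$ normalization; the extra $\sqrt n$ in Proposition~\ref{utility} appears only in its statement, with no step of the final proof accounting for it (and the H\"older step inside that lemma, which needs $\|\lambda_i^\ast-\lambda_i'\|_\infty\le\frac{2}{m(1-\gamma)}$, is itself unjustified, since occupancy entries can be of order $n/(1-\gamma)$). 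So your honest stall point is not a defect of your approach relative to the paper's: pushed through with the crude bounds $w^TKw\le1$ on the $\pi^\ast$ side and $\sup_w|\langle w,g\rangle|=\|g\|_\infty$ on the $\pi'$ side, your sandwich legitimately yields a per-state loss of order $\frac{\sigma\sqrt{\ln n}}{1-\gamma}$, which up to the logarithmic factor is the $n$-independent rate this proposition can actually support, and your example-driven skepticism about any $1/\sqrt n$ improvement is correct.
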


\subsection{Discussion}

\textbf{Extending to other RL algorithms.} Our algorithm can be extended to the case where it learns both a policy function and a value function coordinately, like the actor-critic method \cite{mnih2016asynchronous}. If in the updates of the policy function, only the $Q$ function is used in the policy gradient estimation, for example $\nabla_{\theta_\pi}\ln\pi(a|s)Q(s,a)$, then the algorithm has the same privacy guarantee. Also, any post-processing of the private $Q$ function will not break the privacy guarantee. This also includes experience replay and $\epsilon$-greedy policies~\cite{mnih2015human}.

However, consider the case where the reward is directly accessed in the policy gradient estimation, for example $\nabla_{\theta_\pi}\ln\pi(a|s)A(s,a)$ in \cite{degris2012off,schulman2015high} where 
$A(s,a)=\sum_{t=0}^T(\lambda\gamma)^t(r_t+v(s_{t+1})-v(s_t))$. In this setting, the privacy guarantee no longer holds. To extend the privacy guarantee to this case, one should add noise to the policy function as well.

\textbf{Extending to high-dimensional tasks.} We assumed in our analysis $\cS=[0,1]$ for simplicity. The setting extends to any bounded $\cS\subseteq \RR$ by scaling the interval.
Our approach can also be extended to high-dimensional spaces by choosing a high-dimensional RKHS and kernel. For example, the kernel function $\exp(-\beta|x-y|)$ where $x$ and $y$ now belongs to $R^n$ and $|\cdot|$ is the Manhattan distance. It is also possible to use other RKHS and kernels for the Gaussian process noise, such as the space of band-limited functions. Other than the re-calibration of the noise level to the new kernel, the privacy guarantee in the theorem holds in general for the respective definition of $\|\cdot\|_\HH$. 
We note that the time complexity derived in Proposition \ref{efficiency} does not extend to other kernel functions, which requires the algorithm to take $\OO(N_q^2)$ in the noise generating process.

\section{Experiments}
\label{sec:numresults}

We present empirical results to corroborate our theoretical guarantees and to demonstrate the performance of the proposed algorithm on a small example. The exact MDP we use is described in Appendix \ref{appendix:experiments-env}. The implementation is attached along with the manuscript submission. 

We first plot the learning curve with a variety of noise levels and $J$ values in Figure \ref{fig:empirical-noise}. 
With the increase of the noise level, the algorithm requires more samples to achieve the same return than the non-private version.
This demonstrates the empirical privacy-utility tradeoff.
We observe that with the noise being reset every round ($J=T/B$), the algorithm is likely to converge with limited sub-optimality as desired, especially when $\sigma<0.4$. 
Therefore, as $J\exp(-(2k-8.68\sqrt\beta\sigma)^2/2)$ is exponentially small, we suggest using $J=T/B$ in practice to achieve a better utility. 

The algorithm is then compared with a variety of baseline methods where they target the same $(\epsilon, \delta)$ privacy guarantee, as shown in Figure \ref{fig:empirical-compare}(a) and \ref{fig:empirical-compare}(b). The policy evaluation method proposed by Balle, Gomrokchi and Precup \cite{balle2016differentially} is not differentially private under our context (while it is $(\epsilon, \delta)$-DP with respect to the reward sequences). We compare with it to illustrative the utility, where it is observed that their approach shares similar performance with ours. Note that studies on contextual bandits by Sajed and Sheffet \cite{sajed2019optimal} and by Shariff and Sheffet \cite{shariff2018differentially} consider an equivalently one-step MDP as \cite{balle2016differentially} and thus will yield the same method. We also compared our approach with the input perturbation method proposed by Venkitasubramaniam~\cite{venkitasubramaniam2013privacy} and the differentially private deep learning framework by Abadi et al. \cite{abadi2016deep}. Both the approaches are differentially private under our setting, while our algorithm significantly outperforms them. Especially, on the higher privacy regime $\epsilon=0.45$, both the baseline methods do not improve over the training due the the large noise level needed. The baseline implementations and the exact calculation of the parameters are detailed in Appendix \ref{appendix:experiments-baselines} and \ref{appendix:experiments-parameters}, respectively.

\begin{figure}[t!]
\vskip -0.1in
\begin{center}
\includegraphics[width=0.48\columnwidth]{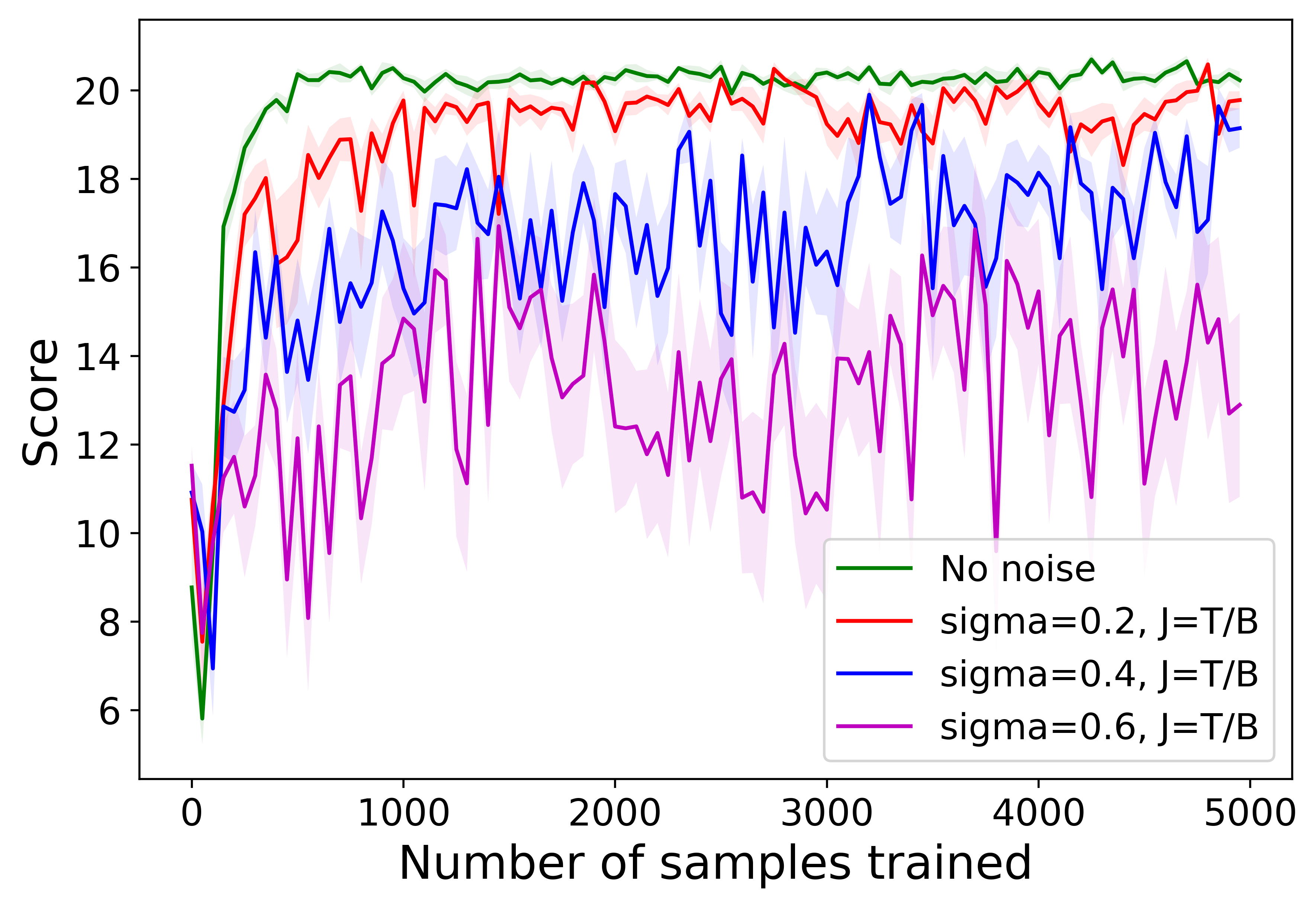}
\includegraphics[width=0.48\columnwidth]{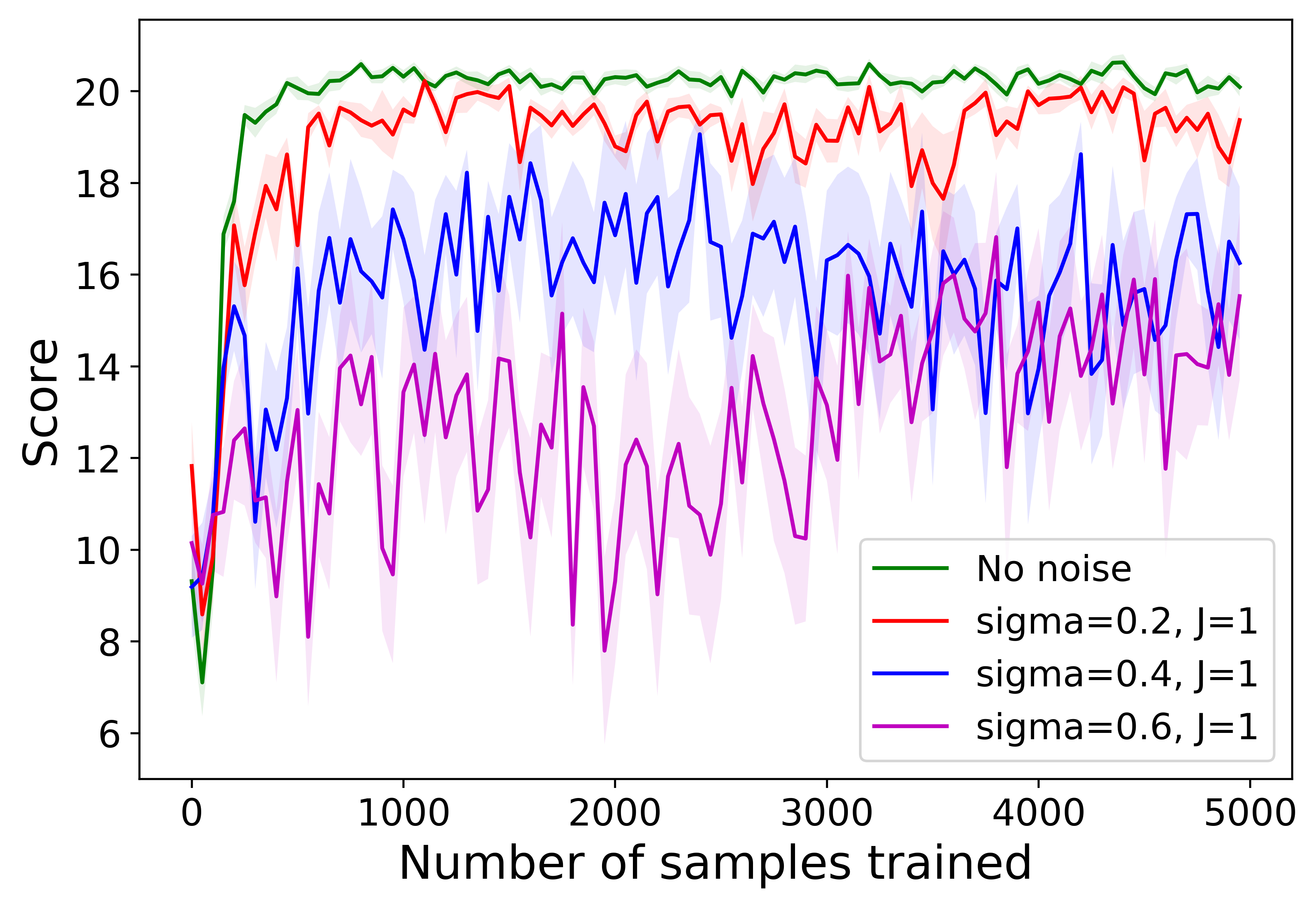}
\caption{Empirical results of Algorithm \ref{alg:dpql} on different noise levels. The y-axis denotes the return. The x-axis is the number of samples the agent has trained on. Each episode has 50 samples. The shadow denotes 1-std. The learning curves are averaged over 10 random seeds. The curves are generated without smoothing.}
\label{fig:empirical-noise}
\end{center}
\vskip -0.05in
\end{figure}

\begin{figure}[t!]
\vskip -0.05in
\begin{center}
\subfloat[Target $\epsilon=0.9$, $\delta=1\cdot 10^{-4}$]{\includegraphics[width=0.48\columnwidth]{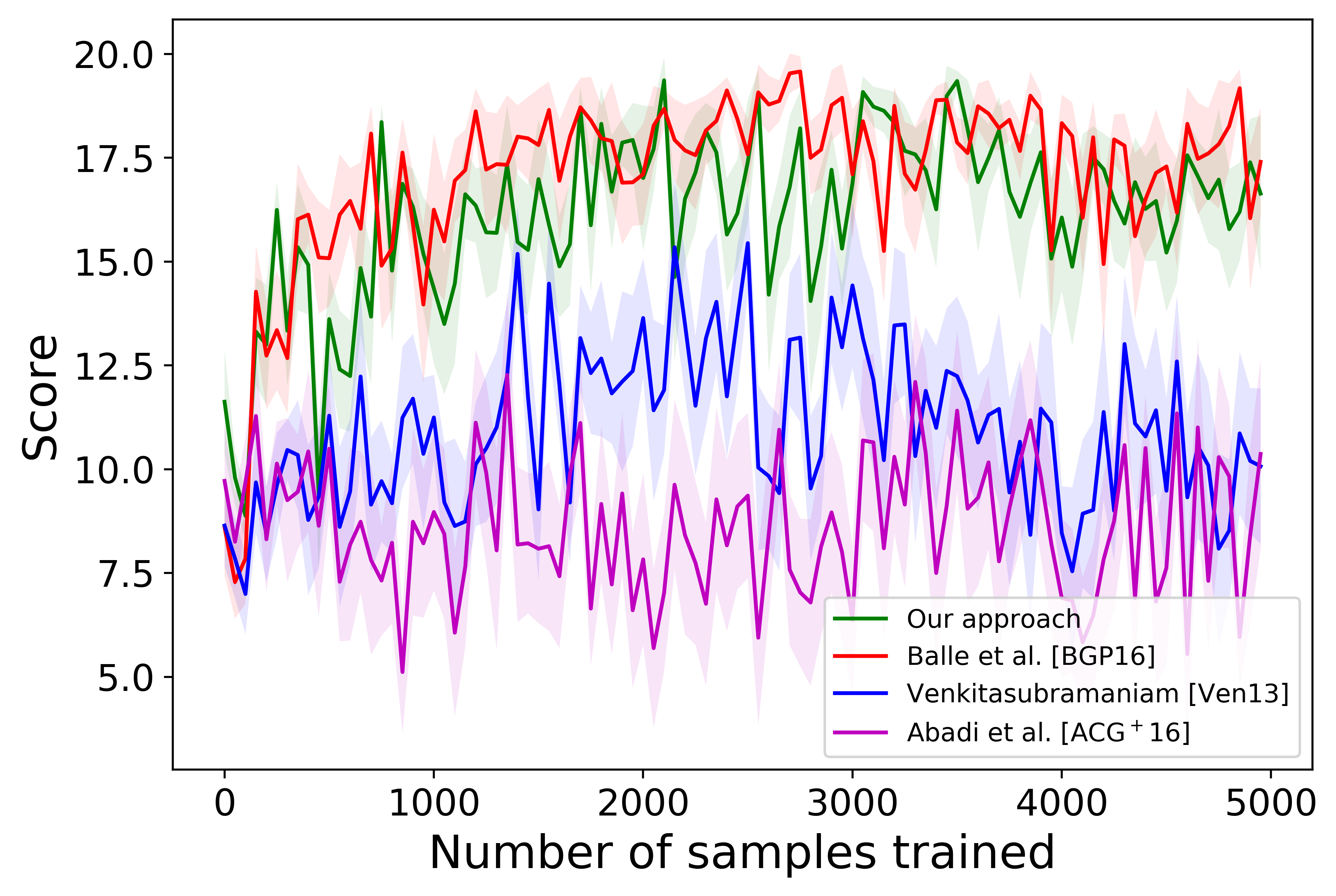}}
\subfloat[Target $\epsilon=0.45$, $\delta=1\cdot 10^{-4}$]{\includegraphics[width=0.48\columnwidth]{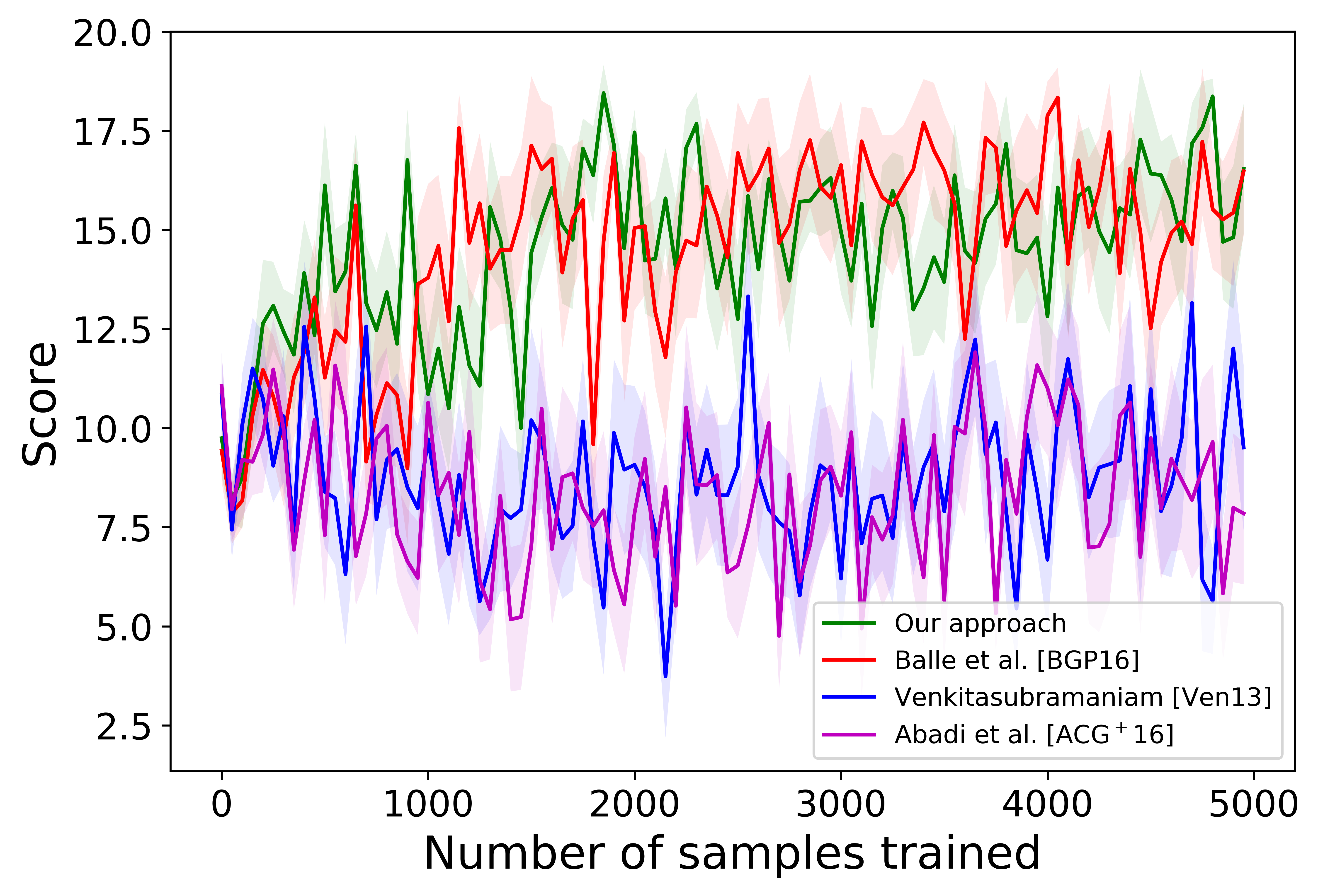}}
\caption{Empirical comparisons with other methods. Same configurations as Figure \ref{fig:empirical-noise}. }
\label{fig:empirical-compare}
\end{center}
\vskip -0.2in
\end{figure}

\section{Conclusion}
We have developed a rigorous and efficient algorithm for differentially private Q-learning in continuous state space settings. Releasing and querying the algorithm's output value function will not distinguish two neighboring reward functions. To achieve this, our method applies functional noise taken from sample paths of a Gaussian process calibrated appropriately according to sensitivity calculated under the RKHS measure. Theoretically, we show the privacy guarantee and insights into the utility analysis. Empirically, experiments corroborate our theoretical findings and show improvement over existing methods. Our approach is general enough to be extended to other domains beyond reinforcement learning.

\section*{Acknowledgement}

We would like to thank Ruitong Huang, who provides helpful insight on the composition analysis and the algorithm design, and Kry Yik Chau Lui, who points out the idea to extend our approach to high-dimensional Sobolev spaces. 

\bibliographystyle{alpha}
\bibliography{neurips_2019}

\newpage
\allowdisplaybreaks
\appendix

\section{Proof of Lemma \ref{maxofgp}, Lemma \ref{momentgenerator} and Proposition \ref{efficiency}}
\label{appendix:proofpos}

In the proofs, we will refer to some properties of the Gaussian process and its sample paths. We put those properties in Claim \ref{clm:basic}, at the end of this section. The claim and the notation used in the claim are reused in the proof of Lemma \ref{maxofgp}, Lemma \ref{momentgenerator}, and Proposition \ref{efficiency}. For simplicity, when the result can be verified immediately, such as calculating an inverse matrix, we will omit the steps.

\textbf{Notations for this section.} We investigate a dyadic rational. For a sample path $f$ and $n\geq 1$, define $f_{n0}=\{f(x_0), f(x_2),\dots, f(x_{2n})\}$ and $f_{n1}=\{f(x_1), f(x_3),\dots f(x_{2n-1})\}$, where $x_{i}=i/2n$, $i=0,\dots, 2n$. For a deterministic function $g$ defined on $[0,1]$, define $g_{n0}$ and $g_{n1}$ similarly as $g_{n0}=(\lim_{x\rightarrow x_0} g(x), \lim_{x\rightarrow x_2} g(x),\dots, \lim_{x\rightarrow x_{2n}} g(x))^T$ and $g_{n1}=(\lim_{x\rightarrow x_1} g(x), \lim_{x\rightarrow x_3} g(x),\dots \lim_{x\rightarrow x_{2n-1}} g(x))^T$. Also define $\beta_n=\beta/2n$. Our goal is to investigate the desired properties when $\lim_{n\rightarrow \infty}$.

By the definition of the Gaussian process, 
\begin{equation*}
\begin{pmatrix}
f_{n1}\\
f_{n0}
\end{pmatrix}
\sim \NN \left(0, \sigma^2
\begin{bmatrix}
K_{11} & K_{10} \\
K_{10}^T & K_{00}
\end{bmatrix}
\right), 
\end{equation*}
where $K_{11}$, $K_{10}$, and $K_{00}$ are depending on $n$. If $f\sim \GG(0, \sigma^2K)$, the conditional distribution $f_{n1}|f_{n0} \sim \NN(K_{10}K_{00}^{-1}f_{n0}, \sigma^2(K_{11}-K_{10}K_{00}^{-1}K_{10}^{T}))$. If $f\sim \GG(g, \sigma^2K)$, $f_{n1}|f_{n0} \sim \NN(g_{n1}+K_{10}K_{00}^{-1}(f_{n0}-g_{n0}), \sigma^2(K_{11}-K_{10}K_{00}^{-1}K_{10}^{T}))$.

We restate Lemma \ref{maxofgp} and prove it. Recall that the lemma proves a tail bound for the maximum of a GP sample path. 

\begin{namedtheorem}[Lemma \ref{maxofgp}]
Let $\PP$ the probability measure over $H^1$ of the sample path $f$ generated by $\GG(0,\sigma^2 K)$. Then almost surely $\max_{x\in [0,1]} f(x)$ exists, and for any $u>0$
\begin{equation*}
\PP(\max_{x\in [0,1]} f(x) \geq 8.68\sqrt{\beta}\sigma+u)\leq \exp(-u^2/2).
\end{equation*}
\end{namedtheorem}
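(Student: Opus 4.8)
The plan is to split the statement into three essentially independent pieces: almost-sure existence of the maximum, a bound $\EE[\max_{x\in[0,1]} f(x)] \le 8.68\sqrt{\beta}\sigma$ on its expectation, and a sub-Gaussian concentration of $\max f$ around that expectation. The only structural facts I really need are that $K(x,y)=\exp(-\beta|x-y|)$ is stationary with $K(x,x)=1$, so that $f$ is mean-zero with $\Var f(x)=\sigma^2$ for every $x$, and that the increment variance obeys $\EE[(f(x)-f(y))^2]=2\sigma^2(1-e^{-\beta|x-y|})\le 2\sigma^2\beta|x-y|$.

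For existence of the maximum I would invoke sample-path continuity: the increment bound together with Gaussianity gives, via Kolmogorov's criterion, a version of $f$ that is almost surely H\"older continuous on the compact interval $[0,1]$ (this is also what the embedding $f\in H^1\subset C[0,1]$ from Lemma~\ref{clm:sobolev} encodes). Continuity on a compact set makes $\max_{x\in[0,1]} f(x)$ exist almost surely. For the deviation I would then apply Gaussian concentration of the supremum (the Borell--TIS inequality): since $\sup_x \Var f(x)=\sigma^2$, one obtains a sub-Gaussian tail for $\max f$ about its mean, so that the whole lemma reduces to controlling $\EE[\max f]$.

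The substantive step is bounding $\EE[\max f]$, and here I would use exactly the dyadic refinement set up in the notation preceding the restatement. Writing each new midpoint value as its conditional mean plus an independent fluctuation, the Markov property of the exponential kernel makes the fluctuations at a fixed refinement level conditionally independent, each Gaussian with conditional variance $\sigma^2(K_{11}-K_{10}K_{00}^{-1}K_{10}^{T})=\sigma^2\tanh(\beta_n)\le\sigma^2\beta_n$; and since each conditional mean is a nonnegative, sub-convex combination of the two neighbouring coarser values, the maximum over the refined grid telescopes as $\max f \le \max(f(0),f(1)) + \sum_{m\ge 0}(\text{max fluctuation at level } m)$. Taking expectations, the level-$m$ term involves $2^{m+1}$ conditionally independent $\NN(0,s_m^2)$ entries with $s_m\le\sigma\sqrt{\beta}\,2^{-(m+1)/2}$, so its expected maximum is at most $s_m\sqrt{2(m+1)\ln 2}$; summing the convergent series $\sigma\sqrt{\beta}\sqrt{2\ln 2}\sum_{j\ge 1}\sqrt{j}\,2^{-j/2}$ and adding the coarse term $\EE[\max(f(0),f(1))]\le\sigma\sqrt{2/\pi}$ extracts the explicit constant $8.68$.

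The main obstacle is precisely that explicit constant. It requires uniform control of the per-level conditional variances across all refinement depths (not only in the asymptotic $s_m\sim\sigma\sqrt{\beta}\,2^{-m/2}$ regime), careful bookkeeping of the telescoping step when the conditional means are only sub-convex combinations of the coarser values, and a justification that $\max$ over the dyadic rationals converges to $\max$ over $[0,1]$ as $n\to\infty$ (again a consequence of the a.s. continuity above). The one remaining piece of accounting is the normalisation of the deviation scale: the concentration step naturally produces a tail at scale $\sigma$, so matching the exact form $\exp(-u^2/2)$ stated in Lemma~\ref{maxofgp} amounts to tracking that variance factor consistently through the estimate.
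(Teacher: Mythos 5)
Your proposal is correct and follows essentially the same route as the paper's proof: the same dyadic refinement exploiting the Markov property of the kernel $\exp(-\beta|x-y|)$, the same conditionally independent midpoint fluctuations with variance $\sigma^2\tanh(\beta_n)\le\sigma^2\beta_n$, the same Chernoff bound $\EE[\max_i z_i]\le\sqrt{2\ln n}\,\sigma_z$ summed over refinement levels to extract the constant $8.68\sqrt{\beta}\sigma$, and the same final concentration step (your Borell--TIS invocation is precisely the ``bounded expectation implies sub-Gaussian maximum'' result the paper cites from Lalley). The only deviations are bookkeeping ones --- you telescope pathwise and sum expected level maxima directly, whereas the paper bounds $\mu_{n+1}-\mu_n=\EE[\max(0,\max f_{2^n1}-\max f_{2^n0})]$ via an additional variance/moment-generating-function estimate --- and your closing caveat about the $\sigma$-normalisation of the tail $\exp(-u^2/2)$ applies equally to the paper's own argument.
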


\begin{proof}
We start with the base case that $f_{10}=\{f(0), f(1)\}$. $\EE[\max f_{10}]= \frac{1}{2}\EE[|f(0)-f(1)|]= \sqrt{(1-\exp(-\beta))/\pi}\sigma\leq \sqrt{\beta/\pi}\sigma$, where the second equation is due to $f(0)-f(1)\sim \NN(0, 2(1-\exp(-\beta))\sigma^2)$.

We desire to bound the expectation $\EE[\max f_{n0}]$ for all $n>1$. To complete this, we first prove a general result on the expectation of the maximum of the Gaussian variables. Let $z_1, \dots, z_n \sim \NN(0, \sigma_z^2)$ be $n$ independent Gaussian random variables, by the Chernoff bound we have
\begin{align*}
\exp(t\EE [\max_i z_i]) \leq \EE[\exp(t\max_i z_i)] = \EE[\max_i\exp(tz_i)] \leq n\exp(t^2\sigma_z^2/2).
\end{align*}
By choosing $t=\sqrt{2\ln n}/\sigma_z$, we conclude that $\EE [\max_i z_i]\leq \sqrt{2\ln n}\sigma_z$. Meanwhile, it is obvious that $\Var[\max_i z_i]\leq \sigma_z$.

Denote $\mu_n = \EE [\max f_{2^n0}]$. As $f_{2^n0}\subset f_{2^{n+1}0}$, the series $\mu_n$ is non-decreasing. We derive an upperbound of $\mu_{n+1}-\mu_n$. Let $x_i=i/2^{n+1}$ and $\xi_{i,n}=f(x_{2i-1})-\frac{\exp(-\beta_{2^n})}{1+\exp(-2\beta_{2^n})}(f(x_{2i-2})+f(x_{2i}))$, we have that $\xi_{i,n} \sim \NN(0, \sigma^2\frac{1-\exp(-2\beta_{2^n})}{1+\exp(-2\beta_{2^n})})$. Further, $\xi_{i,n}$ and $\xi_{j,n}$ are independent. It is true as $K_{10}K_{00}^{-1}$ is nonzero only at its two diagonals, which indicates that $f(x_{2j-1})$ is not depending on other point if $f(x_{2j-2})+f(x_{2j})$ is given. Thus, we have $\xi_{i,n}$ i.i.d. for $i=1,\dots,2^n$.

As we shown before the upper bound $\EE [\max_i z_i]\leq 2\sqrt{\ln n}\sigma_z$, the expectation is monotonically increasing on the variance $\sigma_z$. In general, we have $(1-\exp(cx))/(1+\exp(cx))<x$ for all positive $x$, if and only if $c\leq 2$. Thus $\frac{1-\exp(-2\beta_{2^n})}{1+\exp(-2\beta_{2^n})}<\beta_{2^n}$, and consequently we have $\EE [\max_i \xi_{i,n}]\leq \EE [\max_i \xi_{i,n}^\prime]$, for $\xi_{i,n}^\prime \sim \NN(0, \sigma^2\beta_{2^n})$.

By the inequality $\exp(-x)/(1+\exp(-2x))<\frac{1}{2}$ for $x>0$, we relax that $\frac{\exp(-\beta_n)}{1+\exp(-2\beta_n)}<\frac{1}{2}$. Thus, $f(x_{2i-1})\leq \xi_{i,n}+\frac{1}{2}(f(x_{2i-2})+f(x_{2i}))$. Taking maximum and expectation on both sides, we have 
\begin{align*}
\EE[\max_{1\leq i \leq 2^n} f(x_{2i-1})] & \leq \EE[\max_{1\leq i \leq 2^n}\xi_{i,n}+\frac{1}{2}(f(x_{2i-2})+f(x_{2i}))] \\
& \leq \EE[\max_{1\leq i \leq 2^n}\xi_{i,n}^\prime]+\frac{1}{2}\EE[\max_{1\leq i \leq 2^n}f(x_{2i-2})]+\frac{1}{2}\EE[\max_{1\leq i \leq 2^n}f(x_{2i})] \\
& \leq \sqrt{2\ln 2^n}\sqrt{\sigma^2\beta_{2^n}} + \frac{1}{2}\mu_n + \frac{1}{2}\mu_n \\
& = \mu_n + \sqrt{n\beta/2^n}\sigma.
\end{align*}

Thus, $\EE[\max f_{2^n1}-\max f_{2^n0}]\leq \sqrt{n\beta/2^n}\sigma$. Meanwhile, we have that $\Var[\max f_{2^n1}-\max f_{2^n0}]\leq \Var[\max f_{2^n1}] + \Var[\max f_{2^n0}]\leq 2(\beta/2^n)\sigma^2$. 
Let $z$ be a random variable subject to $\EE[z]=\sqrt{n\beta/2^n}\sigma$, $\Var[z]=2(\beta/2^n)\sigma^2$, then there exists a $z$ such that $\EE[\max(0, \max f_{2^n1}-\max f_{2^n0})]\leq \EE[\max(0,z)]$. We will bound this value. Denote $c=\sqrt{\beta/2^n}\sigma$ for simplicity. Then we have
\begin{align*}
\exp(\frac{1}{c}\EE[\max(z,0)]) & \leq \EE[\exp(\frac{1}{c}\max(z,0))] \\
& \leq \EE[\max(\exp(\frac{z}{c}), \exp(0))] \\
& \leq \EE[\exp(\frac{z}{c}+1)] \\
& \leq \exp(\sqrt{n}+1)+1.
\end{align*}
Consequently, 
\[
\EE[\max(z,0)] \leq (\sqrt{n}+1+\frac{1}{\exp(\sqrt{n}+1)})\sqrt{\beta/2^n}\sigma.
\]

Hence we have 
\[
\mu_{n+1}-\mu_n=\EE[\max(0, \max f_{2^n1}-\max f_{2^n0})]\leq (\sqrt{n}+1+\frac{1}{\exp(\sqrt{n}+1)})\sqrt{\beta/2^n}\sigma.
\]
By induction, we have $\mu_n \leq \sqrt{\beta/\pi}\sigma+\sum_{i=0}^\infty (\sqrt{i}+1+\frac{1}{\exp(\sqrt{i}+1)})\sqrt{\beta/2^i}\sigma < 8.68\sqrt{\beta}\sigma$, for any integer $n$. Since the dyadic rational $\cup_{i=0}^{\infty} f_{2^i0}$ is dense and compact on $[0,1]$ and $f$ is continuous with probability one, $\EE[\max f]$ shares the same upper bound of $\mu_n$ almost surely. It is shown in Theorem 3 of \cite{lalley2018gp}, that if the expectation $\EE[\max f]$ is bounded then $\max f$ is sub-Gaussian. The lemma follows.
\end{proof}

The following Lemma \ref{momentgenerator} shows an upper bound of the moment generating function of the Gaussian process with the kernel introduced in Lemma \ref{clm:sobolev}. The lemma will be used to compose the Gaussian process mechanisms in Theorem \ref{thm:dpql}.

\begin{lemma}
\label{momentgenerator}
Let $g\in H^1$ be continuous almost everywhere. Denote $\PP_0(f)$ and $\PP_1(f)$ be the probability measure over $H^1$ of the sample path generated by $\GG(0,\sigma^2 K)$ and $\GG(g,\sigma^2 K)$, respectively. The sample path $f\sim\PP_0$ satisfies, for any $\lambda>0$ and any set $S$ of sample paths,
\begin{equation*}
\EE_{0}[(\PP_1(f\in S)/\PP_0(f\in S))^\lambda]\leq \exp((\lambda^2+\lambda)\|g\|_{\HH}^2/2\sigma^2).
\end{equation*}
\end{lemma}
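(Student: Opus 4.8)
The plan is to reduce the infinite-dimensional statement to a one-dimensional Gaussian moment computation by working on the dyadic grid already set up above, and then pass to the limit. Concretely, I would restrict both measures to the finite collection of evaluations $f_n := (f_{n0},f_{n1})$ at the dyadic points $x_i=i/2^n$. On these coordinates $\PP_0$ becomes $\NN(0,\sigma^2 K_n)$ and $\PP_1$ becomes $\NN(g_n,\sigma^2 K_n)$, where $K_n$ is the Gram matrix of the kernel $K(x,y)=\exp(-\beta|x-y|)$ on the grid and $g_n$ is the vector of values of $g$. The quantity $\EE_0[(\PP_1/\PP_0)^\lambda]$ is exactly the moment generating function, at $\lambda$, of the privacy-loss random variable $\ell_n(f):=\ln(d\PP_1/d\PP_0)(f_n)$, i.e.\ a R\'enyi-divergence-type quantity between two Gaussians that share a covariance and differ only in mean.

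The second step is the finite-dimensional computation. For two Gaussians with common covariance the log-density ratio is affine in the sample: expanding $(x-g_n)^\top(\sigma^2 K_n)^{-1}(x-g_n)-x^\top(\sigma^2 K_n)^{-1}x$ gives
\[
\ell_n(x)=\tfrac{1}{\sigma^2}\,g_n^\top K_n^{-1}x-\tfrac{1}{2\sigma^2}\,v_n,\qquad v_n:=g_n^\top K_n^{-1}g_n .
\]
Being an affine functional of the Gaussian vector $x$, $\ell_n$ is itself a one-dimensional Gaussian, and a direct mean/variance calculation shows $\ell_n\sim\NN(-v_n/2\sigma^2,\ v_n/\sigma^2)$ under $\PP_0$ (under $\PP_1$ the mean flips sign). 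Applying the scalar identity $\EE[e^{\lambda Y}]=\exp(\lambda\mu_Y+\tfrac12\lambda^2\sigma_Y^2)$ then yields $\EE_0[e^{\lambda\ell_n}]=\exp((\lambda^2-\lambda)v_n/2\sigma^2)$. Since $\lambda>0$ and $v_n\ge 0$, this is at most $\exp((\lambda^2+\lambda)v_n/2\sigma^2)$; the looser ``$+\lambda$'' form is exactly the symmetric bound that also dominates the $\PP_1$-side loss, which is what the composition step in Theorem~\ref{thm:dpql} needs.

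The final step is to let $n\to\infty$. The crucial fact is $v_n=g_n^\top K_n^{-1}g_n\to\|g\|_{\HH}^2$. Here I would exploit the Gauss--Markov (stationary) structure of the exponential kernel: as already observed in the proof of Lemma~\ref{maxofgp}, on a grid $K_n^{-1}$ is tridiagonal, so $v_n$ telescopes into an explicit sum over adjacent pairs $(g(x_i),g(x_{i+1}))$. Recognizing this sum as a Riemann-type discretization of the Sobolev inner product attached to $K$ in Lemma~\ref{clm:sobolev}, it converges to $\|g\|_{\HH}^2$ for every $g\in H^1$ (finite precisely because $\int_0^1(\partial g)^2<\infty$). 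To transfer the finite-dimensional bound to the function-space ratio, I would note that the coordinate $\sigma$-fields generated by successively finer dyadic grids form a filtration, that the restricted densities $d\PP_1/d\PP_0$ form a martingale under $\PP_0$ converging a.s.\ and in $L^1$ to the genuine Radon--Nikodym derivative (whose existence is guaranteed by equivalence of the two Gaussian measures, since $g$ lies in the Cameron--Martin space $H^1$), and then pass to the limit via Fatou together with the uniform bound $\exp((\lambda^2+\lambda)v_n/2\sigma^2)$ and $v_n\to\|g\|_{\HH}^2$.

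The main obstacle is this last limiting argument rather than any single computation. Establishing $v_n\to\|g\|_{\HH}^2$ requires correctly matching the discrete tridiagonal quadratic form to the continuum Sobolev norm, which is careful but routine once the explicit inverse is in hand; the more delicate point is justifying that the infinite-dimensional moment equals the limit of the finite-dimensional ones, which hinges on equivalence of the two Gaussian measures under the Cameron--Martin shift by $g$ and on the martingale convergence of the projected densities. This is what makes the ``reduce to a scalar Gaussian'' heuristic rigorous.
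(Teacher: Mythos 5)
Your proposal is correct, and it reaches the same finite-dimensional quantity as the paper but by a genuinely different decomposition. The paper exploits the Gauss--Markov structure directly at the density level: it factors the joint density of the dyadic evaluations into a product of conditional densities across successive scales (midpoints given endpoints), observes that the resulting innovations are independent scalar Gaussians, computes each conditional moment generating function explicitly, and multiplies them, producing after a lengthy telescoping computation the discrete quadratic form whose Riemann limit is controlled by $\|g\|_{\HH}^2$. You instead use the standard Gaussian-shift identity: for two Gaussians with common covariance $\sigma^2 K_n$ the log-density ratio is affine in the sample, hence the privacy loss is itself a scalar Gaussian $\NN(-v_n/2\sigma^2,\,v_n/\sigma^2)$ with $v_n=g_n^\top K_n^{-1}g_n$, which yields $\EE_0[e^{\lambda\ell_n}]=\exp((\lambda^2-\lambda)v_n/2\sigma^2)$ in one line; the tridiagonal structure of $K_n^{-1}$ enters only when identifying $v_n$ with the discretized Sobolev norm (and in fact the standard interpolation property $v_n\le\|g\|_{\HH}^2$, monotone in grid refinement, already suffices for the upper bound, so full convergence is not even needed). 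Your treatment of the passage to the path measure is also more rigorous than the paper's: where the paper simply asserts ``the lemma follows by letting $n\to\infty$,'' you justify it via equivalence of the two measures under the Cameron--Martin shift (legitimate since $g\in H^1$ is the Cameron--Martin space of $\GG(0,\sigma^2K)$), martingale convergence of the restricted densities along the dyadic filtration (which generates the full $\sigma$-field because sample paths are continuous and the dyadics are dense), and Fatou with the uniform bound. What the paper's longer route buys is the explicit kernel-matrix algebra (Claim \ref{clm:basic}) and exact conditional means and variances, which are reused elsewhere, e.g.\ for the efficient sampling formulas in Proposition \ref{efficiency}; what your route buys is brevity, a standard R\'enyi-divergence-between-Gaussians argument, and a sound limiting step.
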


\begin{proof}




By the conditional distribution $f_{n1}|f_{n0}$ in Claim \ref{clm:basic}, we have the ratio of the probability density
\begin{align*}
\ln\frac{\PP_1(f_{n1}|f_{n0})}{\PP_0(f_{n1}|f_{n0})} & = -\frac{(g_{n1}-K_{10}K_{00}^{-1}g_{n0})^T(K_{11}-K_{10}K_{00}^{-1}K_{10}^{T})^{-1}(g_{n1}-K_{10}K_{00}^{-1}g_{n0})}{2\sigma^2} \\
& \quad + \frac{2(g_{n1}-K_{10}K_{00}^{-1}g_{n0})^T(K_{11}-K_{10}K_{00}^{-1}K_{10}^{T})^{-1}(f_{n1}-K_{10}K_{00}^{-1}f_{n0})}{2\sigma^2} \\
& = -\frac{1+\exp(-2\beta_n)}{2\sigma^2(1-\exp(-2\beta_n))}\sum_{i=1}^n (g(x_{2i-1})-\frac{\exp(-\beta_n)}{1+\exp(-2\beta_n)}(g(x_{2i-2})+g(x_{2i})))^2 \\
& \quad + \frac{1+\exp(-2\beta_n)}{\sigma^2(1-\exp(-2\beta_n))}\sum_{i=1}^n (g(x_{2i-1})-\frac{\exp(-\beta_n)}{1+\exp(-2\beta_n)}(g(x_{2i-2})+g(x_{2i}))) \\
& \quad \times (f(x_{2i-1})-\frac{\exp(-\beta_n)}{1+\exp(-2\beta_n)}(f(x_{2i-2})+f(x_{2i}))).
\end{align*}
Note that in the equation above $f(x_{2i-1})-\frac{\exp(-\beta_n)}{1+\exp(-2\beta_n)}(f(x_{2i-2})+f(x_{2i}))$ and $f(x_{2j-1})-\frac{\exp(-\beta_n)}{1+\exp(-2\beta_n)}(f(x_{2j-2})+f(x_{2j}))$ are i.i.d. with the distribution $\NN(0, \sigma^2\frac{1-\exp(-2\beta_n)}{1+\exp(-2\beta_n)})$. Hence, We have 
\begin{align*}
& \quad \EE_0[\exp(\lambda\ln\frac{\PP_1(f_{n1}|f_{n0})}{\PP_0(f_{n1}|f_{n0})})] \\ & = \exp(-\lambda\frac{1+\exp(-2\beta_n)}{2\sigma^2(1-\exp(-2\beta_n))}\sum_{i=1}^n(g(x_{2i-1})-\frac{\exp(-\beta_n)}{1+\exp(-2\beta_n)}(g(x_{2i-2})+g(x_{2i})))^2) \\
& \quad + \sum_{i=1}^n (\frac{\lambda^2(1+\exp(-2\beta_n))^2}{2\sigma^4(1-\exp(-2\beta_n))^2}(g(x_{2i-1})-\frac{\exp(-\beta_n)}{1+\exp(-2\beta_n)}(g(x_{2i-2})+g(x_{2i})))^2 \\
& \quad \times \sigma^2\frac{1-\exp(-2\beta_n)}{1+\exp(-2\beta_n)}) \\
& = \exp((\lambda^2-\lambda)\frac{1+\exp(-2\beta_n)}{2\sigma^2(1-\exp(-2\beta_n))}\sum_{i=1}^n(g(x_{2i-1})-\frac{\exp(-\beta_n)}{1+\exp(-2\beta_n)}(g(x_{2i-2})+g(x_{2i})))^2).
\end{align*}
Meanwhile, 
\begin{align*}
\EE_0[\exp(\lambda\ln\frac{\PP_1(f_{10})}{\PP_0(f_{10})})] & = \EE_0[\exp(2\lambda\frac{f(0)g(0)+f(1)g(1)-\exp(-\beta)(f(0)g(1)+f(1)g(0))}{2\sigma^2(1-\exp(-2\beta))} \\
& \quad - \lambda\frac{g(0)^2+g(1)^2-2\exp(-\beta)g(0)g(1)}{2\sigma^2(1-\exp(-2\beta))})] \\
& = \EE_0[\exp(2\lambda\frac{(g(0)-\exp(-\beta)g(1))f(0)+(g(1)-\exp(-\beta)g(0))f(1)}{2\sigma^2(1-\exp(-2\beta))} \\
& \quad - \lambda\frac{g(0)^2+g(1)^2-2\exp(-\beta)g(0)g(1)}{2\sigma^2(1-\exp(-2\beta))})] \\
& = \exp(\frac{4\lambda^2}{8\sigma^4(1-\exp(-2\beta))^2}\Var{((g(0)-\exp(-\beta)g(1))f(0)+(g(1)-\exp(-\beta)g(0))f(1))}) \\
& \quad - \lambda\frac{g(0)^2+g(1)^2-2\exp(-\beta)g(0)g(1)}{2\sigma^2(1-\exp(-2\beta))}) \\
& = \exp(\frac{4\lambda^2}{8\sigma^4(1-\exp(-2\beta))^2}(\sigma^2(g(0)-\exp(-\beta)g(1))^2+\sigma^2(g(1)-\exp(-\beta)g(0))^2 \\
& \quad + 2\exp(-\beta)\sigma^2(g(0)-\exp(-\beta)g(1))(g(1)-\exp(-\beta)g(0))) \\
& \quad - \lambda\frac{g(0)^2+g(1)^2-2\exp(-\beta)g(0)g(1)}{2\sigma^2(1-\exp(-2\beta))}) \\
& = \exp((\lambda^2-\lambda)\frac{g(0)^2+g(1)^2-2\exp(-\beta)g(0)g(1)}{2\sigma^2(1-\exp(-2\beta))})). \\
\end{align*}
Finally, with $(\varheart)$ follows by cancelling the $g(i-1)g(i)$ terms, $(\vardiamond)$ by relaxing the exponential terms, and $z(i)$ indicating number of bits before the $i$'s last $1$-bit, we have 
\begin{align*}
& \EE_0[\exp(\lambda\ln\frac{\PP_1(f_{2^n0})}{\PP_0(f_{2^n0})})] = \EE_0(\exp(\lambda\ln\frac{\PP_1(f_{2^{n-1}1}|f_{2^{n-1}0})\cdot\dots\cdot\times\PP_1(f_{2^01}|f_{2^00})\PP_1(f_{2^00})}{\PP_0(f_{2^{n-1}1}|f_{2^{n-1}0})\cdot\dots\cdot\times\PP_0(f_{2^01}|f_{2^00})\PP_0(f_{2^00})})) \\ 
& = \EE_0[\exp(\lambda\ln\frac{\PP_1(f_{10})}{\PP_0(f_{10})})]\prod_{k=0}^{n-1}\EE_0[\exp(\lambda\ln\frac{\PP_1(f_{2^k1}|f_{2^k0})}{\PP_0(f_{2^k1}|f_{2^k0})})] \\
& = \exp(\frac{\lambda^2-\lambda}{2\sigma^2}\sum_{k=0}^{n-1}\sum_{i=1}^{2^k}\frac{1+\exp(-2\beta/2^k)}{1-\exp(-2\beta/2^k)}(g((2i-1)2^{-(k+1)}) \\
& \quad -\frac{\exp(-\beta/2^k)}{1+\exp(-2\beta/2^k)}(g((2i-2)2^{-(k+1)})+g((2i)2^{-(k+1)}))^2 \\
& \quad + \frac{(\lambda^2-\lambda)}{2\sigma^2}\frac{1}{1-\exp(-2\beta)}(g(0)^2+g(1)^2-2\exp(-\beta)g(0)g(1))) \\
& \stackrel{(\varheart)}{=} \exp(\frac{\lambda^2-\lambda}{2\sigma^2}(\frac{g(0)^2+g(1)^2}{1-\exp(-2\beta)}+\sum_{k=0}^{n-1}\sum_{i=1}^{2^k} \frac{1+\exp(-2\beta/2^k)}{1-\exp(-2\beta/2^k)}(g((2i-1)2^{-(k+1)})^2 \\
& \quad + \frac{\exp(-2\beta/2^k)}{(1+\exp(-2\beta/2^k))^2}(g((2i-2)2^{-(k+1)})^2+g((2i)2^{-(k+1)})^2)) \\
& \quad - \sum_{i=1}^{2^{n}}\frac{2\exp(-\beta/2^{n-1})}{1-\exp(-2\beta/2^{n-1})}g((i-1)2^{-n})g(i2^{-n}))) \\
& = \exp(\frac{\lambda^2-\lambda}{2\sigma^2} (\sum_{i=0}^{2^n} (\frac{1+\exp(-2\beta/2^{z(i)})}{1-\exp(-2\beta/2^{z(i)})}+2\sum_{k=z(i)+1}^{n-1}\frac{\exp(-2\beta/2^k)}{1-\exp(-4\beta/2^k)}-\frac{\exp(-\beta/2^{n-1})}{1-\exp(-2\beta/2^{n-1})})g(i2^{-n})^2 \\
& \quad + \sum_{i=1}^{2^{n}}\frac{\exp(-\beta/2^{n-1})}{1-\exp(-2\beta/2^{n-1})}g(((i-1)2^{-n})-g(i2^{-n}))^2)+\frac{g(0)^2+g(1)^2}{1-\exp(-2\beta)}) \\
& \stackrel{(\vardiamond)}{\leq} \exp(\frac{\lambda^2-\lambda}{2\sigma^2} (\sum_{i=0}^{2^n} (\frac{1+\exp(-2\beta/2^{z(i)})}{1-\exp(-2\beta/2^{z(i)})}+\sum_{k=z(i)+1}^{n-1}2^k/2\beta-2^{n-1}/2\beta)g(i2^{-n})^2 \\
& \quad + 2^{n}/2\beta \times (g(((i-1)2^{-n})-g(i2^{-n}))^2)+\frac{1}{2}(g(0)^2+g(1)^2)) \\
& \leq \exp(\frac{\lambda^2-\lambda}{2\sigma^2} (\sum_{i=0}^{2^n}\beta^22^{-n}g(i2^{-n})^2+\frac{1}{2\beta}2^{-n}((g((i-1)2^{-n})-g(i2^{-n}))/2^{-n})^2+\frac{1}{2}(g(0)^2+g(1)^2)).
\end{align*}
The lemma follows immediately by letting $\lim n\rightarrow\infty$.
\end{proof}

We restate Proposition \ref{efficiency} and prove it. Recall that $g$ is the sample path and $\hat g$ the linked list to estimate its evaluations.

\begin{namedtheorem}[Proposition \ref{efficiency}]
The noised value function (during either training or released) in Algorithm~\ref{alg:dpql} can respond to $N_q$ queries in $\OO(N_q\ln(N_q))$ time.
\end{namedtheorem}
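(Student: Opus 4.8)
The plan is to exploit the Markov structure of the Gaussian process induced by the exponential kernel $K(x,y)=\exp(-\beta|x-y|)$, which is the sole reason the per-query cost is logarithmic rather than linear. Concretely, I would show that to sample the noise value $g(s)$ at a fresh query point $s$ it suffices to condition on the two already-evaluated neighbors of $s$ (the closest stored points on each side), rather than on all previously evaluated points. Given this, each query reduces to a neighbor search, a constant-time one-dimensional Gaussian sample, and an insertion, so maintaining the evaluated points in a balanced search structure yields the claimed bound.

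First I would establish the conditional-independence (Markov) property of the process. Ordering the stored evaluation points as $x_1<\dots<x_k$ and inserting a query $s$ with $x_i<s<x_{i+1}$, I would show that the precision (inverse covariance) matrix of the evaluated points is tridiagonal, equivalently that $K_{10}K_{00}^{-1}$ has nonzero entries only on the two relevant diagonals --- exactly the fact already invoked in the proof of Lemma~\ref{maxofgp}. This gives that $g(s)\mid g(x_1),\dots,g(x_k)$ depends only on $g(x_i)$ and $g(x_{i+1})$ (or on the single nearest point when $s$ lies beyond all stored points). From the standard Gaussian conditioning formula and the closed form of $K$, I would then read off the one-dimensional conditional law $\NN(\mu_{at},\sigma d_{at})$ of Equation~\eqref{eqn:efficient}, whose mean and variance involve only the two neighboring values and their distances to $s$, and are therefore computed in $\OO(1)$ time once the neighbors are known.

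Next I would account for the data-structure operations. Keying the evaluated pairs $(s,z)$ by $s$ in a balanced search structure (a balanced BST or skip list maintaining the monotone list of line~15) supports, per query: locating the predecessor and successor of $s$ in $\OO(\ln N_q)$ time, sampling $g(s)$ from the conditional law in $\OO(1)$ time, and inserting the new pair in $\OO(\ln N_q)$ time. Summing over the $N_q$ queries, and noting the stored set never exceeds $N_q$ points, gives total time $\OO(N_q\ln N_q)$; the same bound covers both the training phase (queries at the visited states $s_t$) and the release phase (arbitrary test queries), since both only ever invoke this insert-and-sample primitive.

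The main obstacle is the tridiagonality step, since all of the efficiency hinges on conditioning collapsing to the two neighbors; I would justify it rigorously for the exponential kernel, either by explicitly inverting the $k\times k$ covariance matrix to exhibit its tridiagonal structure or by appealing to the consistency of the conditional law under insertion of intermediate points. Once this is in place the remainder is the routine balanced-tree accounting sketched above. I would also record, as the paper notes in its discussion, that the argument is special to the exponential kernel: for a general kernel the precision matrix is dense, the per-query conditioning reverts to $\OO(N_q)$, and the overall cost becomes $\OO(N_q^2)$.
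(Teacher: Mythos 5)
Your proposal is correct and follows essentially the same route as the paper's proof: both hinge on the Markov property of the exponential-kernel process, expressed as $K_{10}K_{00}^{-1}$ having only two consecutive nonzero entries per row (so conditioning collapses to the two stored neighbors, giving the constant-time conditional law of Equation~\eqref{eqn:efficient}), followed by $\OO(\ln N_q)$ sorted-structure maintenance per query. Your use of an explicitly balanced search structure is if anything a slightly more careful accounting than the paper's sorted linked list, but the argument is the same.
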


\begin{proof}
The value function $Q(\cdot)$ is deterministic. It amounts to show that there is an approach to efficiently estimate the Gaussian process sample path $g$.

We consider the $n$-th query, where the previous $n-1$ queries have been computed and stored. Let $x_1,\dots, x_{n-1}$ be the previous queries and $g(x_1),\dots, g(x_{n-1})$ the known value on the sample path. When $x_1\leq, \dots, \leq x_{n-1}$, $K_{10}K_{00}^{-1}$ has only two non-zero elements in each row, where the two elements are consecutive. This property holds per the computation in Claim \ref{clm:basic}, even if $x_i\neq i/2n$. In this case, the two elements need not to be one, but other elements must be zero.

Therefore, the mean $\mu_{at}=K_{10}K_{00}^{-1}$ and the variance $d_{at}=K_{11}-K_{10}K_{00}^{-1}K_{10}^{T}$ can be computed using these two elements in constant time. The exact calculation is shown below. Thus the noised value function can be calculated in $\OO(\ln(N_q))$ time, which is the time complexity of inserting $x_n$ into sorted list $x_1\leq, \dots, \leq x_{n-1}$. The proposition follows.

They exact value of $\mu_{at}$ and $d_{at}$ can be verified immediately so we omit the steps of the derivation. Denote, in the linked list $\hat{g}$ in the algorithm, $s^+$ as the element $s$ links to and $s^-$ as the element that links to $s$. Treat $s^+=1$ and $s^-=0$ for non-existence. When $\hat{g}_k[b]=(s,z)$, denote $\hat{g}_k(s)=z$. Using the arguments above, we have
\small
\begin{align}
\label{eqn:efficient}
\begin{split}
\mu_{at} & = \frac{(\exp(\beta(s-s^-))-\exp(-\beta(s-s^-)))\hat{g}(s^-)}{\exp(\beta(s^+-s^-))-\exp(-\beta(s^+-s^-))} + \frac{(\exp(\beta(s^+-s))-\exp(-\beta(s^+-s)))\hat{g}(s^+)}{\exp(\beta(s^+-s^-))-\exp(-\beta(s^+-s^-))} \\
d_{at} & = -\frac{(\exp(\beta(s-s^-))-\exp(-\beta(s-s^-)))\exp(\beta(s-s^-))}{\exp(\beta(s^+-s^-))-\exp(-\beta(s^+-s^-))} \\
& \quad - \frac{(\exp(\beta(s^+-s))-\exp(-\beta(s^+-s)))\exp(\beta(s^+-s))}{\exp(\beta(s^+-s^-))-\exp(-\beta(s^+-s^-))}+1.
\end{split}
\end{align}
\normalsize
\end{proof}

\begin{claim}
\label{clm:basic}
The following equations hold.
\begin{equation*}
K_{11} = 
\begin{bmatrix}
1 & \exp(-2\beta_n) & \cdots & \exp(-(2n-2)\beta_n) \\
\exp(-2\beta_n) & 1 & \cdots & \exp(-(2n-4)\beta_n) \\
\vdots & \vdots & \ddots & \vdots \\
\exp(-(2n-2)\beta_n) & \exp(-(2n-4)\beta_n) & \cdots & 1
\end{bmatrix},
\end{equation*}
\begin{equation*}
K_{10} = 
\begin{bmatrix}
\exp(-\beta_n) & \exp(-\beta_n) & \cdots & \exp(-(2n-1)\beta_n) \\
\exp(-3\beta_n) & \exp(-\beta_n) & \cdots & \exp(-(2n-3)\beta_n) \\
\vdots & \vdots & \ddots & \vdots \\
\exp(-(2n-1)\beta_n) & \exp(-(2n-3)\beta_n) & \cdots & \exp(-\beta_n)
\end{bmatrix},
\end{equation*}
\begin{equation*}
K_{00} = 
\begin{bmatrix}
1 & \exp(-2\beta_n) & \cdots & \exp(-2n\beta_n) \\
\exp(-2\beta_n) & 1 & \cdots & \exp(-(2n-2)\beta_n) \\
\vdots & \vdots & \ddots & \vdots \\
\exp(-2n\beta_n) & \exp(-(2n-2)\beta_n) & \cdots & 1
\end{bmatrix}.
\end{equation*}
\small
\begin{equation*}
K_{00}^{-1} = \frac{1}{1-\exp(-4\beta_n)}
\begin{bmatrix}
1 & -\exp(-2\beta_n) & 0 & \cdots & 0 \\
-\exp(-2\beta_n) & 1+\exp(-4\beta_n) & -\exp(-2\beta_n) & \cdots & 0\\
0 & -\exp(-2\beta_n) & 1+\exp(-4\beta_n) & \cdots & 0\\
\vdots & \vdots & \vdots & \ddots & \vdots \\
0 & 0 & 0 & \cdots & -\exp(-2\beta_n) \\
0 & 0 & 0 & \cdots & 1
\end{bmatrix},
\end{equation*}
\begin{equation*}
K_{10}K_{00}^{-1} = \frac{\exp(-\beta_n)}{1+\exp(-2\beta_n)}
\begin{bmatrix}
1 & 1 & 0 & \cdots & 0 & 0 \\
0 & 1 & 1 & \cdots & 0 & 0 \\
\vdots & \vdots & \vdots & \ddots & \vdots & \vdots \\
0 & 0 & 0 & \cdots & 1 & 0 \\
0 & 0 & 0 & \cdots & 1 & 1
\end{bmatrix},
\end{equation*}
\normalsize
\begin{align*}
& K_{10}K_{00}^{-1}K_{10}^T = \frac{\exp(-\beta_n)}{1+\exp(-2\beta_n)}\cdot \\
& 
\begin{bmatrix}
2z(1) & z(1)+z(3) & \cdots & z(2n-1)+z(2n-3) \\
z(1)+z(3) & 2z(1) & \cdots & z(2n-3)+z(2n-5) \\
\vdots & \vdots & \ddots & \vdots \\
z(2n-3)+z(2n-5) & z(2n-5)+z(2n-7) & \cdots & z(1)+z(3) \\
z(2n-1)+z(2n-3) & z(2n-3)+z(2n-5) & \cdots & 2z(1)
\end{bmatrix},
\end{align*}
where we write $z(x)=\exp(-x\beta_n)$ for simplicity.
\normalsize
\begin{equation*}
K_{11}-K_{10}K_{00}^{-1}K_{10}^T=\frac{1-\exp(-2\beta_n)}{1+\exp(-2\beta_n)}\II.
\end{equation*}
\end{claim}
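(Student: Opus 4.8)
The plan is to read every one of these identities off the single structural fact that the kernel $K(x,y)=\exp(-\beta|x-y|)$ makes the Gaussian process a stationary Markov (Ornstein--Uhlenbeck) process, so that conditioning on a set of grid points decouples along nearest neighbors. Writing $\rho=\exp(-2\beta_n)$ and using $K(x_i,x_j)=\exp(-|i-j|\beta_n)$ together with the even/odd index split, the three covariance blocks are immediate: consecutive same-parity points are two grid steps apart, so $(K_{11})_{pq}=(K_{00})_{pq}=\exp(-2|p-q|\beta_n)$, while the odd-to-even distances give $(K_{10})_{pq}=\exp(-|2(p-q)+1|\beta_n)$. These match the displayed matrices entry by entry, so I would state them without further computation.

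The first genuine step is the inverse $K_{00}^{-1}$. Since $K_{00}$ has entries $\rho^{|p-q|}$, it is a Kac--Murdock--Szeg\H{o} (stationary AR$(1)$ covariance) matrix whose inverse is tridiagonal with prefactor $1/(1-\rho^2)=1/(1-\exp(-4\beta_n))$, diagonal $1,1+\rho^2,\dots,1+\rho^2,1$, and off-diagonals $-\rho$. I would verify this by forming $K_{00}K_{00}^{-1}$ directly: each diagonal entry collapses to $1$, and every off-diagonal entry cancels to $0$ by the three-term relation among the nonzero entries of a column of the tridiagonal factor. This is exactly the ``verified immediately'' computation that the section allows one to omit.

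With $K_{00}^{-1}$ in hand, the main computation is $K_{10}K_{00}^{-1}$. Multiplying a geometrically decaying row $(\exp(-|2(p-q)+1|\beta_n))_q$ of $K_{10}$ against the tridiagonal inverse, all contributions cancel except at the two columns $p$ and $p+1$ straddling the odd point $x_{2p-1}$, leaving the two consecutive weights $\exp(-\beta_n)/(1+\exp(-2\beta_n))$ shown. This is the analytic face of the Markov property: the conditional mean of $f(x_{2p-1})$ given all even points depends only on the neighbors $f(x_{2p-2})$ and $f(x_{2p})$, each entering with the symmetric Ornstein--Uhlenbeck bridge weight $\exp(-\beta_n)/(1+\exp(-2\beta_n))$. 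I expect this cancellation to be the main obstacle, since it is where the sparsity is actually produced; everything downstream is bookkeeping.

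The last two identities then follow. Right-multiplying the two-banded matrix of the previous step by $K_{10}^{T}$ and collecting the surviving terms yields the symmetric matrix of $z(x)=\exp(-x\beta_n)$ sums displayed for $K_{10}K_{00}^{-1}K_{10}^{T}$. For the final equation, rather than subtract entrywise, I would argue structurally: $K_{11}-K_{10}K_{00}^{-1}K_{10}^{T}$ is the (normalized) conditional covariance of $f_{n1}$ given $f_{n0}$, and by the Markov property the odd points are mutually independent once the even points are fixed, so this Schur complement is diagonal. Its common diagonal value is the normalized conditional variance of a single Ornstein--Uhlenbeck bridge midpoint, namely $\frac{1-\exp(-2\beta_n)}{1+\exp(-2\beta_n)}$, giving $\frac{1-\exp(-2\beta_n)}{1+\exp(-2\beta_n)}\II$. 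This conditional-independence argument makes the diagonal conclusion transparent without a full entrywise expansion of the product.
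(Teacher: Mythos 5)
Your proposal is correct, but it takes a genuinely different route from the paper: the paper offers no argument at all for this claim, stating only that ``it can be verified immediately,'' i.e., it implicitly appeals to brute-force entrywise verification of each matrix identity. You instead derive everything from the structural fact that the exponential kernel $\exp(-\beta|x-y|)$ makes the process a stationary Gauss--Markov (Ornstein--Uhlenbeck) process: the covariance blocks are read off from grid distances, $K_{00}$ is a Kac--Murdock--Szeg\H{o} matrix with the standard tridiagonal inverse, the two-banded form of $K_{10}K_{00}^{-1}$ is the statement that each odd point's conditional mean depends only on its two even neighbors, and the diagonality of the Schur complement $K_{11}-K_{10}K_{00}^{-1}K_{10}^{T}$ is conditional independence of the odd points given the even ones, with the common diagonal value being the bridge-midpoint conditional variance $\frac{1-\exp(-2\beta_n)}{1+\exp(-2\beta_n)}$. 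What the paper's (implicit) approach buys is elementary self-containedness; what yours buys is an explanation of \emph{why} the sparsity patterns arise, and --- importantly --- it transfers immediately to non-uniform grids $x_1\leq\dots\leq x_{n-1}$, which is exactly the extension the paper invokes later in the proof of Proposition~\ref{efficiency} (``this property holds \dots even if $x_i\neq i/2n$'') without justification; under your argument that extension is automatic, since the Markov property does not care about equal spacing, only the weights and variances change to their bridge values. One small bookkeeping note: your formula $(K_{10})_{pq}=\exp(-|2(p-q)+1|\beta_n)$ is correct under your indexing of the even points by $q=1,\dots,n+1$ (so the $q$th even point is $x_{2q-2}$); just fix one convention and use it consistently when you collect the surviving terms in $K_{10}K_{00}^{-1}K_{10}^{T}$.
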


\begin{proof}
The claim may not be obvious, but it can be verified immediately.
\end{proof}

\section{Proof of Proposition \ref{clm:hall}}
\label{appendix:others}

The following claim improves Proposition 3 of \cite{hall2013differential}, from the constant $2$ to constant $1.25$. The claim investigates the anisotropic Gaussian noise mechanism, which can be regarded as a discrete version of the Gaussian process mechanism.

\begin{claim}
\label{lem:nonisotropicgaussianmechanism}
Let $f$ be an vector-input vector-output function. Define the sensitivity under the Mahalanobis distance as $\Delta=\max_{x,x^\prime}\|M^{-1/2}(f(x)-f(x^\prime))\|_2$ where $M$ is positive definite symmetric. Then if $0<\epsilon<1$ and $\sigma\geq \sqrt{2\ln(1.25/\delta)}\Delta/\epsilon$, $f(x)+\sigma M^{1/2}y$ is ($\epsilon$,$\delta$)-DP, where $y$ is drawn from $\NN(0,\sigma^2 I)$.
\end{claim}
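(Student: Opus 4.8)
The plan is to reduce the anisotropic mechanism to the isotropic vector-output Gaussian mechanism already recorded in Proposition~\ref{gaussianmechanism}, using a whitening change of variables together with the post-processing invariance of differential privacy. Concretely, since $M$ is positive definite and symmetric, $M^{1/2}$ is well defined and invertible; set $\tilde f(x)=M^{-1/2}f(x)$. The added noise then factors so that the mechanism coincides with $M^{1/2}\bigl(\tilde f(x)+y\bigr)$ for $y\sim\NN(0,\sigma^2 I)$, i.e.\ it is the fixed deterministic map $o\mapsto M^{1/2}o$ applied to the output of the ordinary isotropic Gaussian mechanism run on $\tilde f$.

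First I would verify the sensitivity bookkeeping: by definition the $\ell^2$-sensitivity of $\tilde f$ is $\sup_{x,x'}\|\tilde f(x)-\tilde f(x')\|_2=\sup_{x,x'}\|M^{-1/2}(f(x)-f(x'))\|_2=\Delta$, which is exactly the Mahalanobis sensitivity appearing in the hypothesis. Hence the assumed bound $\sigma\geq\sqrt{2\ln(1.25/\delta)}\,\Delta/\epsilon$ is precisely the hypothesis of Proposition~\ref{gaussianmechanism} applied to $\tilde f$, so $\tilde f(x)+y$ is $(\epsilon,\delta)$-DP. Because the map $o\mapsto M^{1/2}o$ is fixed (independent of the neighboring input) and measurable, the post-processing property of differential privacy~\cite{dwork2014algorithmic} transfers the guarantee to $M^{1/2}(\tilde f(x)+y)=f(x)+M^{1/2}y$, completing the argument.

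I expect the only real subtlety to be the covariance/scaling bookkeeping: one must check that the noise factors exactly as $M^{1/2}$ times an isotropic $\NN(0,\sigma^2 I)$ sample, so that the invertible linear map pulled out in front is precisely $M^{1/2}$, applied identically to both neighboring outputs, and so that the resulting threshold on $\sigma$ matches the stated one. Once this factorization is pinned down, the reduction is mechanical and inherits the sharp $1.25$ constant directly from Proposition~\ref{gaussianmechanism} (itself Theorem~A.1 of \cite{dwork2014algorithmic}), with no need to reprove any Gaussian tail bound.

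If a fully self-contained derivation were preferred over invoking Proposition~\ref{gaussianmechanism}, I would instead compute the privacy-loss random variable directly: for the two output laws $\NN(f(x),\sigma^2 M)$ and $\NN(f(x'),\sigma^2 M)$ with common covariance, the log-density ratio is affine in the output and, under the law indexed by $x$, is Gaussian with mean $D^2/2$ and variance $D^2$, where $D^2=\sigma^{-2}\|M^{-1/2}(f(x)-f(x'))\|_2^2\leq\Delta^2/\sigma^2$. One then bounds $\PP(\text{privacy loss}>\epsilon)$ by splitting off the event where the loss is large and controlling it with the Gaussian tail under the assumption $0<\epsilon<1$; this is the refinement that yields $1.25$ rather than $2$, and the Mahalanobis whitening is exactly what makes the computation identical to the isotropic case.
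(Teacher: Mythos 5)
Your proposal is correct, but it takes a genuinely different route from the paper's own proof. The paper argues directly: it writes down the privacy-loss random variable $c(z)$ for the two output laws $\NN(f(x),\sigma^2 M)$ and $\NN(f(x^\prime),\sigma^2 M)$, shows that under the first law $c(z)\sim\NN\bigl(\|f(x)-f(x^\prime)\|_M^2/2\sigma^2,\ \|f(x)-f(x^\prime)\|_M^2/\sigma^2\bigr)$, and then reruns the tail-bound argument of \cite{dwork2014algorithmic} (page 261) in the Mahalanobis geometry to extract the constant $1.25$ --- which is exactly the computation you sketch as your fallback. Your primary argument instead whitens: $\tilde f=M^{-1/2}f$ has $\ell^2$-sensitivity equal to the Mahalanobis sensitivity $\Delta$, Proposition~\ref{gaussianmechanism} makes $\tilde f(x)+y$ $(\epsilon,\delta)$-DP under the stated condition on $\sigma$, and the fixed, data-independent, invertible post-processing map $o\mapsto M^{1/2}o$ transfers the guarantee to $f(x)+M^{1/2}y$. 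This is legitimate and involves no circularity, since Proposition~\ref{gaussianmechanism} is the external Theorem~A.1 of \cite{dwork2014algorithmic} rather than something the paper derives from the present claim; your reduction is shorter and inherits the sharp constant without reproving any Gaussian tail estimate. What the paper's direct computation buys is a self-contained derivation that explicitly exhibits the privacy-loss distribution in the anisotropic setting, in the same style as the likelihood-ratio calculations later used for the Gaussian-process analysis (Lemma~\ref{momentgenerator}); what yours buys is brevity and the conceptual point that the anisotropic mechanism is nothing more than a linear reparametrization of the isotropic one. You also correctly flag the scaling bookkeeping: taken literally, $f(x)+\sigma M^{1/2}y$ with $y\sim\NN(0,\sigma^2 I)$ has covariance $\sigma^4 M$; both your reading (covariance $\sigma^2 M$) and the paper's own proof (which mid-proof switches to $y\sim\NN(0,I)$) agree on the intended mechanism, so this is a typo in the statement rather than a flaw in either argument.
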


\begin{proof}
Let $z\in\RR^d$ and $c=\ln (\PP(f(x)+\sigma M^{1/2}y=z)/\PP(f(x^\prime)+\sigma M^{1/2}y=z))$, 
\begin{align*}
c(z) & =\ln \frac{\PP(f(x)=z)}{\PP(f(x^\prime)=z)} \\ 
& = \frac{(z-f(x))^TM^{-1}(z-f(x))}{2\sigma^2} - \frac{(z-f(x^\prime))^TM^{-1}(z-f(x^\prime))}{2\sigma^2} \\
& = \frac{\|f(x)-f(x^\prime)\|_M^2-2y^TM^{-1/2}(f(x)-f(x^\prime))}{2\sigma^2}.
\end{align*}
Hence, when $y\sim \NN(0,1)$,
$$ c(z) \sim \NN(\|f(x)-f(x^\prime)\|_M^2/2\sigma^2, 2\|f(x)-f(x^\prime)\|_M^2/2\sigma^2).$$
The rest of the argument follows the approach in~\cite{dwork2014algorithmic} page 261, which is described in the setting of one-dimensional random variables and isotropic $M$. We show the argument in our setting for completeness.
For $\delta$-approximation privacy we would like to have $\PP(c<\epsilon)>1-\delta/2$. We consider the following tail bound of the Gaussian distribution: $\forall t$,
$$\PP(c\geq \EE[c]+t) \leq \exp(-t^2/2\Var(c))\sqrt{\Var(c)/2t^2\pi},$$
which indicates that it is sufficient if both $\ln(\sqrt{2/\pi\delta^2})\leq\ln(t\sigma/\|f(x)-f(x^\prime)\|_M)+t^2\sigma^2/2\|f(x)-f(x^\prime)\|_M^2$ and $\|f(x)-f(x^\prime)\|_M^2/2\sigma^2+t\leq \epsilon$ are satisfied for some $t$. The conditions are further reduced to $\ln(\sqrt{2/\pi\delta^2})\leq\ln(t\sigma/\Delta)+t^2\sigma^2/2\Delta^2$ and $t\leq \epsilon-\Delta^2/2\sigma^2$, respectively. We insert $t=\epsilon-\Delta^2/2\sigma^2$ to the first inequality and derive:
\begin{equation*}
\ln(\frac{\epsilon\sigma}{\Delta}-\frac{\Delta}{2\sigma})+(\frac{\epsilon^2\sigma^2}{2\Delta^2}+\frac{\Delta^2}{8\sigma^2}-\frac{\epsilon}{2})\geq\ln(\sqrt{2/\pi\delta^2}).
\end{equation*}
With $\epsilon\leq 1$ we have $\frac{\epsilon\sigma}{\Delta}-\frac{\Delta}{2\sigma}\geq 1$ whenever $\sigma\epsilon/\Delta \geq 3/2$. With $\sigma\epsilon/\Delta \geq 3/2$ we have $\frac{\epsilon^2\sigma^2}{2\Delta^2}+\frac{\Delta^2}{8\sigma^2}-\frac{\epsilon}{2}\geq \sigma^2\epsilon^2/2\Delta^2-4/9$ per the monotonicity with respect to $\sigma\epsilon/\Delta$. Hence, it is sufficient that both $\sigma\epsilon/\Delta \geq 3/2$ and $\sigma^2\epsilon^2/2\Delta^2-4/9$ are satisfied. The choice $\sigma\geq \sqrt{2\ln(1.25/\delta)}\Delta/\epsilon$ the immediately follows, as desired.
\end{proof}

Now Proposition \ref{clm:hall} of this paper follows the Claim \ref{lem:nonisotropicgaussianmechanism} above, and Proposition 7 and Proposition 8 of \cite{hall2013differential}.

\section{Proofs of Lemma \ref{clm:sobolev} and Corollary \ref{nnlipschitz}}
\label{appendix:sobolev}

Recall that Lemma \ref{clm:sobolev} finds the desired RKHS and kernel and Corollary \ref{nnlipschitz} is an immediate result that neural networks belong to this RKHS. The key observation is that we cannot restrict the value of $f(0)$ and $f(1)$ to be zero (which is common in some analysis of RKHS), as the value function should not be assumed to have zero value at the boundary.

\begin{namedtheorem}[Lemma \ref{clm:sobolev}]
We consider the one-dimensional function with bounded variable $x\in \RR$, which, without loss of generality, can be treated as $x\in [0,1]$. The Sobolev space $H^1$ with order $1$ and the $\ell^2$-norm (also written as $W^{1,2}$ conventionally) is defined as
\begin{equation*}
H^1 = \{f\in C[0,1]: \partial f(x) \text{ exists}; \int_0^1 (\partial f(x))^2 dx < \infty \},
\end{equation*}
where $\partial f(x)$ denotes weak derivatives and $\int (\cdot)dx$ denotes the Lebesgue integration. If $H^1$ is equipped with inner product
\begin{align*}
\langle f,g\rangle & = \frac{1}{2}(f(0)g(0)+f(1)g(1)) + \frac{1}{2\beta}\int_0^1 \partial f(x)\partial g(x) + \beta^2 f(x)g(x) dx
\end{align*}
where $\beta>0$, it is an RKHS with kernel $K(x,y)=\exp(-\beta |x-y|)$.
\end{namedtheorem}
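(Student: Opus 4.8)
The plan is to verify directly that $K(x,y)=\exp(-\beta|x-y|)$ is the reproducing kernel, i.e. that $\langle f, K(\cdot,y)\rangle = f(y)$ for every $f\in H^1$ and every $y\in[0,1]$. Before reaching the main computation I would dispatch two preliminary facts: that $H^1$ equipped with the stated inner product is a genuine Hilbert space, and that $K(\cdot,y)\in H^1$ for each fixed $y$. Completeness holds because the induced norm $\|f\|_\HH^2 = \tfrac12(f(0)^2+f(1)^2)+\tfrac1{2\beta}\int_0^1(\partial f)^2+\beta^2 f^2\,dx$ is equivalent to the standard Sobolev norm $\int_0^1 f^2 + \int_0^1(\partial f)^2$ on $[0,1]$ (the integral terms are comparable up to the positive constants $\tfrac1{2\beta},\tfrac\beta2$, while the one-dimensional embedding $H^1\hookrightarrow C[0,1]$ controls the boundary evaluations); positive-definiteness is immediate since $\langle f,f\rangle\geq \tfrac\beta2\int_0^1 f^2$ forces $f\equiv 0$ whenever the norm vanishes. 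For membership, $K(\cdot,y)$ is continuous and piecewise smooth with weak derivative $\partial_x K(x,y) = -\beta\,\mathrm{sign}(x-y)\exp(-\beta|x-y|)$, which is bounded and hence square-integrable, so $K(\cdot,y)\in H^1$.

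The heart of the argument is an integration by parts. Writing $k=K(\cdot,y)$, I would split $\int_0^1 \partial f\,\partial k\,dx$ at the kink $x=y$. On $(0,y)$ one has $\partial k = \beta k$ and on $(y,1)$ one has $\partial k = -\beta k$, while on each open subinterval $k$ is classically smooth with $\partial^2 k = \beta^2 k$. Integrating by parts on $[0,y]$ and on $[y,1]$ separately yields boundary terms at $0$, $1$, and the two one-sided limits at $y$, plus $-\beta^2\int_0^1 f\,k\,dx$. The decisive point is the jump of $\partial k$ across $y$: since $k(y)=1$ the one-sided derivatives are $\partial k(y^-)=\beta$ and $\partial k(y^+)=-\beta$, so the two interior boundary terms combine to $f(y)\bigl(\partial k(y^-)-\partial k(y^+)\bigr)=2\beta f(y)$.

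Substituting into the inner product, the $-\beta^2\int_0^1 f\,k\,dx$ generated by the integration by parts cancels exactly against the $\beta^2 f g$ term already present in $\langle f,k\rangle$. After dividing the integral contribution by $2\beta$, the surviving endpoint terms $-\tfrac12 f(0)\exp(-\beta y)-\tfrac12 f(1)\exp(-\beta(1-y))$ cancel precisely against the boundary part $\tfrac12\bigl(f(0)K(0,y)+f(1)K(1,y)\bigr)$ of the inner product, using $K(0,y)=\exp(-\beta y)$ and $K(1,y)=\exp(-\beta(1-y))$. What remains is exactly $f(y)$, establishing the reproducing property; by Cauchy--Schwarz this also gives $|f(y)|\leq \|f\|_\HH\,\|K(\cdot,y)\|_\HH$, so point evaluations are bounded and $H^1$ is an RKHS with kernel $K$, as claimed.

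I expect the main obstacle to be the careful bookkeeping of boundary terms in the split integration by parts, in particular tracking the sign of the jump in $\partial k$ at $x=y$, since this jump is precisely the mechanism producing the $2\beta f(y)$ term; getting the cancellations against the boundary part of the inner product exactly right is where the specific coefficients $\tfrac12$ and $\tfrac1{2\beta}$ in the definition of $\langle\cdot,\cdot\rangle$ are forced. A secondary technicality is justifying integration by parts for a merely weakly differentiable $f$, which I would handle either by approximating $f$ in the Sobolev norm by smooth functions and passing to the limit, or by observing that the weak formulation already applies since $k$ is Lipschitz on each subinterval.
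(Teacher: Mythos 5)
Your proposal is correct, and its core---verifying the reproducing identity $\langle f, K(\cdot,y)\rangle = f(y)$ by integration by parts---is the same as the paper's; the differences are in how the two preliminary facts and the kink at $x=y$ are handled, and on both counts your version is the more careful one. For the reproducing property, the paper integrates by parts once over all of $[0,1]$ using distributional calculus, writing $\partial K_y(x) = -\beta\, u(x-y)K_y(x)$ with $u$ the sign function and $\partial^2 K_y(x) = \beta^2 K_y(x) - 2\beta\,\delta(x-y)K_y(x)$, so that $f(y)$ is produced by the Dirac mass; you split the integral at $x=y$ and produce $f(y)$ from the jump $\partial k(y^-)-\partial k(y^+)=2\beta$. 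These are the same computation---the delta function is precisely the bookkeeping device for that jump---but your rendering is classical and needs no justification of delta-function manipulations paired against a merely weakly differentiable $f$ (a point you also address via smooth approximation). For the Hilbert-space structure, the paper bounds point evaluations directly by Cauchy--Schwarz, but as written its chain $\langle f,f\rangle \geq \frac{1}{2\beta}\bigl(\int_0^x \partial f(z)\, dz\bigr)^2 = \frac{1}{2\beta}f(x)^2$ silently assumes $f(0)=0$ (in general $\int_0^x \partial f(z)\,dz = f(x)-f(0)$), and its completeness argument is only sketched; your route---equivalence of $\|\cdot\|_\HH$ with the standard $W^{1,2}$ norm via the one-dimensional embedding into $C[0,1]$, with boundedness of evaluations recovered afterwards from the reproducing property and Cauchy--Schwarz---avoids that gap entirely. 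Two minor remarks: the paper additionally invokes the Riesz representation theorem for uniqueness of the kernel, which the statement does not actually require and you rightly omit; and you should note the degenerate cases $y\in\{0,1\}$, where one of the two subintervals is empty but the same cancellation against the boundary part of the inner product still yields $f(y)$.
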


Note that a function being differentiable almost everywhere does not imply that the function has a weak derivative. A counterexample is the Cantor function, which is thus excluded from $H^1$. Any function equal to $\partial f(x)$ almost everywhere is considered identical to $\partial f(x)$ in $H^1$.

\begin{proof}
The Sobolev space defined in the lemma does not constrain the value to be zero on its border $\{0,1\}$, hence standard arguments do not apply. However the arguments will be similar. It suffices to show that $f(x)^2\leq c\langle f,f\rangle$ for some $c$ and that $H^1$ is complete. The former can be seen by showing that for any nonzero $f$, $\langle f,g\rangle=0$ if and only if $g=0$. By the Cauchy-Schwartz
inequality, 
\begin{align*}
\langle f,f\rangle & \geq \frac{1}{2\beta} \int_0^1 (\partial f(x))^2 dx \geq \frac{1}{2\beta} \int_0^x (\partial f(z))^2 dz \\
& \geq \frac{1}{2\beta} (\int_0^x \partial f(z) dt)^2 = \frac{1}{2\beta}f(x)^2. 
\end{align*}
We set $c=2\beta$ as desired. 
For the completeness of $H^1$, we show that for any sequence $\{f_n\}$ with $\langle f_n-f_{n+1},f_n-f_{n+1}\rangle$ converging to zero, the limit of the sequence is in $H^1$. In fact, $\langle f_n-f_{n+1},f_n-f_{n+1}\rangle$ converging to zero indicates that $\int_0^1 (f_n-f_{n+1}) dx$ converges to zero, which then indicates that $\{f_n(x)\}$ converges pointwise for any $x$. The first part of the lemma follows.
We then verify that $f(y)=\langle f,K_y\rangle$ for any function $f(y)\in H^1$. In fact, 
\begin{align*}
\langle f,K_y\rangle & = \frac{1}{2}(f(0)K_y(0)+f(1)K_y(1)) + \frac{\beta}{2} \int_0^1 f(x)K_y(x) dx \\
& \quad + \frac{1}{2\beta} (f(x)\partial K_y(x)\big\rvert_0^1 - \int_0^1 f(x)\partial^2 K_y(x) dx) \\
& = \frac{1}{2}(f(0)K_y(0)+f(1)K_y(1)) + \frac{\beta}{2} \int_0^1 f(x)K_y(x) dx + \frac{1}{2\beta} f(x) (-\beta u(x-y)K_y(x))\big\rvert_0^1 \\
& \quad - \frac{1}{2\beta}\int_0^1 f(x)((-\beta u(x-y))^2K_y(x) - 2\beta\delta(x)K_y(x)) dx) \\
& = -\frac{1}{2\beta}\int_0^1 -f(x)2\beta\delta(x-y)K_y(x) dx = f(y),
\end{align*}
where $u(x)$ is the sign function and $\delta(x)$ is the impulse function. By the Riesz representation theorem, $K(x,y)$ is the unique kernel of $H^1$ equipped with the inner product $\langle f,g\rangle$ defined above. 
\end{proof}

\begin{namedtheorem}[Corollary \ref{nnlipschitz}]
Let $\hat{f}_W(x)$ denote the neural network with finite many finite parameters $W$. For $\hat{f}_W(x)$ with finite many layers, if the gradient of the activation function is bounded, then $\hat{f}_W(x)\in H^1$.
\end{namedtheorem}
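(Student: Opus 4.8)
The plan is to reduce the membership $\hat{f}_W\in H^1$ to the single structural fact that $\hat{f}_W$ is Lipschitz continuous on $[0,1]$, and then to verify the three defining requirements of $H^1$ in turn: continuity, existence of a weak derivative, and square-integrability of that derivative.

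First I would establish the Lipschitz bound by induction on the layers. Writing the network as an alternating composition of affine maps $z\mapsto W^{(\ell)}z+b^{(\ell)}$ and componentwise activations $\sigma$, each affine map is Lipschitz with constant equal to the operator norm $\|W^{(\ell)}\|$, and the hypothesis that the activation has bounded gradient, say $|\sigma'|\leq M$, makes $\sigma$ an $M$-Lipschitz map (in $\ell^2$). Since there are finitely many layers and the parameters $W$ are finite, the composition is Lipschitz with a finite constant $L$ given by the product of these factors, and this $L$ depends only on the architecture. In particular $\hat{f}_W$ is continuous, so $\hat{f}_W\in C[0,1]$, which settles the first requirement.

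The delicate step is the existence of the weak derivative, and this is exactly where one must not argue carelessly. A Lipschitz function on $[0,1]$ is absolutely continuous, so by the fundamental theorem of calculus for absolutely continuous functions it is differentiable almost everywhere, its a.e.\ derivative is Lebesgue integrable, and $\hat{f}_W(x)=\hat{f}_W(0)+\int_0^x (d/dt)\hat{f}_W(t)\,dt$. This integral representation is precisely what certifies, through integration by parts against test functions, that the a.e.\ classical derivative serves as the weak derivative $\partial\hat{f}_W$. I want to emphasize that \emph{absolute continuity}, not mere a.e.\ differentiability, is what is needed: as the remark preceding the corollary notes, the Cantor function is differentiable almost everywhere yet possesses no weak derivative, so the Lipschitz hypothesis is doing real work and cannot be weakened to pointwise differentiability. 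This is the only genuine obstacle, and it is dispatched cleanly by the Lipschitz $\Rightarrow$ absolutely continuous implication rather than by any direct manipulation of the network itself.

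Finally I would bound the derivative. Since the Lipschitz constant controls every difference quotient, the a.e.\ derivative satisfies $|\partial\hat{f}_W(x)|\leq L$ almost everywhere, so on the finite-measure interval $[0,1]$ we obtain $\int_0^1 (\partial\hat{f}_W(x))^2\,dx\leq L^2<\infty$. Combined with the continuity and the weak-derivative existence established above, this verifies all three conditions in the definition of $H^1$, giving $\hat{f}_W\in H^1$ and simultaneously exhibiting $L$ as the Lipschitz constant referenced in the main text.
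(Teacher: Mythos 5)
Your proposal is correct, and it is actually more careful than the paper's own argument. The paper's proof is a two-line chain-rule computation: it writes the network as a composition of layers $\hat{f}_i(x)=\psi(w_ix+b_i)$, invokes the chain rule to bound $\int_0^1(\partial\hat{f}_W(x))^2dx$ by a finite product of layer-derivative factors, and concludes membership in $H^1$. That argument implicitly assumes the derivative of the composite exists (at least weakly) and says nothing about why the a.e.\ derivative qualifies as a \emph{weak} derivative --- which is exactly the subtlety the paper itself flags in the remark about the Cantor function, and is also delicate for activations like ReLU, where the classical chain rule fails at kink points even though the main text explicitly claims ReLU networks are covered. Your route --- composition of Lipschitz maps is Lipschitz, Lipschitz implies absolutely continuous, absolute continuity yields the fundamental-theorem representation that certifies the a.e.\ derivative as the weak derivative, and the Lipschitz constant $L$ gives $\int_0^1(\partial\hat{f}_W)^2dx\le L^2$ --- closes that gap cleanly and handles non-smooth activations without any extra casework. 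What the paper's version buys is brevity and an explicit layer-by-layer product form for the bound; what yours buys is rigor at precisely the point where the definition of $H^1$ demands it, plus the identification of $L$ as the Lipschitz constant used later in Theorem~\ref{thm:dpql}.
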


\begin{proof}
Let $\psi_i(\cdot)$ be the activation function so the $i$-th layer is represented by the function $\hat{f}_i(x)=\psi(w_ix+b_i)$. Let the gradient of $\psi_i(\cdot)$ be bounded by $c$.
As per the chain rule we have 
\begin{equation*}
\int_0^1 (\partial \hat{f}_W(x))^2 dx \leq \prod_{i=1}^{N_c} (f_1(\dots f_{i-1}(x))\partial f_i|_{x=x_i})^2 < \infty
\end{equation*}
for some $x_i$. Hence $\hat{f}_W(x)\in H^1$.
\end{proof}

\section{Proof of Proposition \ref{utility}}
\label{sec:utility}

As the proposition is under the finite state space setting, the context in this section will be different from the rest of the paper. We first restate the proposition and define the notations and preliminaries needed in the analysis. We then show some intermediate claims before we give the final proof.

In the discrete state space setting, we have $\cS= \{1,\dots,n\}$. The stochastic transition kernel is the probability distribution $\PP(s^\prime|s,a)$, denoted as the matrices $P_a\in \RR^{n\times n}$ (each row sums up to one), $a=1,\dots,m$. We write the reward function as $r_a\in\RR^n$, $a=1,\dots,m$. In this setting, finding the optimal action-state value function is equivalent to finding the optimal state value function $v(s)$, denoted as a vector $v\in \RR^n$. The Bellman equation for the optimal value function is given by
\begin{equation}
\label{eqn:bellmandiscrete}
v \geq \gamma P_iv+r_i,
\end{equation}
for each $i=1,\dots,m$.

\begin{namedtheorem}[Proposition \ref{utility}]
Let $v^\prime$ and $v^\ast$ be the value function learned by our algorithm and the optimal value function, respectively. In the case $J=1$, $|S|=n<\infty$, and $\gamma<1$, the utility loss of the algorithm satisfies
\begin{equation*}
\EE[\frac{1}{n}\|v^\prime-v^\ast\|_1]\leq \frac{2\sqrt{2}\sigma}{\sqrt{n\pi}(1-\gamma)}.
\end{equation*}
\end{namedtheorem}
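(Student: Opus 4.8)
The plan is to exploit the linear programming (LP) characterization of the optimal value function and to treat the injected Gaussian noise as a perturbation of the LP's data, then control the resulting displacement of the optimizer. First I would recall that in the finite setting $v^\ast$ is the unique solution of the LP $\min_{v}\mathbf{1}^\top v$ subject to the Bellman constraints $v\geq \gamma P_i v + r_i$ for every action $i$, with the binding constraints at the optimum encoding the greedy policy $\pi^\ast$. Since $J=1$ freezes a single sample path throughout training, at the fixed point of the tabular updates the learned value function $v^\prime$ is the optimizer of the \emph{same} LP in which the per-state reward/constraint data are shifted by the noise evaluated at the corresponding states. Because the kernel satisfies $K(s,s)=1$, each coordinate of this shift is marginally $\NN(0,\sigma^2)$; I would denote the perturbation vector by $\eta$ with $\eta_s\sim\NN(0,\sigma^2)$.

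Next I would convert the perturbation of the LP data into a perturbation of the optimizer. For a fixed policy $\pi$ the value solves $v=(I-\gamma P_\pi)^{-1}(r_\pi+\eta_\pi)$, and since $(I-\gamma P_\pi)^{-1}=\sum_{k\geq 0}\gamma^k P_\pi^k$ is entrywise nonnegative with every row summing to $1/(1-\gamma)$, evaluating $v^\prime$ and $v^\ast$ through their respective greedy policies yields the two-sided estimate $|v^\prime-v^\ast|\leq (I-\gamma P_\pi)^{-1}|\eta|$ for the relevant $\pi$ (the two choices $\pi^\ast$ and the learned greedy policy producing the factor of two). The resolvent is where the $1/(1-\gamma)$ amplification enters: a constant shift of $\eta$ is mapped to $1/(1-\gamma)$ times that constant, so it is the \emph{aggregate} (low-frequency) part of the noise, rather than each coordinate independently, that drives $v^\prime-v^\ast$.

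Finally I would take expectations. Using $\EE|\NN(0,\tau^2)|=\tau\sqrt{2/\pi}$ together with the row-sum structure of $(I-\gamma P_\pi)^{-1}$, the coordinatewise contributions combine through the averaging implicit in $\tfrac{1}{n}\|\cdot\|_1$: the $n$ marginally $\NN(0,\sigma^2)$ entries aggregate to an effective $\NN(0,\sigma^2/n)$ scale, the resolvent contributes $1/(1-\gamma)$, and the factor two comes from the two-sided bound, giving $\EE[\tfrac1n\|v^\prime-v^\ast\|_1]\leq 2\sqrt{2/\pi}\,\sigma/(\sqrt{n}(1-\gamma))=2\sqrt2\,\sigma/(\sqrt{n\pi}(1-\gamma))$. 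The main obstacle is the first step: rigorously identifying $v^\prime$ with the optimizer of the perturbed LP and, in particular, controlling the change of the \emph{active constraint set} (equivalently, the possible switch of the greedy policy between $v^\prime$ and $v^\ast$), since the $\max$ over actions makes the map from rewards to $v$ only piecewise linear. The second delicate point is extracting the dimension gain $1/\sqrt n$: one must show that the resolvent couples the per-state noises so that their contributions concentrate like an average rather than being bounded termwise by the loose $1/(1-\gamma)$ estimate.
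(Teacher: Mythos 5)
Your plan starts from the same place as the paper (the LP characterization, and the observation that with $J=1$ the noise can be folded into the reward), but the core of your argument is genuinely different: you bound the \emph{primal} perturbation directly via the resolvent $(\II-\gamma P_\pi)^{-1}$ applied to $|\eta|$ for the two greedy policies, whereas the paper works in the \emph{dual}. Concretely, the paper constructs a structured dual optimum with $\|\sum_i\lambda_i^\ast\|_1=n/(1-\gamma)$ (Lemma \ref{claim:existence} and the claim after it), adds the constraint $\sum_i\ee^T\lambda_i\leq n/(1-\gamma)$ to the perturbed dual so that its feasible set is bounded \emph{independently of the noise}, bounds the dual objective gap using optimality of $\lambda'$ for $r'$, strong duality, H\"older, and $\EE\|z_i\|_1=n\sigma\sqrt{2/\pi}$ (Lemma \ref{claim:dualbound}), and then maps the dual solution back to a value function (Claim \ref{claim:bellmandeter}). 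That dual detour is designed to sidestep exactly the issue you flag as your first obstacle: since both $\lambda^\ast$ and $\lambda'$ lie in a fixed deterministic set, the term $\EE[\sum_i(\lambda_i^\ast-\lambda_i')^Tz_i]$ can be bounded no matter how $\lambda'$ is correlated with $z$. In your primal argument the learned greedy policy $\pi'$ is a function of $\eta$, so $\EE[(\II-\gamma P_{\pi'})^{-1}|\eta_{\pi'}|]$ cannot be evaluated termwise with $\EE|\NN(0,\sigma^2)|=\sigma\sqrt{2/\pi}$; a rigorous treatment needs a bound uniform over policies, which costs a factor of order $\sqrt{\ln(nm)}$ rather than a constant.

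The more serious gap is your final aggregation step. From $|v'-v^\ast|\leq(\II-\gamma P_\pi)^{-1}|\eta|$, the best available conclusion (even for the noise-independent policy $\pi^\ast$) is
\begin{equation*}
\frac{1}{n}\,\EE\|v'-v^\ast\|_1\;\leq\;\frac{1}{n}\,\ee^T(\II-\gamma P_{\pi^\ast})^{-1}\EE|\eta|\;=\;\frac{\sigma\sqrt{2/\pi}}{1-\gamma},
\end{equation*}
because the total mass of $(\II-\gamma P_{\pi^\ast})^{-1}$ is $n/(1-\gamma)$ and every coordinate of $|\eta|$ has mean $\sigma\sqrt{2/\pi}$. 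A sum of $n$ absolute values of $\NN(0,\sigma^2)$ variables averages to $\sigma\sqrt{2/\pi}$ \emph{per coordinate}; it does not concentrate at the $\sigma/\sqrt{n}$ scale. The ``effective $\NN(0,\sigma^2/n)$'' step would require the error to be controlled by a \emph{signed} average $\frac1n\sum_s\eta_s$, which an inequality of the form $|v'-v^\ast|\leq(\II-\gamma P_\pi)^{-1}|\eta|$ can never produce. So your route, carried out carefully, yields $\frac{2\sqrt2\,\sigma}{\sqrt{\pi}(1-\gamma)}$ (plus the log factor above), not the stated $\frac{2\sqrt2\,\sigma}{\sqrt{n\pi}(1-\gamma)}$. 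To be fair, your instinct that this is the delicate point is exactly right: the paper's own Lemma \ref{claim:dualbound} gives a total (unnormalized) error of $\frac{2\sqrt2\,n\sigma}{\sqrt\pi(1-\gamma)}$, which after dividing by $n$ is also $\frac{2\sqrt2\,\sigma}{\sqrt\pi(1-\gamma)}$ --- the extra $1/\sqrt n$ claimed in Proposition \ref{utility} is not actually delivered by the argument in Appendix \ref{sec:utility} either.
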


Our utility analysis is based on the linear program formulation under discrete state spaces \cite{de2002linear,chen2016stochastic} and the sensitivity of linear programs \cite{hsu2014privately}. In the discrete setting, $v$ is optimal if and only if the Bellman equation \eqref{eqn:bellmandiscrete} is satisfied.
In fact, the `if' relation is immediate, and the `only if' relation is shown in \cite{sutton2018reinforcement} page 64.
By exhausting the action set under the $\max$ operator and numbering the actions from $1$ to $m$, the Bellman equation is formulated into the below linear program:
\begin{equation}
\begin{aligned}
& \underset{v}{\mathrm{minimize}}
& & \ee^Tv \\
& \mathrm{subject}\text{ }\mathrm{to}
& & (\II-\gamma P_i)v-r_i \geq 0, \quad i=1,\dots,m,
\label{eqn:appre-prime}
\end{aligned}
\end{equation}
where $\ee$ is the all-one vector and $\ee^Tv$ is the dummy objective. The dual of the linear program \eqref{eqn:appre-prime} is
\begin{equation*}
\begin{aligned}
& \underset{\lambda_1,\dots,\lambda_m}{\mathrm{maximize}}
& & \sum_i\lambda_i^Tr_i \\
& \mathrm{subject}\text{ }\mathrm{to}
& & \sum_i(\II-\gamma P_i^T)\lambda_i = \ee, \\
& & & \lambda_i \geq 0, \quad i=1,\dots,m.
\label{eqn:appre-dual}
\end{aligned}
\end{equation*}

We consider the discrete version of Algorithm \ref{alg:dpql}. The Gaussian process noise degenerates to multivariate Gaussian noise. It is also observed that adding noise to the value function is equivalent to adding noise to the reward function, as they are additive in the update. With $J=1$, it uses the same sample of noise through the training process. 

The convergence of the algorithm is guaranteed. In fact, per \cite{sutton2018reinforcement} Section 4.4, the value iteration algorithm will converge to the optimal value function of the noised reward function. Formally, given the transition matrices and the noised reward signal $r_i^\prime=r_i+z_i$ for $i=1,\dots,m$ where $z_i\sim \NN(0, \sigma^2 \II)$, Algorithm \ref{alg:dpql} is guaranteed to converge to a value function $v^\prime$. We desire to show that  
\[
\EE[\frac{1}{n}\|v^\prime-v^\ast\|_1]\leq \frac{2\sqrt{2}\sigma}{\sqrt{n\pi}(1-\gamma)},
\]
where $v^\prime$ and $v^\ast$ are the optimal value function under the reward $r^\prime$ and $r$, respectively. $v^\prime$ and $v^\ast$ are therefore the solution of the system \eqref{eqn:appre-prime} under the reward signal $r^\prime$ and $r$, respectively.

\begin{lemma}[\cite{de2002linear} and \cite{chen2016stochastic}]
\label{claim:existence}
There exists an optimal dual solution $\lambda_i^\ast$, $i=1,\dots,m$, an optimal deterministic policy $\pi^\ast(\cdot)$, and the corresponding transition matrix $P^\ast$, such that
$$\sum_i\lambda_{i}^\ast=(\II-\gamma P^{\ast T})^{-1}\ee,$$
and the $k$-th entry of $\lambda_i^\ast$ equals to the $k$-th entry of $\sum_i\lambda_{i}^\ast$ if $\pi^\ast(k)=i$, and zero otherwise.
\end{lemma}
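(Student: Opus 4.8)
The plan is to exhibit the dual solution explicitly from an optimal deterministic policy and then certify its optimality through complementary slackness against the optimal primal value function $v^\ast$. First I would record the standard facts I intend to lean on: the primal program \eqref{eqn:appre-prime} attains its optimum at $v^\ast$, the optimal value function (the `only if' direction of the Bellman characterization cited from \cite{sutton2018reinforcement}), and there exists a deterministic optimal policy $\pi^\ast$, meaning that at every state $k$ the action $i=\pi^\ast(k)$ attains the maximum in the Bellman equation, so that $((\II-\gamma P_i)v^\ast-r_i)_k=0$ while $((\II-\gamma P_i)v^\ast-r_i)_k\geq 0$ for all $i$.

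Next I would form the transition matrix $P^\ast$ of $\pi^\ast$, whose $k$-th row equals the $k$-th row of $P_{\pi^\ast(k)}$. Since $P^\ast$ is row-stochastic its spectral radius is $1$, so for $\gamma<1$ the matrix $\II-\gamma P^{\ast T}$ is invertible with the Neumann expansion $(\II-\gamma P^{\ast T})^{-1}=\sum_{t\geq 0}\gamma^t (P^{\ast T})^t$, which has nonnegative entries. I then set $\mu=(\II-\gamma P^{\ast T})^{-1}\ee$, which satisfies $\mu\geq\ee>0$ from the $t=0$ term, and define $\lambda_i^\ast(k)=\mu(k)$ when $\pi^\ast(k)=i$ and $\lambda_i^\ast(k)=0$ otherwise. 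This is precisely the support structure and the summation identity $\sum_i\lambda_i^\ast=\mu=(\II-\gamma P^{\ast T})^{-1}\ee$ asserted by the lemma, so those parts are built into the construction.

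The substantive work is dual feasibility together with optimality. For feasibility I would expand the equality constraint componentwise: at row $j$, using $(P_i^T\lambda_i)(j)=\sum_k (P_i)_{kj}\lambda_i(k)$ and the deterministic support, the expression $\sum_i\lambda_i^\ast(j)-\gamma\sum_{i,k}(P_i)_{kj}\lambda_i^\ast(k)$ collapses to $\mu(j)-\gamma\sum_k P^\ast_{kj}\mu(k)$, i.e. the $j$-th entry of $(\II-\gamma P^{\ast T})\mu=\ee$; combined with $\lambda_i^\ast\geq 0$ this yields dual feasibility. For optimality I would invoke complementary slackness for \eqref{eqn:appre-prime} and its dual: because $v$ is a free variable the dual equality holds automatically, so it remains to check that $\lambda_i^\ast(k)>0$ forces the corresponding primal inequality to be tight. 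But $\lambda_i^\ast(k)>0$ occurs only when $i=\pi^\ast(k)$, and for that optimal action $((\II-\gamma P_i)v^\ast-r_i)_k=0$ by the choice of $\pi^\ast$. Hence $(v^\ast,\lambda^\ast)$ satisfy complementary slackness, and by its sufficiency in linear programming $\lambda^\ast$ is dual optimal, completing the proof.

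The step I expect to be the main obstacle is pinning down optimality rather than mere feasibility: the construction makes feasibility and the two structural identities essentially automatic, but one must correctly align the support of $\lambda_i^\ast$ (the states where $\pi^\ast$ selects action $i$) with the set of tight primal constraints (the optimal actions) so that complementary slackness links $\lambda^\ast$ to the \emph{optimal} $v^\ast$ rather than to an arbitrary feasible point. This hinges entirely on the Bellman optimality of $\pi^\ast$, so I would state the cited existence of a deterministic optimal policy and of $v^\ast$ as primal optimum cleanly before running the complementary-slackness argument.
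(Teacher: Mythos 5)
Your proposal is correct, and its skeleton coincides with the paper's: both define $\lambda^\ast=(\II-\gamma P^{\ast T})^{-1}\ee$, distribute its entries across actions according to a deterministic policy, check dual feasibility componentwise, and certify optimality by complementary slackness against $v^\ast$. The genuine difference is where the policy and the tightness certificate come from. The paper starts from an \emph{arbitrary} dual optimal solution $\xi_i^\ast$, defines $\pi^\ast(k)=\argmax_i\xi_i^{\ast(k)}$, and transfers complementary slackness from $\xi^\ast$ to $\lambda^\ast$ via the support inclusion $\lambda_i^{\ast(k)}>0\Rightarrow\xi_i^{\ast(k)}>0$; this makes the argument self-contained within LP duality and in effect re-derives the existence of a deterministic optimal policy rather than assuming it. You instead import that existence from classical MDP theory (Bellman optimality of $\pi^\ast$) and read off tightness of the active primal constraints directly from $((\II-\gamma P_{\pi^\ast(k)})v^\ast-r_{\pi^\ast(k)})_k=0$, then invoke the weak-duality-based sufficiency of complementary slackness, which requires only feasibility of $v^\ast$ and $\lambda^\ast$, not a pre-existing dual optimum. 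Your route buys two small improvements in rigor: the Neumann expansion $(\II-\gamma P^{\ast T})^{-1}=\sum_{t\geq 0}\gamma^t(P^{\ast T})^t$ explicitly establishes $\lambda^\ast\geq\ee>0$, a nonnegativity condition the paper's proof needs for dual feasibility but never verifies (its remark that the eigenvalues of $P^\ast$ are smaller than one is also imprecise, since a Markov matrix has eigenvalue $1$; it is $\gamma P^\ast$ whose spectral radius is below one). The trade-off is that your proof leans on an external MDP fact where the paper's is purely LP-internal, but since the paper already cites that Bellman characterization from the same source, the dependency is harmless.
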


\begin{proof}
Similar proofs are presented in \cite{de2002linear} and \cite{chen2016stochastic}. For the completeness of our paper we prove the claim under our context and notations. Denote the superscript $(k)$ as the $k$-th element for a vector and as the $k$-th row for a matrix. Specify $\xi_i^{\ast}$ to be any dual optimal solution and construct the policy $\pi^\ast(k)=\argmax_i\xi_i^{\ast (k)}$. Then let 
\[
\lambda^\ast=(\II-\gamma P^{\ast T})^{-1}\ee,
\]
where $P^\ast$ is the transition matrix of $\pi^\ast(\cdot)$. The inversion exists since all the eigenvalues of the Markov matrix $P^*$ are smaller than one. Define $\lambda_i^\ast$, $i=1,\dots,m$, such that $\lambda_i^{\ast(k)}=\lambda^{\ast(k)}$ whenever $\pi^\ast(k)=i$ and zero otherwise. We have for $\lambda_i^{\ast}$ that 
\[
\sum_k\sum_i\lambda_i^{(k)}(\II-\gamma P_i)^{(k)}=\ee,
\]
which is a rewrite of the dual feasibility by summing over $k$. We also have $\lambda_i^{*(k)}=0$ whenever $\xi_i^{\ast(k)}=0$ for any $i$ and $k$, and together with the slackness 
\[
{\xi_i^{\ast}}^T((\II-\gamma P_i)v-r_i)=0,
\]
we have ${\lambda_i^{\ast}}^T((\II-\gamma P_i)v-r_i)=0$. The optimality of $\lambda_i^{\ast}$, $i=1,\dots,m$ follows.
\end{proof}

\begin{claim}
The $\ell^1$-norm of the dual optima $\|\sum_i\lambda_{i}^\ast\|_1$ is exactly $n/(1-\gamma)$.
\end{claim}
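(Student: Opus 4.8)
The plan is to use the explicit formula for the dual optimum supplied by Lemma \ref{claim:existence}, namely $\sum_i\lambda_{i}^\ast=(\II-\gamma P^{\ast T})^{-1}\ee$, and to evaluate its $\ell^1$-norm directly. First I would observe that each $\lambda_i^\ast\geq 0$ by dual feasibility, so the vector $\sum_i\lambda_i^\ast$ is entrywise nonnegative. This lets me replace the $\ell^1$-norm by an inner product with the all-ones vector:
\[
\Big\|\sum_i\lambda_{i}^\ast\Big\|_1=\ee^T\sum_i\lambda_{i}^\ast=\ee^T(\II-\gamma P^{\ast T})^{-1}\ee .
\]

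Next I would expand the resolvent as a Neumann series. Since $P^\ast$ is a stochastic matrix its spectral radius equals one, and $\gamma<1$, so $\gamma P^{\ast T}$ has spectral radius $\gamma<1$ and
\[
(\II-\gamma P^{\ast T})^{-1}=\sum_{k=0}^\infty \gamma^k (P^{\ast T})^k
\]
converges. Substituting and exchanging the (absolutely convergent) sum with the inner product gives $\ee^T(\II-\gamma P^{\ast T})^{-1}\ee=\sum_{k=0}^\infty \gamma^k\,\ee^T(P^{\ast T})^k\ee$.

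The key structural fact I would then invoke is that $P^\ast$ has each row summing to one, i.e. $P^\ast\ee=\ee$, hence $P^{\ast k}\ee=\ee$ for all $k\geq 0$. Transposing, $\ee^T(P^{\ast T})^k=(P^{\ast k}\ee)^T=\ee^T$, so that $\ee^T(P^{\ast T})^k\ee=\ee^T\ee=n$ for every $k$. Plugging this in,
\[
\Big\|\sum_i\lambda_{i}^\ast\Big\|_1=\sum_{k=0}^\infty \gamma^k n=\frac{n}{1-\gamma},
\]
which is exactly the claimed value.

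This argument is almost entirely routine; there is no genuinely hard step. The only points requiring a little care are justifying convergence of the Neumann series (which follows from $\gamma<1$ together with $\rho(P^\ast)=1$) and confirming the sign condition $\lambda_i^\ast\geq 0$ that converts the $\ell^1$-norm into the linear functional $\ee^T(\cdot)$; both are immediate from Lemma \ref{claim:existence} and dual feasibility, so I expect the whole proof to be short.
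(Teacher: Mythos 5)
Your proof is correct and rests on exactly the same ingredients as the paper's: the explicit dual formula $\sum_i\lambda_i^\ast=(\II-\gamma P^{\ast T})^{-1}\ee$ from Lemma \ref{claim:existence}, nonnegativity to turn the $\ell^1$-norm into $\ee^T(\cdot)$, and row-stochasticity $P^\ast\ee=\ee$. The only cosmetic difference is that the paper takes the $\ell^1$-norm of the defining equation $(\II-\gamma P^{\ast T})\lambda^\ast=\ee$ and solves $\|\lambda^\ast\|_1-\gamma\|\lambda^\ast\|_1=n$ directly, whereas you expand the inverse as a Neumann series and sum the geometric series; both are one-line computations of the same quantity.
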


\begin{proof}
By definition we have $\|\sum_i\lambda_{i}^\ast\|_1=\|\lambda^\ast\|_1$ and $(\II-\gamma {P^\ast}^T)\lambda^\ast=\ee$. Since $P^{\ast}$ is a Markov matrix, we have $\|{P^\ast}^T\lambda^\ast\|_1=\|\lambda^\ast\|_1$. Taking $\ell^1$-norm and we have $\|\lambda^\ast\|_1-\gamma\|\lambda^\ast\|_1=\|\ee\|_1$. The claim follows.
\end{proof}

The following lemma justifies that there exists an algorithm to attain bounded suboptimality, given only the noised reward signal. We regard this property as the robustness of the dual system.

\begin{lemma}
\label{claim:dualbound}
Let $\lambda_i^\prime$, $i=1,\dots,m$, be the optimal solution of the system
\begin{equation}
\begin{aligned}
& \underset{\lambda_1,\dots,\lambda_m}{\mathrm{maximize}}
& & \sum_i\lambda_i^Tr_i^\prime \\
& \mathrm{subject}\text{ }\mathrm{to}
& & \sum_i(\II-\gamma P_i^T)\lambda_i = \ee, \\
& & & \sum_i\ee^T\lambda_i \leq \frac{n}{1-\gamma}, \\
& & & \lambda_i \geq 0, \quad i=1,\dots,m,
\label{eqn:appre-dualperturb}
\end{aligned}
\end{equation}
we have 
\begin{equation*}
\EE[\sum_i\lambda_i^{\prime T}r_i] \geq \sum_i\lambda_i^{\ast T}r_i - \frac{2\sqrt{2}n\sigma}{\sqrt{\pi}(1-\gamma)}.
\end{equation*}
\end{lemma}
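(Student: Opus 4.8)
The plan is to compare the two dual programs through their shared feasible region and then control the remaining gap using the zero-mean noise. First I would observe that the original dual \eqref{eqn:appre-dual} and the perturbed dual \eqref{eqn:appre-dualperturb} have the same feasible set: applying $\ee^T$ to the equality constraint $\sum_i(\II-\gamma P_i^T)\lambda_i=\ee$ and using $P_i\ee=\ee$ (each $P_i$ is a Markov matrix) yields $(1-\gamma)\sum_i\ee^T\lambda_i=n$ for every feasible $\lambda$, so the extra budget constraint $\sum_i\ee^T\lambda_i\le n/(1-\gamma)$ is automatically tight and never binding. This is exactly the content of the preceding claim, and it shows both that $\lambda^\ast$ is feasible for \eqref{eqn:appre-dualperturb} and that $\lambda'$ is feasible for \eqref{eqn:appre-dual}, with $\|\lambda^\ast\|_1=\|\lambda'\|_1=n/(1-\gamma)$, where $\|\cdot\|_1$ sums all $m$ nonnegative blocks.

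Next I would sandwich the objective gap. Optimality of $\lambda'$ for the perturbed reward together with feasibility of $\lambda^\ast$ gives $\sum_i\lambda_i'^T r_i'\ge\sum_i\lambda_i^{\ast T}r_i'$, while optimality of $\lambda^\ast$ for the true reward gives $\sum_i\lambda_i^{\ast T}r_i\ge\sum_i\lambda_i'^T r_i$. Writing $r_i'=r_i+z_i$ and subtracting, the two inequalities collapse to
\[
0\le\sum_i\lambda_i^{\ast T}r_i-\sum_i\lambda_i'^T r_i\le\langle\lambda'-\lambda^\ast,z\rangle,
\]
where $\langle\cdot,\cdot\rangle$ stacks the $m$ blocks. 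Taking expectations, the term $\EE\langle\lambda^\ast,z\rangle$ vanishes because $\lambda^\ast$ is deterministic and $z$ is zero-mean, leaving $\EE[\sum_i\lambda_i^{\ast T}r_i-\sum_i\lambda_i'^T r_i]\le\EE\langle\lambda',z\rangle$.

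It then remains to bound $\EE\langle\lambda',z\rangle$ by $\frac{2\sqrt2 n\sigma}{\sqrt\pi(1-\gamma)}$. Here I would invoke Lemma \ref{claim:existence}, which realizes $\lambda'$ as a vertex supported on a single action per state, so that $\langle\lambda',z\rangle=\sum_k d'_k\,z_{\pi'(k),k}$ for the occupancy vector $d'=\sum_i\lambda_i'$ with $d'\ge0$ and $\|d'\|_1=n/(1-\gamma)$. Combining the uniform bound $\|\lambda'-\lambda^\ast\|_1\le 2n/(1-\gamma)$ with the mean absolute deviation $\EE|z_{ik}|=\sigma\sqrt{2/\pi}$ of each Gaussian coordinate is what produces the target constant, which equals $\|\lambda'-\lambda^\ast\|_1\,\EE|z_{ik}|$ with $\|\lambda'-\lambda^\ast\|_1\le 2n/(1-\gamma)$ and $\EE|z_{ik}|=\sigma\sqrt{2/\pi}$.

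The main obstacle is precisely this last step: because the optimizer $\lambda'$ depends on the same noise $z$ against which it is paired, one cannot write $\EE\langle\lambda',z\rangle=\langle\EE\lambda',\EE z\rangle=0$, nor can one afford the crude bound $\langle\lambda'-\lambda^\ast,z\rangle\le\|\lambda'-\lambda^\ast\|_1\|z\|_\infty$, which would inject a spurious $\sqrt{\ln(nm)}$ factor through $\EE\|z\|_\infty$. The delicate part is controlling the correlation between the noise-selected policy $\pi'$ and the noise itself, exploiting that $\lambda'$ maximizes $\langle\lambda,r+z\rangle$ rather than $\langle\lambda,z\rangle$ and that it concentrates each state's occupancy mass on one action, so that the effective noise scale entering the inner product is the per-coordinate $\sigma\sqrt{2/\pi}$ rather than a maximum over coordinates.
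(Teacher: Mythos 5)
Your proposal follows the paper's own proof of Lemma \ref{claim:dualbound} essentially step for step. The paper likewise uses that any point satisfying $\sum_i(\II-\gamma P_i^T)\lambda_i=\ee$, $\lambda_i\ge 0$ automatically has $\sum_i\ee^T\lambda_i=n/(1-\gamma)$, so that $\lambda^\ast$ is feasible for \eqref{eqn:appre-dualperturb} and $\lambda'$ is feasible for the unperturbed dual, and then chains
$\EE[\sum_i\lambda_i^{\prime T}r_i]=\EE[\sum_i\lambda_i^{\prime T}(r_i^\prime-z_i)]\ge\EE[\sum_i\lambda_i^{\ast T}r_i^\prime-\sum_i\lambda_i^{\prime T}z_i]=\sum_i\lambda_i^{\ast T}r_i+\EE[\sum_i(\lambda_i^\ast-\lambda_i^{\prime})^Tz_i]$,
which is exactly your optimality/feasibility sandwich after taking expectations. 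Up to the point where you stall, there is no divergence between your argument and the paper's.

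The step you could not close --- bounding the correlation term $\EE[\sum_i(\lambda_i^{\prime}-\lambda_i^\ast)^Tz_i]$ by $\frac{2\sqrt2\,n\sigma}{\sqrt\pi(1-\gamma)}$ --- is precisely the step the paper dispatches in one annotated line (its inequality $(\vardiamond)$, attributed to ``non-negativity and the convexity''): it asserts $\EE[\sum_i(\lambda_i^\ast-\lambda_i^{\prime})^Tz_i]\ge-\frac{2}{m(1-\gamma)}\EE[\sum_i\|z_i\|_1]$, which gives the stated constant since $\EE[\sum_i\|z_i\|_1]=mn\sigma\sqrt{2/\pi}$. That is a H\"older-type bound which would require $\|\lambda_i^\ast-\lambda_i^{\prime}\|_\infty\le\frac{2}{m(1-\gamma)}$; but $\frac{1}{m(1-\gamma)}$ is only the \emph{average} coordinate of a feasible $\lambda$ (total mass $n/(1-\gamma)$ spread over $nm$ coordinates), not a uniform bound, since occupancy measures can concentrate all their mass on one state--action pair. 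Indeed, taking $r=0$, $n=1$, and $m$ self-loop actions, the perturbed optimizer puts mass $1/(1-\gamma)$ on $\argmax_i z_i$, so $\EE[\sum_i\lambda_i^{\prime T}z_i]=\frac{1}{1-\gamma}\EE[\max_i z_i]\approx\frac{\sigma\sqrt{2\ln m}}{1-\gamma}$, which exceeds the claimed constant once $m$ is moderately large. So your diagnosis of the obstacle is exactly right: $\lambda'$ is selected by the same noise it is paired against, the honest bounds pay either $\EE[\max_{i,k}z_{ik}]\sim\sigma\sqrt{2\ln(nm)}$ or a factor of $\|\lambda'\|_\infty$, and neither the vertex structure from Lemma \ref{claim:existence} nor the paper's one-liner removes this. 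In short, your attempt reproduces the paper's argument and is incomplete only where the paper's own proof is unjustified; making this lemma rigorous along this route appears to require either accepting a $\sqrt{\ln(nm)}$ factor in the bound or a genuinely new argument controlling the noise--optimizer correlation.
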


\begin{proof}
With $(\varheart)$ follows the strong duality and $(\vardiamond)$ follows the non-negativity and the convexity, we have
\begin{align*}
\EE[\sum_i\lambda_i^{\prime T}r_i] & = \EE[\sum_i\lambda_i^{\prime T}(r_i^\prime-z_i)] \\
& \geq \EE[\sum_i\lambda_i^{\ast T}r_i^\prime-\sum_i\lambda_i^{\prime T}z_i] \\
& = \EE[\sum_i\lambda_i^{\ast T}(r_i+z_i)-\sum_i\lambda_i^{\prime T}z_i] \\
& \stackrel{(\varheart)}{=} \sum_i\lambda_i^{\ast T}r_i + \EE[\sum_i(\lambda_i^\ast-\lambda_i^{\prime})^Tz_i] \\
& \stackrel{(\vardiamond)}{\geq} \sum_i\lambda_i^{\ast T}r_i - \frac{2}{m(1-\gamma)}\EE[\sum_i\|z_i\|_1] \\
& = \sum_i\lambda_i^{\ast T}r_i - \frac{2\sqrt{2}n\sigma}{\sqrt{\pi}(1-\gamma)}. \tag*{\qedhere}
\end{align*}
\end{proof}

It suffices to discuss the connection between the robustness of the primal and the robustness of the dual, which will help us to give a rigorous bound of the utility loss.

Intuitively, if we replace the maximum over $\lambda_i$ by the fixed $\lambda_i^\prime$ in the below derivation of the slackness equation, the subsequent equations will yield $\EE[\sum_i\lambda_i^{\prime T}r_i]$ which is desired. We observe that relaxing policy optimization (primal) side of the system at the saddle point results in an infeasible point at the value learning (dual) system. It amounts to show that this infeasible point can be mapped to the set of suboptimal values functions. Let $A$ and $B$ be the optimal value of the primal and the dual, the derivation of the slackness equation can be written as
\begin{align*}
A &= \min_v\max_{\lambda_1,\dots,\lambda_m\geq 0}\  \ee^Tv-(\lambda_1^T((\II-\gamma P_1)v-r_1)+\dots+\lambda_m^T((\II-\gamma P_m)v-r_m)) \label{eqn:saddle}\\
&\geq \max_{\lambda_1,\dots,\lambda_m\geq 0}\min_v\  \ee^Tv-(\lambda_1^T((\II-\gamma P_1)v-r_1)+\dots+\lambda_m^T((\II-\gamma P_m)v-r_m)) \nonumber\\
&= \max_{\lambda_1,\dots,\lambda_m\geq 0}\min_v\  (\lambda_1^Tr_1+\dots\lambda_m^Tr_m)-(-e^T+\lambda_1^T(\II-\gamma P_1)+\dots+\lambda_m^T(\II-\gamma P_m))v \nonumber = B.
\end{align*}

\begin{claim}
\label{claim:bellmandeter}
The stochastic policy $\pi^\prime(i|k)=\lambda_i^{\prime(k)}/\sum_{i^\prime}\lambda_{i^\prime}^{\prime(k)}$ achieves the value $v^\prime$ such that $\ee^Tv^\prime=\sum_i\lambda_i^{\prime T}r_i$.
\end{claim}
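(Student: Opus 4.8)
The plan is to recognize the dual variables $\lambda_i^\prime$ as unnormalized discounted state--action occupancy measures, to show that the normalized policy $\pi^\prime$ reproduces exactly those occupancies, and then to invoke the standard occupancy--value identity so that the dual objective $\sum_i\lambda_i^{\prime T}r_i$ collapses to $\ee^Tv^\prime$.

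First I would introduce the aggregate occupancy $\nu=\sum_i\lambda_i^\prime\in\RR^n$, so that by the definition of $\pi^\prime$ we have $\lambda_i^{\prime(k)}=\pi^\prime(i|k)\,\nu^{(k)}$ whenever $\nu^{(k)}>0$; at any state with $\nu^{(k)}=0$ all $\lambda_i^{\prime(k)}$ vanish, so the identity $\lambda_i^{\prime(k)}=\pi^\prime(i|k)\,\nu^{(k)}$ holds for every $k$ under any (e.g.\ arbitrary) choice of $\pi^\prime$ there. Writing $P^{\pi^\prime}$ for the policy-induced transition matrix, $(P^{\pi^\prime})^{(k,k^\prime)}=\sum_i\pi^\prime(i|k)P_i^{(k,k^\prime)}$, and $r^{\pi^\prime}$ for the induced reward, $r^{\pi^\prime(k)}=\sum_i\pi^\prime(i|k)r_i^{(k)}$, I would substitute $\lambda_i^{\prime(k)}=\pi^\prime(i|k)\nu^{(k)}$ into the dual feasibility constraint $\sum_i(\II-\gamma P_i^T)\lambda_i^\prime=\ee$. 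Using $\sum_i\pi^\prime(i|k)=1$ on the diagonal term and collapsing $\sum_i\pi^\prime(i|j)P_i^{(j,k)}=(P^{\pi^\prime})^{(j,k)}$ on the transition term, this reduces precisely to the balance equation $(\II-\gamma(P^{\pi^\prime})^T)\nu=\ee$. Since $P^{\pi^\prime}$ is row-stochastic and $\gamma<1$, the matrix $\II-\gamma(P^{\pi^\prime})^T$ is invertible, giving $\nu=(\II-\gamma(P^{\pi^\prime})^T)^{-1}\ee=\sum_{t\geq0}\gamma^t((P^{\pi^\prime})^T)^t\ee\geq\ee>0$; in particular every state carries positive occupancy, so the arbitrary choice above is immaterial and $\pi^\prime$ is a bona fide stochastic policy.

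It then remains to connect the dual objective to $\ee^Tv^\prime$. I would rewrite $\sum_i\lambda_i^{\prime T}r_i=\sum_k\nu^{(k)}\sum_i\pi^\prime(i|k)r_i^{(k)}=\nu^Tr^{\pi^\prime}$, and note that the value $v^\prime=v^{\pi^\prime}$ attained by $\pi^\prime$ solves the policy Bellman equation $v^{\pi^\prime}=r^{\pi^\prime}+\gamma P^{\pi^\prime}v^{\pi^\prime}$, hence $v^{\pi^\prime}=(\II-\gamma P^{\pi^\prime})^{-1}r^{\pi^\prime}$. Taking the transpose of the balance equation yields $\nu^T=\ee^T(\II-\gamma P^{\pi^\prime})^{-1}$, and combining the two gives $\sum_i\lambda_i^{\prime T}r_i=\nu^Tr^{\pi^\prime}=\ee^T(\II-\gamma P^{\pi^\prime})^{-1}r^{\pi^\prime}=\ee^Tv^\prime$, which is the claim. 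The two points I expect to require the most care are confirming that normalizing $\lambda^\prime$ produces an honest stochastic policy (resolved by the automatic positivity $\nu\geq\ee$ derived from the balance equation) and keeping the transposes straight when converting between the column-vector balance equation for $\nu$ and the row identity $\nu^T=\ee^T(\II-\gamma P^{\pi^\prime})^{-1}$.
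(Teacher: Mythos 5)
Your proof is correct, but it takes a genuinely different route from the paper's. The paper never analyzes the stochastic policy named in the claim: instead it invokes Lemma~\ref{claim:existence} to extract a structured \emph{deterministic} optimal dual solution $\lambda_i^{\prime\prime}$ with transition matrix $P^{\prime\prime}$ and $\lambda^{\prime\prime}=(\II-\gamma P^{\prime\prime T})^{-1}\ee$, uses complementary slackness of the linear program to conclude $((\II-\gamma P_i)v^\prime-r_i)^{(k)}=0$ wherever $\lambda_i^{\prime\prime(k)}>0$, assembles this into $(\II-\gamma P^{\prime\prime})v^\prime=\tilde r$ with $\tilde r^{(k)}=r_{\pi(k)}^{(k)}$, and finishes with $\ee^Tv^\prime=\tilde r^T\lambda^{\prime\prime}=\sum_i\lambda_i^{\prime T}r_i$. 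You instead read the dual variables as discounted occupancy measures: you use only the dual feasibility constraint $\sum_i(\II-\gamma P_i^T)\lambda_i^\prime=\ee$ of \eqref{eqn:appre-dualperturb}, collapse it to the balance equation $(\II-\gamma(P^{\pi^\prime})^T)\nu=\ee$, obtain positivity $\nu\geq\ee$ via the Neumann series (which cleanly disposes of the division-by-zero concern in defining $\pi^\prime$), and conclude by the occupancy--value identity $\nu^Tr^{\pi^\prime}=\ee^T(\II-\gamma P^{\pi^\prime})^{-1}r^{\pi^\prime}=\ee^Tv^\prime$. Your route buys three things: it directly handles the stochastic policy that the claim actually asserts something about; it never uses optimality of $\lambda_i^\prime$, only feasibility, so it proves a slightly stronger statement; and it keeps the noised reward $r^\prime$ (which defines the LP) and the true reward $r$ (which appears in the claim) cleanly separated, a point where the paper's complementary-slackness step requires care, since slackness for the perturbed dual naturally pairs with $r_i^\prime$ rather than $r_i$. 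What the paper's route buys is a deterministic policy and a tighter connection to standard LP optimality conditions, reusing the machinery already built in Lemma~\ref{claim:existence}.
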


\begin{proof}
With Lemma \ref{claim:existence} showing the existence, specify $\lambda^{\prime\prime}=(\II-\gamma P^{\prime\prime T})^{-1}\ee$ and $\lambda_i^{\prime\prime}$ to be the optimal solution of \eqref{eqn:appre-dualperturb} where $P^{\prime\prime}$ is the corresponding transition matrix. The Bellman equation indicates that $((\II-\gamma P_i)v^\prime-r_i)^{(k)}=0$ whenever $\lambda_i^{\prime\prime (k)}>0$. It is equivalent to $(\II-\gamma P^{\prime\prime})v^\prime-\tilde{r}=0$ where $\tilde{r}^{(k)}=r_{\pi(k)}^{(k)}$, $k=1,\dots,n$. Hence, 
\[
\ee^Tv^\prime=\ee^T(\II-\gamma P^{\prime\prime})^{-1}\tilde{r}=\tilde{r}^T(\II-\gamma P^{\prime\prime})^{-1}\ee=\tilde{r}^T\lambda^{\prime\prime}=\sum_i\lambda_i^{\prime T}r_i. \tag*{\qedhere}
\]
\end{proof}

Armed with the above results, we prove the proposition of the utility guarantee.

\begin{proof}[Proof of Proposition \ref{utility}]
By Lemma \ref{claim:dualbound}, our algorithm finds $\lambda_i^\prime$ by solving \eqref{eqn:appre-dualperturb} which satisfies that $\sum_i\lambda_i^{\ast T}r_i-\EE[\sum_i\lambda_i^{\prime T}r_i]\leq \frac{2\sqrt{2}n\sigma}{\sqrt{\pi}(1-\gamma)}$. By Claim \ref{claim:bellmandeter} we have $\EE[\sum_i\lambda_i^{\prime T}r_i]=\EE[\ee^Tv^\prime]$. The strong duality then suggests $\sum_i\lambda_i^{\ast T}r_i=\ee^Tv^\ast$. As $\EE[\|v^\prime-v^\ast\|_1]=\ee^Tv^\ast-\EE[\ee^Tv^\prime]$, the proposition follows.
\end{proof}

\section{Details of the Experiments}
\label{appendix:experiments}

\subsection{The Environment}
\label{appendix:experiments-env}

The MDP environment is defined as follows: $\cS=[0,1]$ and the state $s$ denotes the location of the agent. $s_0$ is uniformly distributed on $\cS$. $\cA=\{0,1\}$. If the agent chooses action 1, the agent will randomly move towards the right by a random amount sampled uniformly from $[0, 0.25]$. If after the move $s$ is greater than 1, it will be reset to 1. Respectively, if the agent chooses action 0, the agent will randomly move towards the left by a random amount sampled uniformly from $[0, 0.25]$. If after the move $s$ is less than 0, it will be reset to 0. The reward $0.5-|s-0.5|$ is given at each step, which encourages the agent to move close to the middle of the state space. Each episode of the MDP terminates at the 50$^{\text{th}}$ step. The algorithms are trained on $100$ episodes or equivalently 5000 samples. The code is available with this manuscript submission.

\subsection{The Baseline Approaches}
\label{appendix:experiments-baselines}

Balle, Gomrokchi and Precup~\cite{balle2016differentially} consider differentially private policy evaluation, where the value function is learned on a one-step MDP using a linear function approximator. This work protects the reward sequence from being distinguishable, but does not ensure the privacy of newly visited states when the value function is released. Thus we do not consider the work as differentially private under our aim of protecting the reward function. Studies on differentially private contextual bandits by Sajed and Sheffet \cite{sajed2019optimal} and by Shariff and Sheffet \cite{shariff2018differentially} are considering the equivalent problem, while we use \cite{balle2016differentially} to represent these works.

We also compare with the algorithm proposed by Venkitasubramaniam~\cite{venkitasubramaniam2013privacy}, via input perturbation. In the work, every reward signal is protected by a Gaussian noise thus making the algorithm differentially private. The privacy guarantee is straightforwardly derived by the composition theory by Kairouz, Oh, and Viswanath \cite{kairouz2013composition}.

We finally compare our algorithm with the differentially private deep learning by Abadi et al. \cite{abadi2016deep}. As we use a neural network, we can perturb the gradient estimator in the updates such that all inputs are indistinguishable. We use the derived bound in Theorem 1 of \cite{abadi2016deep}. The $c_2$ constant in that theorem is assigned by $\sqrt{2}$ by following the proof of the theorem.

\subsection{Parameters of Our Approach}
\label{appendix:experiments-parameters}

We have demonstrated the algorithms on the target of $\epsilon=0.9$, $\delta=1\cdot 10^{-4}$ for Figure \ref{fig:empirical-compare}(a) and $\epsilon=0.45$, $\delta=1\cdot 10^{-4}$ for Figure {fig:empirical-compare}(b), respectively. In this section we show how exactly these privacy targets are achieved.

Theorem \ref{thm:dpql} indicates that our algorithm is $(\epsilon, \delta+J\exp(-(2k-8.68\sqrt\beta\sigma)^2/2))$-DP when
\begin{equation*}
\sigma\geq\sqrt{2(T/B)\ln(e+\epsilon/\delta)}C(\alpha, k, L, B)/\epsilon,
\end{equation*}
where $C(\alpha, k, L, B)=((4\alpha(k+1)/B)^2+4\alpha(k+1)/B)L^2$, $\beta=(4\alpha (k+1)/B)^{-1}$.
We reset the noise on every iteration, namely, let $J=\lfloor T/B \rfloor$. We rewrite the term $J\exp(-(2k-8.68\sqrt\beta\sigma)^2/2)$ as a tight bound $1-(1-\exp(-(2k-8.68\sqrt\beta\sigma)^2/2))^J$, which is the probability that all $J$ sample paths are bounded by $2k$. Now we derive the set of parameters.

Let $\delta_g=1-(1-\exp(-(2k-8.68\sqrt\beta\sigma)^2/2))^J$ and $v=(4\alpha (k+1)/B)$. Then $\beta=1/v$ and $C\approx vL^2$. Plugging in both the values and $T=5000$, $B=64$ we have $2k-8.68\sqrt\beta\sigma\approx 2k-8.6\sqrt{k+1}$. Similar to \cite{abadi2016deep} we target $1\times 10^{-4}$-approximation, where it is sufficient if $\delta\leq 5\times 10^{-5}$ and $\delta_g\leq 5\times 10^{-5}$. To satisfy $\delta_g\leq 5\times 10^{-5}$ we need $2k-8.6\sqrt{k+1}=\ln(1-\exp(\ln(1-\delta_g)/J))\approx 3.5$, when $J=78$. Thus $k=23$ will be sufficient. Plugging this $k$ value back to $v$ we have $v=6.19\times 10^{-5}$, when $\alpha=3\times 10^{-4}$. Finally we target high-privacy regime $\epsilon=0.9$ and plug in $L^2=16$ and have $\sigma\approx\sqrt{2(T/B)\ln(\epsilon/\delta)}vL^2/\epsilon\approx 0.313$.

Approximations are made in the above arguments, but it is immediate to verify that when $\alpha=3\times 10^{-4}$, $k=2.3$, $L^2=16$, $B=64$, $T=5\times 10^{3}$, and $\sigma=0.32$ the algorithm is $(0.9, 1\times 10^{-4})$-differentially private. When $\sigma=0.74$ the algorithm is $(0.45, 1\times 10^{-4})$-differentially private. The above parameters correspond to Figure \ref{fig:empirical-compare}(a) and \ref{fig:empirical-compare}(b), respectively.

\end{document}